\documentclass[12pt,twoside,a4paper]{article}
\usepackage{mathrsfs}
\usepackage[utf8]{inputenc}
\usepackage[T1]{fontenc}
\usepackage{amsmath,amssymb,amsthm,mathtools}
\usepackage{enumitem}
\usepackage{setspace}
\usepackage[margin=1.3in]{geometry}
\usepackage{xcolor}
\usepackage{hyperref}
\usepackage{cleveref}
\usepackage{lmodern} 
\usepackage{eucal} 

\usepackage{microtype}
\usepackage{booktabs}
\usepackage{thmtools}

\newcommand{\mc}{\mathcal}

\definecolor{spring}{RGB}{174, 83, 139}

\hypersetup{colorlinks, breaklinks,
linkcolor={[RGB]{174,83,139}},
citecolor={[RGB]{97,62,173}},
urlcolor={[RGB]{115,135,195}}}
\theoremstyle{definition}
\newtheorem{definition}{Definition}[section]
\newtheorem{example}[definition]{Example}

\theoremstyle{plain}
\newtheorem{theorem}[definition]{Theorem}
\newtheorem{lemma}[definition]{Lemma}
\newtheorem{proposition}[definition]{Proposition}
\newtheorem{corollary}[definition]{Corollary}

\newtheorem{assumption}[definition]{Assumption}

\theoremstyle{remark}
\newtheorem{remark}[definition]{Remark}

\newcommand{\E}{\mathbb{E}}
\newcommand{\EE}{\mathbb{E}}

\newcommand{\Fcal}{\mathcal{F}}

\newcommand{\Acal}{\mathcal{A}}
\newcommand{\Ecal}{\mathcal{E}}
\DeclareMathOperator{\cl}{cl}

\usepackage{pifont}
\usepackage{fancyhdr}
\definecolor{mblue}{RGB}{60, 88 148}
\definecolor{andreagreen}{RGB}{204,102, 0}

\newcommand{\mind}{\mathfrak{m}}

\usepackage{dsfont}
\usepackage[authoryear]{natbib}
\usepackage{etoc}

\usepackage{wrapfig}

\usepackage[most]{tcolorbox}

\newtcolorbox{examplebox}{
  colback=mblue!4,
  colframe=black,
  boxrule=0.6pt,
  arc=3pt,
  left=6pt,
  right=6pt,
  top=6pt,
  bottom=6pt
}
\usepackage{tikz}
\usetikzlibrary{arrows.meta}
\usetikzlibrary{shapes}
\usetikzlibrary{shapes.geometric} 
\usetikzlibrary{positioning}
\newcommand{\tgt}{\mathrm{tgt}}

\newcommand{\dperp}{\perp\!\!\!\perp}

\begin{document}
\addtocontents{toc}{\protect\setcounter{tocdepth}{-5}}
\title{\vspace{-1cm}
\Huge \bfseries
A Mathematical Theory of Understanding}
\author{{\large{Bahar Ta\c{s}kesen}}\\ \textit{\normalsize University of Chicago, Booth School of Business}}
\maketitle

\begin{abstract}
Generative AI has transformed the economics of information production, making explanations, proofs, examples, and analyses available at very low cost. Yet the value of information still depends on whether downstream users can absorb and act on it. A signal conveys meaning only to a learner with the structural capacity to decode it: an explanation that clarifies a concept for one user may be indistinguishable from noise to another who lacks the relevant prerequisites. This paper develops a mathematical model of that learner-side bottleneck.
We model the learner as a \textit{mind}, an abstract learning system characterized by a prerequisite structure over concepts. A mind may represent a human learner, an artificial learner such as a neural network, or any agent whose ability to interpret signals depends on previously acquired concepts. Teaching is modeled as sequential communication with a latent target. Because instructional signals are usable only when the learner has acquired the prerequisites needed to parse them, the effective communication channel depends on the learner's current state of knowledge and becomes more informative as learning progresses. The model yields two limits on the speed of learning and adoption: a structural limit determined by prerequisite reachability and an epistemic limit determined by uncertainty about the target.
The framework implies threshold effects in training and capability acquisition. When the teaching horizon lies below the prerequisite depth of the target, additional instruction cannot produce successful completion of teaching; once that depth is reached, completion becomes feasible. This generates non-concave returns to training effort and implies that spreading scarce instructional resources evenly can yield lower output than concentrating them on fewer workers or users. Across heterogeneous learners, a common broadcast curriculum can be slower than personalized instruction by a factor linear in the number of learner types.
\end{abstract}

\section{Introduction}\label{sec:intro}
The value of a piece of information depends on the existence of a mind that can decode it. This is evident in ordinary learning: an explanation means nothing to a listener who lacks the background to follow it, and a lecture conveys nothing to a student who cannot parse its content. Information that cannot be absorbed by the intended learner is, in a precise sense, noise. Understanding therefore cannot be reduced to the accumulation of information alone. Whether a signal carries usable information is not an intrinsic property of the signal itself, but a relation between the signal and the conceptual structure of the mind that receives it.

Over the past century, the cost of producing and distributing information has fallen by orders of magnitude, from printed encyclopedias to digital repositories to, most recently, generative AI systems capable of producing explanations, proofs, and worked examples on demand. As the supply of machine-generated information expands, the bottleneck shifts from production to absorption: the ability of downstream users to parse, interpret, and act on what is produced. Whether a signal carries usable information depends on the learner's ability to decode it. An explanation that conveys meaning to one user may be indistinguishable from noise to another who lacks the relevant prerequisites.

This paper develops a mathematical model of that learner-side bottleneck through a formal theory of understanding. We do not attempt to model every feature of cognition, such as analogy, abstraction, forgetting, or semantic interpretation. Instead, we ask a narrower structural question: given a learner with a fixed prerequisite architecture, which concepts are understandable in principle, through which intermediate states can the learner move, and what limits the speed at which instruction can bring the learner to a target?

Our starting point is a formal model of a \emph{mind}. We use the term {mind} for a learning system whose ability to interpret new signals depends on what has already been acquired. The same formal object can represent a human learner, an artificial learner such as a neural network, or any agent whose decoding power is shaped by prerequisite structure. Formally, a mind consists of a concept space together with an axiom set and a family of finitary expansion rules specifying which concepts become accessible once their prerequisites have been acquired. These rules induce an understanding horizon, describing what is reachable in principle from the axioms, and a family of reachable acquired concept sets, describing the intermediate states through which a learner can progress by successive prerequisite-respecting steps. Under a finite-horizon assumption, we show that this reachable family forms a learning space above the axiom set, equivalently an antimatroid, and conversely that every such structure admits a representation by an appropriate mind.

To study the operational consequences of this structure, we model teaching as sequential communication with a latent target concept. The teacher knows the realized target but the learner does not. Instructional signals are filtered through a prerequisite-gated parser induced by the mind: a signal is usable only when its target concept is currently ordered for the learner, and otherwise collapses to a common null observation. The effective learner-side channel is therefore not fixed in advance. It depends on the learner's current knowledge state and changes as instruction proceeds. The same raw broadcast may convey usable information to one learner while collapsing to noise for another. We call this phenomenon the \emph{relativity of randomness}.

This state dependence creates two distinct obstacles to fast teaching. The first is \emph{structural}: before a target can be acquired, the learner must move through prerequisite-respecting states until the target becomes currently parseable. The second is \emph{epistemic}: the learner must infer which target the teacher intends. Our main quantitative result combines these two bottlenecks into a general lower bound on teaching time. Expected completion time must clear both a structural barrier, determined by the shortest valid route to the target, and an epistemic barrier, determined by the cumulative usable information that can pass through the learner-side channel. The structural barrier can be dominant, but the information-theoretic layer remains essential for characterizing when and how instruction becomes usable. In our model, once the prerequisite structure makes the target reachable, one additional signal is enough for identification.

This framework leads to several consequences. Acquiring prerequisites does more than add concepts: in the sense of Blackwell, it refines the statistical experiment through which later instruction about the target is interpreted. This structural change has operational implications for teaching. For deterministic targets, fixed-horizon teaching problems exhibit discontinuous structural thresholds: completion probability jumps from zero to one when the teaching horizon reaches the structural distance to the target concept, implying non-concave returns to instructional time and simple failures of uniform resource allocation. The same structural logic also shapes multi-learner settings. Across heterogeneous learners, teaching with a common broadcast curriculum can be strictly slower than personalized instruction by a factor linear in the number of learner types.

\paragraph{Related literature.}
The paper draws on several literatures but differs from each in a specific
way. At a broad level, our question is how the structure of a learner limits
the usable flow of information. This connects combinatorial models of learning,
information theory, teaching and machine learning, and models of skill
formation, but the present framework combines these ingredients in a way that
is specific to prerequisite-gated understanding.

The combinatorial study of feasible learning states originates with knowledge
space theory
\citep{ref:doignon1999knowledge,ref:doignon2015knowledge},
where the family of feasible states is taken as a primitive. Independently,
\citet{korte1991greedoids} arrived at the same mathematical
structure, antimatroids, from the perspective of combinatorial optimization.
We recover this structure from a different starting point: a generative model
of a mind specified by axioms and finitary expansion rules. The equivalence
(\Cref{thm:representation-learning-space}) shows that the two viewpoints are
formally interchangeable, but the generative formulation connects the
combinatorial structure directly to closure, derivability, and the teaching
bounds developed in the paper.

Shannon's information theory \citep{shannon1948} studies channels whose
input-output relationship is fixed. In our framework, by contrast, the
learner's parsing map induces an effective channel whose output alphabet
depends on the learner's current acquired state. Blackwell's comparison of
experiments \citep{ref:blackwell1951comparison,ref:blackwell1} provides
the natural language for this dependence: we show that the parsed experiment
induced by a larger acquired state Blackwell-dominates the one induced by a
smaller state.

In computational learning theory, \citet{goldman1995complexity} introduce
teaching dimension as a combinatorial measure of how many labeled examples
suffice to identify a target concept within a learner class. Our setting is
different: the main constraint is not only identification, but whether the
learner can parse target-relevant signals at all given its current
prerequisites. More recent work in machine teaching studies settings in which a
single teacher must instruct multiple
heterogeneous learners with a common teaching sequence; for example,
\citet{zhu2017nolearnerleftbehind} show that common teaching can be strictly
harder than individualized teaching, and \citet{zhu2018overview} survey the
broader landscape. Our broadcast impossibility result
(\Cref{thm:impossibility}) differs in the source of the penalty: it is driven
by prerequisite-gated decodability and the geometry of reachable acquired
states, rather than by differences in algorithmic update rules across learners.
The term \emph{curriculum} also appears in machine learning, where it typically
refers to the ordering of training examples from easy to hard
\citep{bengio2009curriculum}. There the object being shaped is the optimization
trajectory of a parametric model; here it is the sequence of
prerequisite-respecting states through which a structured learner can move.

The threshold and allocation results in \Cref{sec:structural-limits} are also related in spirit to models of human-capital accumulation \citep{ref:becker1964,ref:benporath1967,cunha2007technology}. Those models study how current investment affects future skill formation, often through complementarity across stages. Our mechanism is different. In our framework, missing prerequisites create structural thresholds: below the relevant structural depth, completion is impossible regardless of strategy, whereas beyond that threshold completion becomes feasible. The resulting non-smoothness comes from prerequisite-gated decodability rather than from an exogenous production technology.
The state-dependent information constraint
also connects to rational inattention \citep{sims2003}: both frameworks study
limits on usable information, but in rational inattention the bottleneck is
imposed through an explicit information cost, whereas here it arises
endogenously from the prerequisite structure of the mind.

Finally, the observation that absorptive capacity limits the value of information connects naturally to emerging work on the economic implications of AI-generated content. As generative models reduce the cost of producing explanations, examples, and analyses, the central question becomes who can make use of the resulting output. In our framework, this bottleneck arises from the prerequisite structure of the learner, which determines which generated signals carry usable information and which collapse to noise.

\medskip
\paragraph{Notation and conventions.}
We write $\Delta(\Omega)$ for the set of probability distributions on a finite or countable set $\Omega$. Unless stated otherwise, all logarithms are taken to base~$2$; accordingly, entropy and mutual information are measured in bits. For a set $\mathcal S$, we denote its cardinality by $|\mathcal S|$ and its power set by $2^{\mathcal S}$. Finally, $\delta_x$ denotes the point mass at $x \in \mathcal S$.

\section{Understanding as a Closure System}\label{sec:axioms}

What does it mean for a learner to ``understand'' something?
A child who knows addition can follow a multiplication lesson built on repeated addition; one who lacks addition cannot follow that explanation. The same words carry information for one mind and are noise for another.
Understanding, in this sense, is not an isolated state but a structured dependency: each concept requires certain prerequisites, and those prerequisites may themselves depend on prior knowledge.

To formalize this idea, we introduce a primitive notion of concept and a nonempty \emph{concept space}, whose elements represent the conceptual units under consideration. 
A mind is then specified by two objects: a set of axioms and a set of expansion rules. Axioms are concepts taken as given, requiring no further justification. Each expansion rule states that mastery of a specific finite set of concepts, referred to as its prerequisites, unlocks a new concept.
Different minds may share the same concept space yet differ in their axioms or expansion rules. In that case, the order in which concepts become learnable differs, capturing the familiar observation that individuals with different backgrounds require different learning paths.

Given a mind, the expansion rules induce a closure operator: starting from any set of known concepts, iteratively apply every expansion rule whose prerequisites are satisfied until no new concepts are added. The resulting closure operator satisfies extension, monotonicity, and idempotence, the standard closure axioms. These are not merely formal conveniences. Extension encodes that knowledge is never lost by derivation. Monotonicity encodes that knowing more can only enlarge what is derivable. Idempotence encodes that once all consequences have been drawn, further application changes nothing. Any reasonable notion of logical or conceptual consequence must satisfy these properties.
The closure framework provides the basic structural language in which the
notion of understanding will be formalized in the sections that follow.

We now formalize these ideas using closure operators from order theory.

\begin{definition}[Concept space]\label{def:concept-space}
A \emph{concept space} is a nonempty set $\mc C$ whose elements are \emph{concepts}.
\end{definition}

The concept space $\mc C$ is a modeling primitive: its elements may represent facts (``zebras are animals''), skills (``long division''), propositions (``the fundamental theorem of calculus''), or procedures (``how the simplex method works'') at any level of granularity. The framework is invariant to this choice. The modeler selects $\mc C$ in the same way an economist selects the state space in a decision problem or the type space in a mechanism design model: the choice determines which phenomena the model can express, but the theorems themselves do not depend on the particular interpretation.
The concept space $\mathcal C$ may be finite or countably infinite. When concepts admit
finite descriptions, they can be represented as finite strings over a finite
alphabet, and $\mc C$ can therefore be identified with a subset of that set.

\begin{definition}[Mind]\label{def:mind}
A \emph{mind} over a concept space $\mc C$ is a triple 
$\mind = (\mc C, \Acal_\mind, \mathcal{E}_\mind)$ where:
\begin{enumerate}[label=(\roman*)]
\item $\Acal_\mind \subseteq \mc C$ is a set of \emph{axioms},
\item $\mathcal{E}_\mind \subseteq 2^{\mc C}_{\mathrm{fin}} \times \mc C$ 
is a set of \emph{expansion rules}, where $2^{\mc C}_{\mathrm{fin}}$ denotes 
the collection of finite subsets of $\mc C$.
\end{enumerate}
\end{definition}
The axioms $\Acal_\mind$ are the concepts that the mind $\mind$ understands
\emph{a priori}: they require no prerequisites. Each expansion rule
$(\mc S, c) \in \Ecal_\mind$ states that if all concepts in the finite set
$\mc S$ are currently understood, then the concept $c$ becomes accessible.
The set $\mc S$ is referred to as the \emph{prerequisites} of $c$ under that rule.

{The expansion rules $\mathcal{E}_\mind$ describe the cognitive architecture of
the mind, that is, the wiring that determines what can be derived from what,
rather than propositions explicitly known by the learner. A rule in
$\mathcal E_\mind$ is not assumed to be something the learner can articulate;
instead, it specifies which concepts become accessible once the learner has
mastered the prerequisites. The \emph{teacher}, by contrast, may or may not
know $\mathcal{E}_\mind$. A teacher with full knowledge of the learner's rules
can tailor instruction to the learner's prerequisite structure, whereas a
teacher who is ignorant of the learner's type may have to resort to a common
broadcast and can then pay the price of universality
(\Cref{thm:impossibility}).

We impose one structural restriction on $\mathcal{E}_\mind$: each prerequisite
set is finite. Accordingly, the granularity of $\mc C$, \textit{i.e.}, what
counts as a single concept, should be chosen so that realistic explanations can
be modeled using finitely many prerequisites. Beyond this finitarity
requirement, the level of granularity is a modeling choice.}

\begin{remark}
\label{rmk:scope}
We do not model logical inconsistency or belief revision. Concepts are treated as abstract units, and understanding refers to accessibility under a prerequisite structure rather than to semantic truth. This is a deliberate modeling choice, analogous to Shannon's separation of the engineering problem of communication from the semantic content of messages.
Accordingly, a concept in our framework may represent a true theorem, a useful heuristic, or even a widespread misconception. The theory is invariant to this distinction: the teaching bounds depend only on the dependency structure induced by the prerequisite rules and on the information geometry of the teaching interaction, not on the truth value of the concepts themselves.
\end{remark}

\begin{example}[Two minds learning arithmetic]\label{ex:arithmetic-minds}
Let $\mc C = \{a,b,c,d\}$ with the informal readings 
$a$ = counting, $b$ = addition, $c$ = spatial arrays, $d$ = multiplication.

\emph{Mind~1 (algorithmic learner).}
Axioms $\Acal_1=\{a\}$. The expansion rule set $\mathcal E_1$ consists of 
$\{a\}\Rightarrow b$,
$\{b\}\Rightarrow c$,
$\{b,c\}\Rightarrow d$.
This mind first understands addition from counting, then understands spatial arrays through repeated addition, and finally grasps multiplication once it combines repeated addition with the array representation.

\emph{Mind~2 (visual learner).}
Axioms $\Acal_2=\{a\}$. The expansion rule set $\mathcal E_2$ consists of $
\{a\}\Rightarrow c$, $
\{c\}\Rightarrow b$, 
$\{b,c\}\Rightarrow d$.
This mind first understands spatial arrays from counting objects arranged in space, then understands addition by combining arrays, and finally reaches multiplication through the same rule $\{b,c\}\Rightarrow d$.
\end{example}

Both minds in Example~\ref{ex:arithmetic-minds} share the same concept space and the same axioms, and both can eventually derive all four concepts, but the order in which concepts become available differs. A concept that one mind derives early may come late for the other. This is the formal expression of \emph{relativity of understanding}: individuals with different cognitive architectures can arrive at the same body of knowledge through fundamentally different paths. We will revisit this example throughout the paper.

Example~\ref{ex:arithmetic-minds} is about learning mathematics, but the framework applies to any domain in which understanding has prerequisite structure. The next example illustrates this.

\begin{example}[Two minds learning text editing on a computer]\label{ex:editor}
Let $\mc C=\{t,s,k,e\}$ with the informal readings
$t$ = typing text, 
$s$ = selecting (highlighting) text,
$k$ = keyboard shortcuts,
$e$ = efficient editing.

\emph{Mind~3 (mouse-first).}
The axiom set is $\Acal_3=\{t\}$.
The expansion rule set $\mathcal E_3$ consists of $
\{t\}\Rightarrow s$,
$\{s\}\Rightarrow k$,
$\{s,k\}\Rightarrow e$.
This learner first acquires text selection from typing, then acquires keyboard shortcuts once selection is understood, and finally reaches efficient editing once both selection and shortcuts are available.

\emph{Mind~4 (shortcut-first).}
The axiom set is $\Acal_4=\{t\}$.
The expansion rule set $\mathcal E_4$ consists of $
\{t\}\Rightarrow k$,
$\{k\}\Rightarrow s$,
$\{s,k\}\Rightarrow e$.
This learner first acquires keyboard shortcuts from typing, then acquires selection through shortcut-based interaction, and finally reaches efficient editing once both selection and shortcuts are available.

Both minds share the same concept space and the same axiom set, and both can ultimately reach $e$. However, their prerequisite structures differ: in Mind~3, selection unlocks shortcuts, whereas in Mind~4, shortcuts unlock selection. The final rule $\{s,k\}\Rightarrow e$ is shared, but the paths by which its prerequisites are acquired are different.
\end{example}

The expansion rules admit a combinatorial interpretation.
They form a directed hypergraph \citep{ref:berge1984hypergraphs} in which each rule $(\mc S,c)$ is a
hyperedge from the prerequisite set $\mc S$ to the concept $c$. 

\begin{definition}[One-step expansion]\label{def:one-step}
For a mind $\mind$ and a set $\mc K \subseteq \mc C$ of currently known
concepts, define the \emph{one-step expansion} by $
\Phi_\mind(\mc K)
=
\mc K
\cup
\{c\in\mc C : \exists \mc S\subseteq \mc K \text{ such that } (\mc S,c)\in\mathcal E_\mind\}$.
\end{definition}
For Mind~1 in \Cref{ex:arithmetic-minds}, start from $\mc K=\{a\}$.
The rule $\{a\}\Rightarrow b$ fires, since $\{a\}\subseteq \{a\}$, and therefore $
\Phi_1(\{a\})=\{a,b\}$.
Applying the operator again, the rule $\{b\}\Rightarrow c$ fires, whereas
$\{b,c\}\Rightarrow d$ does not, since $c\notin\{a,b\}$. Thus $
\Phi_1(\{a,b\})=\{a,b,c\}$.
Applying the operator once more, the rule $\{b,c\}\Rightarrow d$ now fires, so $
\Phi_1(\{a,b,c\})=\{a,b,c,d\}$.
A further application produces no new concepts, so $\{a,b,c,d\}$ is a fixed point of $\Phi_1$.

{Note that $\Phi_\mind$ is \emph{extensive}: by definition, the union in \Cref{def:one-step} includes $\mc K$ itself, so $\mc K \subseteq \Phi_\mind(\mc K)$ for every $\mc K \subseteq \mc C $. We use this property freely throughout.}
{Two further properties of $\Phi_\mind$ ensure that repeated application yields a well-defined closure. Monotonicity guarantees that knowledge never shrinks, while preservation of directed unions ensures that no concept appears only “at the limit”: whenever a concept is derivable from the union of an increasing family of knowledge states, it is already derivable at some stage in that family.}

\begin{lemma}[Monotonicity]\label{lem:mono}
If $\mc K \subseteq \mc K'$, then $\Phi_\mind(\mc K) \subseteq \Phi_\mind(\mc K')$.
\end{lemma}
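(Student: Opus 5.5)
The plan is a direct element-chase using \Cref{def:one-step}, which writes $\Phi_\mind(\mc K)$ as the union of two parts: the set $\mc K$ itself, and the set of concepts $c$ for which some rule $(\mc S,c)\in\mathcal E_\mind$ has $\mc S\subseteq \mc K$. Assume $\mc K\subseteq \mc K'$ and fix an arbitrary $c\in\Phi_\mind(\mc K)$. I would split into two cases according to which part of the union contains $c$, and in each case show $c\in\Phi_\mind(\mc K')$.

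\emph{Case 1: $c\in\mc K$.} Since $\mc K\subseteq\mc K'$, we get $c\in\mc K'$. By extensivity of $\Phi_\mind$ (noted right after \Cref{def:one-step}), $\mc K'\subseteq\Phi_\mind(\mc K')$, so $c\in\Phi_\mind(\mc K')$.

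\emph{Case 2: some rule fires for $c$ on $\mc K$.} Then there is a prerequisite set $\mc S$ with $(\mc S,c)\in\mathcal E_\mind$ and $\mc S\subseteq\mc K$. By transitivity of inclusion, $\mc S\subseteq\mc K'$. The same rule $(\mc S,c)$ therefore witnesses that $c$ lies in the second part of the union defining $\Phi_\mind(\mc K')$.

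In both cases $c\in\Phi_\mind(\mc K')$, and since $c$ was arbitrary, $\Phi_\mind(\mc K)\subseteq\Phi_\mind(\mc K')$.

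There is no genuine obstacle. The only point to keep in view is that a rule's firing condition, ``$\mc S\subseteq$ the current state'', is upward closed in the state. Finiteness of $\mc S$ plays no role here; it becomes relevant only for the directed-union property mentioned in the text.
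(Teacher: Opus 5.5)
Your proof is correct and follows the paper's argument essentially verbatim. It uses the same two-case element chase: extensivity of $\Phi_\mind$ when $c\in\mc K$, and otherwise the observation that $\mc S\subseteq\mc K\subseteq\mc K'$ lets the same rule $(\mc S,c)$ fire at $\mc K'$.
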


\begin{definition}\label{def:closure}
For a mind $\mind$ and a set $\mc K \subseteq \mc C $, the \emph{understanding closure} of $\mc K$ is the smallest fixed point of $\Phi_\mind$ containing $\mc K$: $
\cl_\mind(\mc K) = \bigcap \{\mc F \subseteq \mc C  :\mc  K \subseteq \mc F \text{ and } \Phi_\mind(\mc F) = \mc F\}$.
The \emph{understanding {horizon}} of mind $\mind$ is $\mc U_\mind = \cl_\mind(\Acal_\mind)$.
\end{definition}
For a set map $\Phi_\mind:2^{\mc C }\to 2^{\mc C }$, a subset $\mc F\subseteq \mc C $ is a \emph{fixed point} if $\Phi_\mind(\mc F)=\mc F$. Fixed points are partially ordered by set inclusion. Given $\mc K\subseteq \mc C $, a fixed point $\mc F^\star$ is the \emph{least fixed point containing $\mc K$} if $\mc K\subseteq \mc F^\star$ and $\mc F^\star\subseteq \mc F$ for every fixed point $\mc F$ with $\mc K\subseteq \mc F$. In our setting, $\cl_\mind(\mc K)$ is defined as the intersection of all fixed points containing $\mc K$, hence it is precisely the least knowledge state that contains $\mc K$ and is closed under all expansion rules of mind $\mind$. In particular, the understanding horizon $\mc U_\mind=\cl_\mind(\Acal_\mind)$ is the least fixed point containing the axiom set $\Acal_\mind$, and therefore represents the set of all concepts \emph{potentially accessible} to $\mind$, that is, the theoretical horizon reachable from its axioms under its expansion rules.

{\begin{remark}\label{rmk:fixed-architecture}
It is natural to ask whether a teacher can impart new expansion rules, or how a concept unteachable to a toddler (\textit{e.g.}, abstract algebra) eventually becomes learnable years later. In our framework, methods and techniques that informally feel like ``rules'', such as the chain rule in calculus or \emph{modus ponens} in logic, are modeled as \emph{concepts} $c \in \mc C $. They are things a learner can be taught. Once mastered, they serve as prerequisites that unlock downstream concepts via the mind's existing expansion rules.
The expansion rules $\mathcal{E}_\mind$ and axioms $\Acal_\mind$ themselves are not teachable: they represent the learner's fixed cognitive architecture, sensory baseline, or developmental stage over the timescale of a teaching interaction. A concept is strictly unteachable ($c \notin \mc U_\mind$) when this architecture cannot bridge the gap from the axioms. If a learner eventually grasps a concept that was structurally inaccessible to their earlier self, we model this not as a teaching event, but as \emph{cognitive development}: a transition into a new mind $\mind'$ with richer axioms $\Acal_{\mind'}$, richer expansion rules $\mathcal{E}_{\mind'}$, or both. Our theory bounds the fundamental limits of \emph{teaching} a fixed architecture; the long-term \emph{development} of the architecture itself is a separate process.
\end{remark}}

\begin{proposition}[Existence and characterization]\label{prop:closure-char}
For any mind $\mind$ and any $\mc K \subseteq \mc C $:
\begin{enumerate}[nosep, label=\textnormal{(\roman*)}]
\item $\cl_\mind(\mc K)$ exists and is a fixed point of $\Phi_\mind$.
\item $\cl_\mind(\mc K) = \bigcup_{n=0}^{\infty} \Phi_\mind^n(\mc K)$, where $\Phi_\mind^0(\mc K) = \mc K$ and $\Phi_\mind^{n+1}(\mc K) = \Phi_\mind(\Phi_\mind^n(\mc K))$.
\item If $\mc C $ is finite, then $\cl_\mind(\mc K) = \Phi_\mind^N(\mc K)$ for some $N \leq |\mc C |$.
\end{enumerate}
\end{proposition}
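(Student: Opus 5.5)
The plan is to work with the explicit union $\mc K_\infty := \bigcup_{n\ge 0}\Phi_\mind^n(\mc K)$. First I show it is a fixed point of $\Phi_\mind$ containing $\mc K$. Then I show it is contained in every fixed point containing $\mc K$. Together these give (i) and (ii) at once: $\mc K_\infty$ is then the least fixed point containing $\mc K$, hence equals the intersection in \Cref{def:closure}. The intersection is over a nonempty family, since $\mc C$ itself is a fixed point: $\Phi_\mind(\mc C)\subseteq\mc C$ trivially, and extensivity gives the reverse inclusion.

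For the fixed-point property, extensivity gives $\Phi_\mind^n(\mc K)\subseteq\Phi_\mind^{n+1}(\mc K)$, so the iterates form an increasing chain. Extensivity also gives $\mc K_\infty\subseteq\Phi_\mind(\mc K_\infty)$. The reverse inclusion is the one step that needs finitarity, and I expect it to be the main point of care. Take $c\in\Phi_\mind(\mc K_\infty)\setminus\mc K_\infty$. Then there is a rule $(\mc S,c)\in\mathcal E_\mind$ with $\mc S\subseteq\mc K_\infty$. Since $\mc S$ is finite and the chain is increasing, each $s\in\mc S$ lies in some $\Phi_\mind^{n_s}(\mc K)$, so $\mc S\subseteq\Phi_\mind^{N}(\mc K)$ for $N=\max_s n_s$ (take $N=0$ if $\mc S=\emptyset$). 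Hence $c\in\Phi_\mind^{N+1}(\mc K)\subseteq\mc K_\infty$. This is exactly the preservation of directed unions mentioned before \Cref{lem:mono}.

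For minimality, let $\mc F$ be any fixed point with $\mc K\subseteq\mc F$. By induction using \Cref{lem:mono}, $\Phi_\mind^n(\mc K)\subseteq\Phi_\mind^n(\mc F)=\mc F$ for all $n$, so $\mc K_\infty\subseteq\mc F$. Thus $\mc K_\infty$ lies in the family being intersected and is contained in every member of it, so $\cl_\mind(\mc K)=\mc K_\infty$ and it is a fixed point.

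For (iii), the chain $\Phi_\mind^0(\mc K)\subseteq\Phi_\mind^1(\mc K)\subseteq\cdots$ consists of subsets of the finite set $\mc C$. While it is strictly increasing, each step adds at least one element, so strict increase can happen at most $|\mc C|-|\mc K|\le|\mc C|$ times. Therefore there is some $N\le|\mc C|$ with $\Phi_\mind^{N+1}(\mc K)=\Phi_\mind^N(\mc K)$. By induction the chain is constant from $N$ onward, so the union in (ii) equals $\Phi_\mind^N(\mc K)$.
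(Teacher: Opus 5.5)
Your proof is correct, but it organizes (i) and (ii) differently from the paper.

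The paper proves (i) first, by a Knaster--Tarski-style argument. Since $\cl_\mind(\mc K)\subseteq\mc F$ for every fixed point $\mc F\supseteq\mc K$, monotonicity gives $\Phi_\mind(\cl_\mind(\mc K))\subseteq\Phi_\mind(\mc F)=\mc F$. Hence $\Phi_\mind(\cl_\mind(\mc K))\subseteq\cl_\mind(\mc K)$, and extensivity gives equality. Only then does the paper prove (ii). It uses directed-union continuity (\Cref{lem:finitary}) to show that the union of iterates is a fixed point. It then runs an induction against the already-established fixed point $\cl_\mind(\mc K)$ to show that every iterate lies inside it.

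You instead show directly that the union of iterates is the least fixed point above $\mc K$, running the minimality induction against an arbitrary fixed point $\mc F$. You then obtain (i) as a by-product. Your route is more economical: one argument yields both parts, and you never need to check separately that the intersection of fixed points is itself a fixed point.

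The paper's route buys generality for (i). Its existence argument uses only monotonicity and extensivity, so it would survive if prerequisite sets were allowed to be infinite. In that setting the union of the iterates can fail to be a fixed point, and your derivation of (i) would break down. Since finiteness of prerequisite sets is part of the definition of a mind, both arguments are complete here.

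Your part (iii) matches the paper's. You additionally make explicit that the chain stays constant after its first repetition, which the paper leaves implicit.
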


The existence of a least fixed point in \Cref{prop:closure-char} follows from the Knaster-Tarski fixed point theorem \citep{tarski}; see also \citep{aliprantis} for a textbook treatment. We give a self-contained proof for completeness in \Cref{app:proofs}. 
\begin{proposition}[Axiomatic characterization of understanding]\label{prop:U-unique}
For a given mind $\mind$, the set $\mc U_\mind=\cl_\mind(\Acal_\mind)$ is the unique set $\mc U \subseteq \mc C$ satisfying:
\begin{enumerate}[nosep, label=\textnormal{(\roman*)}]
\item \emph{Axioms are understood:} $\Acal_\mind \subseteq \mc U$.
\item \emph{Closure under expansion:} if $(\mc S,c)\in\mathcal E_\mind$ and $\mc S\subseteq \mc U$, then $c\in \mc U$.
\item \emph{Minimality:} $\mc U$ is the smallest set satisfying \emph{(i)} and \emph{(ii)}.
\end{enumerate}
\end{proposition}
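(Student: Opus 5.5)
The plan is to show that the family of sets satisfying (i) and (ii) has a least element, and that this least element is $\mc U_\mind$. The key link is that a set satisfies (ii) exactly when it is a fixed point of $\Phi_\mind$.

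First, I would establish that equivalence. Suppose $\mc F$ satisfies (ii). If $c\in\Phi_\mind(\mc F)\setminus\mc F$, then some rule $(\mc S,c)\in\mathcal E_\mind$ has $\mc S\subseteq\mc F$, so (ii) gives $c\in\mc F$, a contradiction. Hence $\Phi_\mind(\mc F)\subseteq\mc F$, and extensivity of $\Phi_\mind$ gives equality. Conversely, suppose $\Phi_\mind(\mc F)=\mc F$, $(\mc S,c)\in\mathcal E_\mind$ and $\mc S\subseteq\mc F$. Then $c\in\Phi_\mind(\mc F)=\mc F$ by \Cref{def:one-step}. It follows that the sets satisfying (i) and (ii) are precisely the fixed points of $\Phi_\mind$ that contain $\Acal_\mind$.

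Second, I would invoke \Cref{prop:closure-char}(i). It says $\mc U_\mind=\cl_\mind(\Acal_\mind)$ exists and is a fixed point of $\Phi_\mind$. It contains $\Acal_\mind$, being the intersection of fixed points each containing $\Acal_\mind$. So $\mc U_\mind$ satisfies (i) and (ii). Moreover, by \Cref{def:closure}, $\mc U_\mind$ is contained in every fixed point containing $\Acal_\mind$, and by the first step these are exactly the sets satisfying (i) and (ii). Therefore $\mc U_\mind$ is the smallest such set, which is (iii).

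Uniqueness is then immediate. If $\mc U$ satisfies (i)–(iii), then both $\mc U$ and $\mc U_\mind$ satisfy (i) and (ii). Minimality of each gives $\mc U\subseteq\mc U_\mind$ and $\mc U_\mind\subseteq\mc U$, so the two sets are equal.

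I do not expect a real obstacle. The only subtle point is reading ``smallest'' as least with respect to inclusion, not minimal. Least is what follows, because $\mc U_\mind$ lies inside every admissible set, and this is what makes the uniqueness argument work.
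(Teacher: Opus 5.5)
Your proof is correct and follows essentially the same route as the paper. Both arguments turn (ii) into the fixed-point condition $\Phi_\mind(\mc F)=\mc F$ via extensivity, combine \Cref{prop:closure-char}(i) with the intersection definition of $\cl_\mind$ to get (ii) and (iii), and obtain uniqueness from mutual minimality; the only cosmetic difference is that the paper derives $\Acal_\mind\subseteq\mc U_\mind$ from the iterate formula in \Cref{prop:closure-char}(ii), whereas you read it directly off the intersection.
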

Property {(i)} of \Cref{prop:U-unique} ensures that the axioms belong to $\mc U_\mind$. 
Property {(ii)} enforces closure under expansion: whenever all prerequisites of a concept are already in the set, the concept itself must also belong to the set. 
Many subsets of $\mc C$ satisfy {(i)} and {(ii)}; the entire concept space $\mc C$ is a trivial example. Property {(iii)} removes this ambiguity by imposing minimality: $\mc U_\mind$ admits no proper subset that both contains the axioms and is closed under the expansion rules. 
Together, the three properties determine $\mc U_\mind$ uniquely. In this sense, understanding is completely determined by the axioms $\Acal_\mind$ and the expansion rules $\mathcal{E}_\mind$, with no additional degrees of freedom.

\subsection{Derivations and Equivalence}\label{sec:derivations}
The closure $\cl_\mind(\mc K)$ tells us \emph{which} concepts are reachable from $\mc K$, but not \emph{how} they are reached. 
A derivation makes the ``how'' explicit: it is a rooted tree whose nodes represent rule applications and base concepts, showing step by step why a concept lies in $\cl_\mind(\mc K)$. 
By \Cref{lem:deriv-finite}, every such tree is finite.

\begin{definition}[Derivation]\label{def:derivation}
A \emph{derivation of concept $c$ from $\mc K\subseteq \mc C$ in mind $\mind$} is a well-founded rooted tree whose nodes are labeled by concepts, satisfying:
\begin{enumerate}[nosep, label=(\roman*)]
\item Every node is either a \emph{base node} or a \emph{rule node}:
\begin{itemize}[nosep]
\item A \emph{base node} is a leaf (no children) labeled by a concept in $\mc K$.
\item A \emph{rule node} is labeled by a concept $c'$ and has children in bijection with a set $\mc S$ such that $(\mc S, c') \in \mathcal{E}_\mind$, with each child labeled by the corresponding element of $\mc S$.
\end{itemize}
\item The root is labeled by $c$. 
\end{enumerate}
We write $\mc K \vdash_\mind c$ if such a derivation exists.
\end{definition}

\begin{example}[Derivation trees for the two minds]\label{ex:arithmetic-derivations}
Continuing \Cref{ex:arithmetic-minds}, consider the derivation of $d$ (multiplication) from the axiom set $\mc K = \{a\}$. \Cref{fig:derivation-trees} shows the derivation trees for both minds. In each tree, the \emph{leaves} (bottom nodes, drawn as squares) are concepts from $\mc K$, which represents the starting knowledge. Each \emph{internal node} (drawn as a circle) is a concept derived by applying one expansion rule to its children (the nodes directly below it). The \emph{root} (top node) is the concept being derived.

Reading each tree bottom-up: Mind~1 derives $a \to b \to c \to d$ (addition before arrays); Mind~2 derives $a \to c \to b \to d$ (arrays before addition). The two derivation trees witness the same conclusion, namely that $
d \in \cl_1(\{a\}) \cap \cl_2(\{a\})$,
but through different intermediate paths. This provides a concrete instance of mind-relativity.
\end{example}

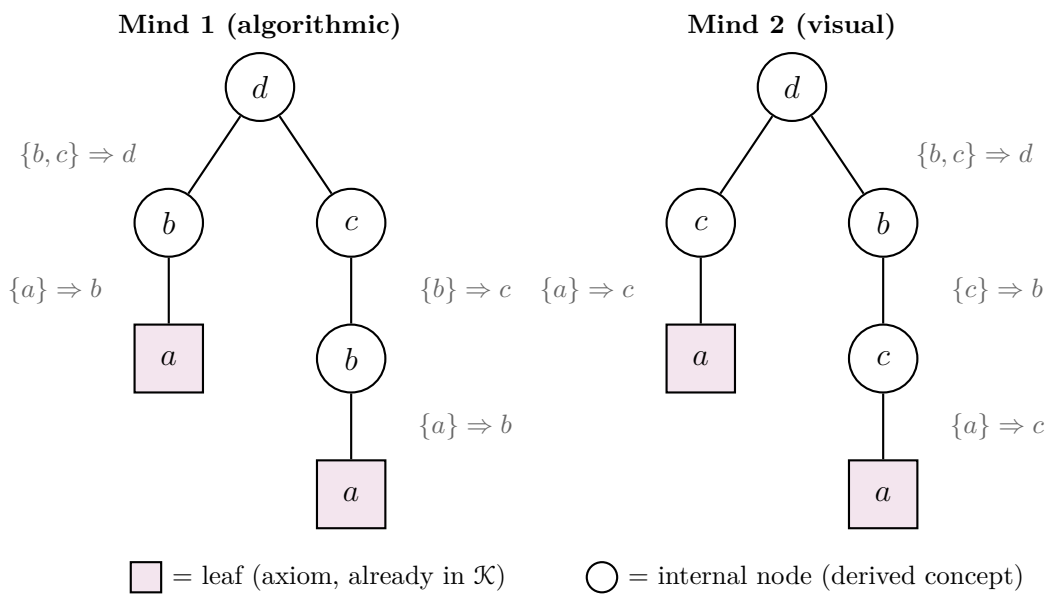
\begin{figure}[ht]
\centering
\begin{tikzpicture}[
    concept/.style={circle, draw, thick, minimum size=9mm, font=\normalsize},
    axiom/.style={rectangle, draw, thick, fill=spring!15, minimum size=9mm, font=\normalsize},
    rulebox/.style={font=\footnotesize, text=black!60},
    every edge/.style={draw, thick},
]
\node[font=\footnotesize \bfseries] at (-4.5, 0.8) {Mind~1 (algorithmic)};
\node[concept]  (d1) at (-4.5, 0)    {$d$};
\node[concept]  (b1) at (-5.7, -1.8) {$b$};
\node[concept]  (c1) at (-3.3, -1.8) {$c$};
\node[axiom]    (a1) at (-5.7, -3.6) {$a$};
\node[concept]  (b1r) at (-3.3, -3.6) {$b$};
\node[axiom]    (a1r) at (-3.3, -5.4) {$a$};
\draw[thick] (b1) -- (d1);
\draw[thick] (c1) -- (d1);
\draw[thick] (a1) -- (b1);
\draw[thick] (b1r) -- (c1);
\draw[thick] (a1r) -- (b1r);
\node[rulebox] at (-6.9, -0.9) {$\{b,c\} \Rightarrow d$};
\node[rulebox] at (-7.2, -2.7) {$\{a\} \Rightarrow b$};
\node[rulebox] at (-1.8, -2.7) {$\{b\} \Rightarrow c$};
\node[rulebox] at (-1.8, -4.5) {$\{a\} \Rightarrow b$};
\begin{scope}[xshift=-2cm]
\node[font=\footnotesize\bfseries] at (4.5, 0.8) {Mind~2 (visual)};
\node[concept]  (d2) at (4.5, 0)    {$d$};
\node[concept]  (c2) at (3.3, -1.8) {$c$};
\node[concept]  (b2) at (5.7, -1.8) {$b$};
\node[axiom]    (a2) at (3.3, -3.6) {$a$};
\node[concept]  (c2r) at (5.7, -3.6) {$c$};
\node[axiom]    (a2r) at (5.7, -5.4) {$a$};
\draw[thick] (c2) -- (d2);
\draw[thick] (b2) -- (d2);
\draw[thick] (a2) -- (c2);
\draw[thick] (c2r) -- (b2);
\draw[thick] (a2r) -- (c2r);
\node[rulebox] at (6.9, -0.9) {$\{b,c\} \Rightarrow d$};
\node[rulebox] at (1.8, -2.7) {$\{a\} \Rightarrow c$};
\node[rulebox] at (7.2, -2.7) {$\{c\} \Rightarrow b$};
\node[rulebox] at (7.2, -4.5) {$\{a\} \Rightarrow c$};
\end{scope}
\node[axiom, minimum size=4mm] at (-6, -6.5) {};
\node[font=\footnotesize, anchor=west] at (-5.8, -6.5) {= leaf (axiom, already in $\mc K$)};
\node[concept, minimum size=4mm] at (0, -6.5) {};
\node[font=\footnotesize, anchor=west] at (0.2, -6.5) {= internal node (derived concept)};
\end{tikzpicture}
\caption{Derivation trees for $d$ (multiplication) from $\mc K = \{a\}$ (counting) in the two minds of \Cref{ex:arithmetic-minds}. Each tree is read bottom-up: leaves are concepts already known; each internal node is derived from its children by the expansion rule shown alongside. The root~$d$ is the concept being derived. {Both trees witness that $d$ belongs to the corresponding understanding closure of $\{a\}$, but through different intermediate paths.}}
\label{fig:derivation-trees}
\end{figure}

{Derivations provide a constructive counterpart to the closure: if a concept belongs to $\cl_\mind(\mc K)$, there must exist a finite chain of rule applications that produces it. The following theorem confirms that the two characterizations are equivalent, that is, nothing belongs to the closure without a derivation, and every derivation stays within the closure.}

\begin{theorem}[Closure-derivability equivalence]\label{thm:closure-deriv}
For any mind $\mind$, any set $\mc K\subseteq\mc C$, and any concept $c\in\mc C$, $
c \in \cl_\mind(\mc K) \Longleftrightarrow \mc K \vdash_\mind c$.
\end{theorem}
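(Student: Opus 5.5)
We prove the two directions separately, using the iterative characterization $\cl_\mind(\mc K)=\bigcup_{n\ge 0}\Phi_\mind^n(\mc K)$ from \Cref{prop:closure-char}(ii) for the forward direction and the fixed-point characterization from \Cref{def:closure} for the reverse direction.

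\emph{($\Rightarrow$).} Here we argue by induction on $n$ that every $c\in\Phi_\mind^n(\mc K)$ admits a derivation from $\mc K$. For $n=0$, $c\in\mc K$, and the single base node labeled $c$ is a derivation. For the inductive step, let $c\in\Phi_\mind^{n+1}(\mc K)=\Phi_\mind(\Phi_\mind^n(\mc K))$. Either $c\in\Phi_\mind^n(\mc K)$, in which case the induction hypothesis applies, or there is a rule $(\mc S,c)\in\mathcal E_\mind$ with $\mc S\subseteq\Phi_\mind^n(\mc K)$. In the latter case, choose for each $s\in\mc S$ a derivation $T_s$ of $s$ from $\mc K$ (a finite choice, since $\mc S$ is finite). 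Form a new root labeled $c$ whose children are the roots of the $T_s$. The result is a rule node over $(\mc S,c)$. The tree is well-founded because each $T_s$ is and only one new level has been added. Since $c\in\cl_\mind(\mc K)$ means $c\in\Phi^n_\mind(\mc K)$ for some $n$, this direction is complete.

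\emph{($\Leftarrow$).} Fix any fixed point $\mc F\supseteq\mc K$ of $\Phi_\mind$. We show that every node label of a derivation lies in $\mc F$, using well-founded (structural) induction on the tree. Base nodes are labeled by elements of $\mc K\subseteq\mc F$. For a rule node labeled $c'$ with rule $(\mc S,c')$, the induction hypothesis gives $\mc S\subseteq\mc F$, so $c'\in\Phi_\mind(\mc F)=\mc F$. Hence the root label $c$ lies in every such $\mc F$, and therefore in their intersection $\cl_\mind(\mc K)$. Alternatively, one could invoke \Cref{lem:deriv-finite} to get finiteness and induct on height, showing that a root at height $h$ lies in $\Phi_\mind^h(\mc K)$ with the help of \Cref{lem:mono}.

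The main obstacle is the legitimacy of induction over derivation trees, which are only assumed well-founded and not a priori finite. I would handle this by appealing to well-founded induction directly: the child relation has no infinite descending chains, so the set of nodes whose label lies outside $\mc F$, if nonempty, has a minimal element, which contradicts the rule-node argument above. The forward direction is routine once \Cref{prop:closure-char}(ii) is available.
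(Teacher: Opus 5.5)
Your proof is correct, but each direction uses a different ingredient from the paper's proof.

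For closure $\Rightarrow$ derivability, the paper never uses the stages $\Phi_\mind^n(\mc K)$. It sets $\mc D=\{d\in\mc C:\mc K\vdash_\mind d\}$ and uses the same grafting step as you to show $\mc K\subseteq\mc D$ and $\Phi_\mind(\mc D)\subseteq\mc D$. It then concludes $\cl_\mind(\mc K)\subseteq\mc D$ because the closure is the least such fixed point. That argument needs only \Cref{def:closure}. Your route instead goes through \Cref{prop:closure-char}(ii), and hence through the directed-union continuity of \Cref{lem:finitary}. In return, if you track heights in your induction, it shows that every element of $\Phi_\mind^n(\mc K)$ has a derivation of height at most $n$.

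For derivability $\Rightarrow$ closure, the paper first proves that derivation trees are finite via K\"onig's lemma (\Cref{lem:deriv-finite}). It then inducts on height and uses that $\cl_\mind(\mc K)$ is itself a fixed point (\Cref{prop:closure-char}(i)). Your well-founded induction into an arbitrary fixed point $\mc F\supseteq\mc K$ avoids both K\"onig's lemma and the fixed-point property of the closure, so it is the more economical argument for this theorem alone. The paper's detour through finiteness is not wasted, though: \Cref{lem:deriv-finite} is reused for finitariness in \Cref{thm:alg-equiv}(i) and for the topological ordering in \Cref{thm:ordering}.

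Rules with an empty prerequisite set, which the paper treats separately, are covered automatically in both of your directions. Grafting yields a childless rule node, and $\varnothing\subseteq\mc F$ holds vacuously.
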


The closure operator $\cl_\mind$ induced by a mind satisfies the usual closure axioms (extension, monotonicity, and idempotence). A closure operator that additionally satisfies a finitary property, namely that membership in the closure depends only on finitely many elements, is called algebraic~(see \Cref{def:alg-closure}).
\begin{theorem}[Algebraic closure equivalence]\label{thm:alg-equiv}
\leavevmode
\begin{enumerate}[nosep, label=\textnormal{(\roman*)}]
\item For any mind $\mind=(\mc C,\Acal_\mind,\mathcal E_\mind)$, the closure operator $
\cl_\mind:2^{\mc C}\to 2^{\mc C}$
is an algebraic closure operator on $\mc C$.

\item Conversely, for any set $\mc X$ and any algebraic closure operator $
f:2^{\mc X}\to 2^{\mc X}$,
there exists a rule set $\mathcal E\subseteq 2^{\mc X}_{\mathrm{fin}}\times \mc X$
such that, writing
\[
\Psi_{\mathcal E}(\mc K)
=
\mc K\cup
\{c\in\mc X:\exists \mc S\subseteq \mc K \text{ such that }(\mc S,c)\in\mathcal E\},
\]
one has, for every $\mc K\subseteq \mc X$,
\[
f(\mc K)
=
\bigcap\bigl\{
\mc F\subseteq \mc X :
\mc K\subseteq \mc F
\text{ and }
\Psi_{\mathcal E}(\mc F)=\mc F
\bigr\}.
\]
\end{enumerate}
\end{theorem}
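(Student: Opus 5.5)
For part (i) I will verify the three closure axioms and then the finitary property. Extension holds because $\cl_\mind(\mc K)$ is an intersection of sets each containing $\mc K$.

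Monotonicity follows because if $\mc K\subseteq\mc K'$, every fixed point containing $\mc K'$ also contains $\mc K$. The intersection defining $\cl_\mind(\mc K)$ therefore ranges over a larger family, so $\cl_\mind(\mc K)\subseteq\cl_\mind(\mc K')$.

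For idempotence, \Cref{prop:closure-char}(i) says $\cl_\mind(\mc K)$ is itself a fixed point. It is then the least fixed point containing itself, so $\cl_\mind(\cl_\mind(\mc K))=\cl_\mind(\mc K)$.

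The algebraic (finitary) condition, presumably $\cl_\mind(\mc K)=\bigcup\{\cl_\mind(\mc K_0):\mc K_0\subseteq\mc K \text{ finite}\}$, is the only substantive step. The inclusion $\supseteq$ is monotonicity. For $\subseteq$, I use \Cref{thm:closure-deriv}: if $c\in\cl_\mind(\mc K)$, there is a derivation of $c$ from $\mc K$, and by \Cref{lem:deriv-finite} this tree is finite. Let $\mc K_0$ be the set of labels of its base nodes; it is a finite subset of $\mc K$. The same tree is then a derivation of $c$ from $\mc K_0$, so $c\in\cl_\mind(\mc K_0)$ by \Cref{thm:closure-deriv} again. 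If one prefers to avoid derivations, one can instead induct on $n$ using \Cref{prop:closure-char}(ii) to show that each element of $\Phi_\mind^n(\mc K)$ lies in $\Phi_\mind^n(\mc K_0)$ for some finite $\mc K_0$. Each firing rule has a finite prerequisite set $\mc S$, and each element of $\mc S$ contributes a finite witness by the induction hypothesis; the finite union of these witnesses serves as $\mc K_0$.

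For part (ii), given an algebraic $f$, I set $\mathcal E=\{(\mc S,c):\mc S\subseteq\mc X\text{ finite},\ c\in f(\mc S)\}$. The key step is to show that the fixed points of $\Psi_{\mathcal E}$ are exactly the $f$-closed sets, i.e.\ the sets with $f(\mc F)=\mc F$. If $\mc F$ is $f$-closed and $(\mc S,c)\in\mathcal E$ with $\mc S\subseteq\mc F$, then $c\in f(\mc S)\subseteq f(\mc F)=\mc F$, so $\Psi_{\mathcal E}(\mc F)=\mc F$. Conversely, suppose $\Psi_{\mathcal E}(\mc F)=\mc F$ and $c\in f(\mc F)$. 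By algebraicity $c\in f(\mc S)$ for some finite $\mc S\subseteq\mc F$. Then $(\mc S,c)\in\mathcal E$ fires, so $c\in\Psi_{\mathcal E}(\mc F)=\mc F$, giving $f(\mc F)\subseteq\mc F$; extension gives the reverse inclusion.

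Finally, $f(\mc K)$ is $f$-closed by idempotence and contains $\mc K$ by extension. Any $f$-closed $\mc F\supseteq\mc K$ satisfies $f(\mc K)\subseteq f(\mc F)=\mc F$ by monotonicity. Hence $f(\mc K)$ is the least $f$-closed set containing $\mc K$, and this least set equals the displayed intersection by the fixed-point characterization above.

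The main obstacle is the finitary step in (i), since it genuinely needs finiteness of derivations or of the stagewise witnesses. I rely on \Cref{lem:deriv-finite} and \Cref{thm:closure-deriv} to handle it cleanly.
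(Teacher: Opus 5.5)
Your proof is correct and follows the paper's argument. In (i) you get the finitary property from the same finite-derivation argument (\Cref{thm:closure-deriv} with \Cref{lem:deriv-finite}), and you check extension, monotonicity and idempotence directly from the intersection definition where the paper simply cites \Cref{prop:closure-char}. In (ii) you use the same canonical rule set; including trivial rules with $c\in\mc S$ is harmless, since such a rule fires only when $c$ is already present.

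The only difference is organizational. You first show that the fixed points of $\Psi_{\mathcal E}$ are exactly the $f$-closed sets, then use that $f(\mc K)$ is the least $f$-closed set containing $\mc K$. The paper instead proves the two inclusions for each $\mc K$ directly, invoking algebraicity at $\mc K$ itself rather than at an arbitrary fixed point.
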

\Cref{thm:alg-equiv} shows that finitary expansion-rule systems and algebraic
closure operators are equivalent ways of describing the same finitary
consequence relation. In particular, every finitary expansion-rule system
induces an algebraic closure operator, and conversely every algebraic closure
operator on a set $\mc X$ admits at least one, generally non-unique,
presentation by finitary expansion rules. Thus the rule-based component of a
mind should be understood not as additional structure beyond closure, but as a
presentation of an algebraic closure operator.

Conceptually, this separates structure from presentation. The expansion rules
describe one particular finite-premise decomposition of the underlying
consequence relation, while the intrinsic object is the algebraic closure
operator itself.

Concretely, let $\mc X$ be a nonempty set, let $f:2^{\mc X}\to 2^{\mc X}$ be an
algebraic closure operator, and let $\Acal\subseteq \mc X$ be a chosen set of
axioms. Choose any rule set $\mathcal E\subseteq 2^{\mc X}_{\mathrm{fin}}\times
\mc X$ whose induced closure operator is $f$, as guaranteed by
\Cref{thm:alg-equiv}(ii). Then $
\mind=(\mc X,\Acal,\mathcal E)$
is a mind whose induced closure operator is $f$, and whose
understanding is $
\mc U_\mind = f(\Acal)$.

Thus specifying a mind amounts to specifying an algebraic closure operator
together with an axiom set, while the rule formalism provides a finite-premise
presentation of that closure structure.

\subsection{Ordered and Unordered Information}\label{sec:ordered}
In classical information theory, the information content of a signal is treated as a property of the source model, independent of the particular receiver. In teaching, however, the usefulness of information is fundamentally relative. The same explanation that substantially reduces uncertainty for a prepared learner may convey little or no usable information to a novice.

This relativity arises because the ability to extract usable information
depends on two internal factors: the learner's prerequisite structure
$\mathcal{E}_\mind$ and the learner's acquired concept set $\mc K$ at the
time of interaction. A concept that is within reach for one mind may be
completely inaccessible to another, either because the two minds operate under
different prerequisite rules, or because they share the same rules but begin
from different acquired concept sets.

Formally, one-step accessibility of a concept is determined by the expansion
map: a concept $c$ is reachable from the current acquired concept set $\mc K$
if and only if $c \in \Phi_\mind(\mc K)$. Consequently, the information
conveyed by a signal is not determined by the signal alone, but by its
position with respect to the learner's mind. This relationship defines the
effective channel through which teaching occurs.

\begin{definition}[Ordered and unordered concept]\label{def:ordered}
Let $\mind$ be a mind and let $\mc K \subseteq \mc C$ be the set of concepts the learner currently knows. A concept $c \in \mc C$ is:
\begin{enumerate}[label=\textnormal{(\roman*)}]
\item \emph{Ordered} for $(\mind,\mc K)$ if $c \in \Phi_\mind(\mc K)$. Equivalently, either $c \in \mc K$, or there exists a rule $(\mc S,c)\in\mathcal E_\mind$ such that $\mc S \subseteq \mc K$.
\item \emph{Unordered} for $(\mind,\mc K)$ if $c \notin \Phi_\mind(\mc K)$. Equivalently, $c \notin \mc K$ and for every rule $(\mc S,c)\in\mathcal E_\mind$, at least one prerequisite in $\mc S$ is missing from $\mc K$.
\end{enumerate}
\end{definition}


{At any given stage of the learning process, the set $\mc K$ represents the
concepts \emph{actually acquired so far}. It is \emph{not} assumed to be closed
under inference. The closure $\cl_\mind(\mc K)$ represents the set of concepts
that are \emph{in principle reachable} from $\mc K$ under the learner's
expansion rules. Accordingly, it need not coincide with the learner's current
acquired set at a given moment. Later, when we model teaching
dynamics (see \Cref{sec:teaching}), the evolving state $\mc K_t$ will represent the concepts the learner
has acquired by time $t$, whereas $\cl_\mind(\mc K_t)$ will describe the
concepts that are potentially accessible from that state.}

\begin{remark}
The distinction between ordered and unordered concepts concerns decodability at the present moment, not whether a signal can be stored for later use. A learner with memory could buffer a signal targeting a currently unordered concept; for example, a student might copy down a formula they do not yet understand. Once the prerequisite concepts enter $\mc K$, the stored signal may become decodable retroactively.
In the memoryless parsing model introduced in \Cref{def:parsing}, by contrast, a signal targeting an unordered concept is lost immediately. A natural extension would replace that parser with a delayed-parsing variant, in which raw signals are buffered and re-parsed whenever $\mc K$ expands. Such a model could lower the teaching-time lower bound, since information presented too early would no longer be wasted.
\end{remark}

\begin{definition}[Valid ordered curriculum]\label{def:valid-curriculum}
Let $\mind$ be a mind and let $\mc K_0 \subseteq \mc C$ be an initial knowledge set. A possibly empty finite sequence $
\gamma=((\mc S_i,c_i))_{i=1}^L$, $L\ge 0$,
is a \emph{valid ordered curriculum} starting from $\mc K_0$ if:
\begin{enumerate}[nosep, label=\textnormal{(\roman*)}]
\item $(\mc S_i,c_i)\in\mathcal E_\mind$ for each $i=1,\dots,L$;
\item defining recursively 
\begin{equation}
\label{eq:recursive-accumulated-knowledge}
\mc K_i=\mc K_{i-1}\cup\{c_i\}, \quad i=1,\dots,L,
\end{equation}
one has $
\mc S_i\subseteq \mc K_{i-1}$ for every $i=1,\dots,L$.
\end{enumerate}
\end{definition}

Definition~\ref{def:valid-curriculum} formalizes the idea that a curriculum must respect prerequisites at every step. The rule $(\mc S_i,c_i)$ can be used only when all concepts in $\mc S_i$ are already contained in the current set $\mc K_{i-1}$. Thus the curriculum follows a prerequisite-respecting path, updating the set of concepts acquired by the learner one step at a time. Here $\mc K_i$ denotes the set of concepts acquired after the first $i$ steps, so that $
\mc K_0 \subseteq \mc K_1 \subseteq \cdots \subseteq \mc K_L$.
\begin{theorem}[Ordering theorem]\label{thm:ordering}
For any mind $\mind$ and any target $c^* \in \mc U_\mind$, there exists a valid ordered curriculum $
\gamma=((\mc S_1,c_1),\ldots,(\mc S_L,c_L))$, $L\ge 0$,
starting from $\Acal_\mind$, such that, if $(\mc K_i)_{i=1,\ldots, L}$ is constructed as in \eqref{eq:recursive-accumulated-knowledge} with $\mc K_0 = \Acal_\mind$,
then $
c^*\in \mc K_L$.
\end{theorem}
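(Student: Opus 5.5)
Since $c^*\in\mc U_\mind=\cl_\mind(\Acal_\mind)$, \Cref{thm:closure-deriv} supplies a derivation $T$ of $c^*$ from $\Acal_\mind$. The plan is to linearize $T$ into a valid ordered curriculum by reading it bottom-up (post-order), so each rule is listed only after the rules producing its prerequisites. The trivial case is dispatched first: if $c^*\in\Acal_\mind$, the empty curriculum ($L=0$) works, since $\mc K_0=\Acal_\mind\ni c^*$.

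The induction is structural, on the finite tree $T$ (finiteness is guaranteed by \Cref{lem:deriv-finite}). The claim to prove is the following: for every node $v$ of $T$ labeled $c_v$, there is a finite sequence $\gamma_v$ of rules in $\mathcal E_\mind$ that is a valid ordered curriculum from \emph{any} initial set $\mc K_0\supseteq\Acal_\mind$, and whose final accumulated set contains $c_v$.

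\begin{itemize}
\item For a base node, $c_v\in\Acal_\mind\subseteq\mc K_0$, and the empty sequence works.
\item For a rule node, suppose its rule is $(\mc S,c_v)$ with children $v_1,\dots,v_m$ labeled by the elements of $\mc S$. Take $\gamma_v=\gamma_{v_1}\cdots\gamma_{v_m}\,(\mc S,c_v)$, the concatenation followed by the rule itself.
\end{itemize}

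Validity of the concatenation rests on a monotonicity observation. If a sequence is valid from $\mc K_0$, it is also valid from any $\mc K_0'\supseteq\mc K_0$, because each intermediate set only grows and the prerequisite conditions $\mc S_i\subseteq\mc K_{i-1}$ are preserved under enlargement. The accumulated sets along a curriculum are nested. So after running $\gamma_{v_1}$, the current set still contains $\Acal_\mind$, and $\gamma_{v_2}$ is valid from it by the induction hypothesis, which was stated for arbitrary supersets of $\Acal_\mind$. The same reasoning applies to each later child. Concepts once added are never lost, so after all children have run, every element of $\mc S$ lies in the current set. Appending $(\mc S,c_v)$ is then valid and adds $c_v$. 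Applying the claim at the root gives the theorem.

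The main obstacle is getting the induction hypothesis right. It must be stated for arbitrary initial supersets of $\Acal_\mind$, not just $\Acal_\mind$ itself, since otherwise concatenating the children's curricula is not justified. One could also remove repeated rules (e.g. the duplicated $\{a\}\Rightarrow b$ in \Cref{fig:derivation-trees}), but this is unnecessary, because repeats remain valid steps. An alternative route uses \Cref{prop:closure-char}(ii): pick $n$ with $c^*\in\Phi_\mind^n(\Acal_\mind)$ and induct on $n$. That route needs care when $\mc C$ is infinite, since layers may be infinite; it works only after selecting finitely many witnessing rules, which is why the derivation-tree route is preferable.
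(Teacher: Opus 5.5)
Your proof is correct and follows essentially the same route as the paper. Both take a finite derivation tree from \Cref{thm:closure-deriv} and \Cref{lem:deriv-finite} and list its rule nodes so that every child precedes its parent, allowing repeated rules. The only difference is that the paper takes an arbitrary topological order of the rule nodes and checks directly that each prerequisite is an axiom leaf or the conclusion of an earlier node, whereas your post-order concatenation (with the hypothesis stated for all supersets of $\Acal_\mind$) is one particular such order, verified by structural induction.
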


\begin{example}[Valid ordered curricula for the two minds]
\label{ex:arithmetic-curricula}
We illustrate \Cref{def:valid-curriculum} using the two minds of
\Cref{ex:arithmetic-minds}, both starting from the initial concept set $
\mc K_0=\{a\}$.

\noindent
\textit{Mind~1 (algorithmic).}
A valid ordered curriculum for Mind~1 is
\[
\gamma_1=(r_1,r_2,r_3),
\qquad
r_1=(\{a\},b),\quad
r_2=(\{b\},c),\quad
r_3=(\{b,c\},d).
\]
Writing $c_1=b$, $c_2=c$, $c_3=d$, and defining
\[
\mc K_0^{(1)}=\{a\},
\qquad
\mc K_i^{(1)}=\mc K_{i-1}^{(1)}\cup\{c_i\}
\quad (i=1,2,3),
\]
we obtain
\[
\mc K_1^{(1)}=\{a,b\},\qquad
\mc K_2^{(1)}=\{a,b,c\},\qquad
\mc K_3^{(1)}=\{a,b,c,d\}.
\]
Indeed, at each step the prerequisite set of the selected rule is contained in
the current acquired concept set.

\medskip
\noindent
\textit{Mind~2 (visual).}
A valid ordered curriculum for Mind~2 is
\[
\gamma_2=(r_1',r_2',r_3'),
\qquad
r_1'=(\{a\},c),\quad
r_2'=(\{c\},b),\quad
r_3'=(\{b,c\},d).
\]
Writing $c_1'=c$, $c_2'=b$, $c_3'=d$, and defining
\[
\mc K_0^{(2)}=\{a\},
\qquad
\mc K_i^{(2)}=\mc K_{i-1}^{(2)}\cup\{c_i'\}
\quad (i=1,2,3),
\]
we obtain
\[
\mc K_1^{(2)}=\{a,c\},\qquad
\mc K_2^{(2)}=\{a,b,c\},\qquad
\mc K_3^{(2)}=\{a,b,c,d\}.
\]
Again, each rule is applicable when used.

\medskip
\noindent
Thus the two minds admit different valid ordered curricula from the same
starting set. In particular, their first steps must differ. For Mind~1 the only
rule whose prerequisite set is contained in $\{a\}$ is $(\{a\},b)$, whereas for
Mind~2 the only such rule is $(\{a\},c)$. This suggests that a single common
curriculum cannot in general respect the structural requirements of both minds
simultaneously, foreshadowing the impossibility result of
\Cref{sec:impossibility}.
\end{example}
{\begin{proposition}[Curricula stay inside understanding horizon]\label{prop:unteachable}
Let $\mind$ be a mind, and let $
\gamma=((\mc S_i,c_i))_{i=1}^L$
be a valid ordered curriculum starting from $\Acal_\mind$. Let  $(\mc K_i)_{i=1, \ldots, L}$ be constructed as in \eqref{eq:recursive-accumulated-knowledge} with $\mc K_0 = \Acal_\mind$. Then $
\mc K_i \subseteq \mc U_\mind$ for every $i=0,1,\dots,L$.
In particular, if $c^* \notin \mc U_\mind$, then no valid ordered curriculum starting from $\Acal_\mind$ can reach $c^*$.
\end{proposition}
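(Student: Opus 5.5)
The argument is a direct induction on $i$, using only that $\mc U_\mind$ contains the axioms and is closed under the expansion rules. These two facts are properties (i) and (ii) of \Cref{prop:U-unique}. Equivalently, they follow from \Cref{prop:closure-char}(i): $\mc U_\mind=\cl_\mind(\Acal_\mind)$ is a fixed point of $\Phi_\mind$ containing $\Acal_\mind$, so whenever $(\mc S,c)\in\mathcal E_\mind$ and $\mc S\subseteq\mc U_\mind$, we get $c\in\Phi_\mind(\mc U_\mind)=\mc U_\mind$.

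\emph{Base case.} For $i=0$ we have $\mc K_0=\Acal_\mind\subseteq\mc U_\mind$, by extensivity of the closure (property (i) of \Cref{prop:U-unique}).

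\emph{Inductive step.} Suppose $\mc K_{i-1}\subseteq\mc U_\mind$ for some $1\le i\le L$. Validity of $\gamma$ (\Cref{def:valid-curriculum}) gives $(\mc S_i,c_i)\in\mathcal E_\mind$ and $\mc S_i\subseteq\mc K_{i-1}$. Hence $\mc S_i\subseteq\mc U_\mind$, and closure under expansion yields $c_i\in\mc U_\mind$. Therefore
\[
\mc K_i=\mc K_{i-1}\cup\{c_i\}\subseteq\mc U_\mind .
\]
This proves $\mc K_i\subseteq\mc U_\mind$ for all $i=0,\dots,L$. The empty curriculum ($L=0$) is covered by the base case alone.

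\emph{The ``in particular'' clause.} Saying that a curriculum reaches $c^*$ means $c^*\in\mc K_L$, following the usage in \Cref{thm:ordering}. Since $\mc K_L\subseteq\mc U_\mind$, this forces $c^*\in\mc U_\mind$. Taking the contrapositive, no valid ordered curriculum starting from $\Acal_\mind$ reaches any $c^*\notin\mc U_\mind$.

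There is no real obstacle here. The only point needing care is to justify closure of $\mc U_\mind$ under single rule applications from the earlier results, rather than assuming it. Via \Cref{prop:closure-char}(i) this is exactly the fixed-point identity $\Phi_\mind(\mc U_\mind)=\mc U_\mind$ combined with \Cref{def:one-step}.
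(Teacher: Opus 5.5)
Your proposal is correct and matches the paper's proof: an induction on $i$ with the base case from \Cref{prop:U-unique}(i), the inductive step from validity of $\gamma$ plus closure under expansion (\Cref{prop:U-unique}(ii)), and the ``in particular'' clause read off from $\mc K_L\subseteq\mc U_\mind$. Your alternative justification of closure via the fixed-point identity $\Phi_\mind(\mc U_\mind)=\mc U_\mind$ is also how the paper itself proves \Cref{prop:U-unique}(ii).
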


\Cref{prop:unteachable} draws a boundary around what any curriculum can achieve. If a concept does not belong to the understanding horizon $\mc U_\mind$, then no sequence of rule applications, however long or carefully arranged, can produce it. The barrier is structural, not epistemic: it is not that the teacher lacks information or that the curriculum is poorly designed, but that the expansion rules of the learner do not connect the axioms to the target concept. In this sense, the understanding horizon $\mc U_\mind$ is the theoretical horizon of the mind $\mind$.

A concrete illustration is the attempt to convey the visual experience of the color purple to a learner who has been blind from birth. Here the target concept is not the word purple or its descriptive use, but the sensory concept associated with its visual appearance. Such a learner may understand many relational facts about color: that purple is classified between blue and red in certain systems, that particular objects are called purple by sighted speakers, or that light associated with purple occupies a certain range of wavelengths. But if the learner's mind contains no rule path from its existing concepts to that sensory target, then no curriculum, however long or ingeniously ordered, can reach it.
\subsection{Reachable acquired concept sets}\label{sec:reachable-states}

The closure operator $\cl_\mind$ identifies what is reachable in principle from
the axiom set, but it does not describe the intermediate concept sets through
which a learner may pass on the way to that horizon. This distinction is
structurally important and closely related to a central idea in the literature
on knowledge spaces, where one studies not only which
concepts are ultimately attainable, but also which intermediate learning states
are feasible along a learning process \citep{ref:doignon1999knowledge,ref:korte1983structural}.
Closure is a global notion: if a concept lies in $\cl_\mind(\mc K)$, then it is
eventually reachable from $\mc K$, but it need not already belong to the
current acquired concept set $\mc K$. In particular, closure alone does not
record which subsets of $\mc U_\mind$ can arise by successive
prerequisite-respecting acquisitions, one concept at a time.

For the structural theory of teaching, we therefore need a finer object than
the understanding horizon alone. We introduce the family of \emph{reachable
acquired concept sets}: those subsets of $\mc U_\mind$ that can be built from
the axioms by a finite sequence of locally valid acquisitions. This family is
the natural state space for teaching dynamics. 
Later we show that, under a
finite-horizon assumption, it has the combinatorial structure familiar
from the knowledge-space literature: after shifting by the axiom core, it forms
an \textit{antimatroid}, equivalently, a learning space. Thus the framework does not take
feasible learning states of \citep{ref:doignon1999knowledge} as primitive; rather, it derives them from axioms and
expansion rules of a mind.

\begin{definition}[Reachable acquired concept sets]\label{def:reachable}
A set $\mc K\subseteq \mc U_\mind$ is \emph{reachable} if there exists a finite chain $
\Acal_\mind=\mc K_0\subset \mc K_1\subset \cdots\subset \mc K_L=\mc K$
such that for each $i=0,\dots,L-1$, $
\mc K_{i+1}=\mc K_i\cup\{c_i\}$,
$c_i\in \Phi_\mind(\mc K_i)\setminus \mc K_i$.
Any such chain is called a \emph{witnessing chain} for the reachability of $\mc K$.
\end{definition}

\begin{assumption}[Finite understanding horizon]\label{ass:finite}
The understanding horizon $\mc U_\mind=\cl_\mind(\Acal_\mind)$ is finite.
\end{assumption}
\Cref{ass:finite} is imposed only to place the reachable family within the finite
combinatorial framework of learning spaces and antimatroids. The definition of
reachability itself does not require finiteness.
Under this assumption, we define the \emph{reachable family} of mind $\mind$ as $
\mathbb K_\mind
=
\{\mc K\subseteq \mc U_\mind : \mc K \text{ is reachable from } \Acal_\mind\}$.

The reachable family $\mathbb K_\mind$ will later serve as the state space for
the teaching dynamics in \Cref{sec:teaching}, so its internal structure is
central to the theory.
The next
proposition shows that this family has three basic features. It has a
distinguished minimum state, every non-minimal reachable state can be obtained
from another reachable state by adding a single concept, and it is closed under
unions. These properties are natural from the perspective of learning: one can
build feasible states step by step, and compatible partial acquisitions can be
combined. They also place the reachable family in close correspondence with the
combinatorial objects studied in the literature on learning spaces \citep{ref:doignon1999knowledge} and
antimatroids \citep{ref:korte1983structural}.

\begin{proposition}[Structure of the reachable family]\label{prop:reach-structure}
Under \Cref{ass:finite}, the family $\mathbb K_\mind$ is finite and satisfies:
\begin{enumerate}[nosep,label=\textnormal{(\roman*)}]
\item $\Acal_\mind$ is the minimum element of the partially ordered set $(\mathbb K_\mind,\subseteq)$;
\item for every $\mc K\in\mathbb K_\mind$ with $\mc K\neq \Acal_\mind$, there exists $\mc K'\in\mathbb K_\mind$ such that $
\mc K'\subset \mc K$ and
$|\mc K\setminus \mc K'|=1$;
\item if $\mc K,\mc K'\in\mathbb K_\mind$, then $\mc K\cup\mc K'\in\mathbb K_\mind$;
\item $\mc U_\mind$ is the maximum element of $(\mathbb K_\mind,\subseteq)$;
\item ordered by inclusion, $(\mathbb K_\mind,\subseteq)$ is a finite join-semilattice, and for every $\mc K,\mc K'\in\mathbb K_\mind$ the join is given by $
\mc K\vee \mc K'=\mc K\cup \mc K'$.
\end{enumerate}
\end{proposition}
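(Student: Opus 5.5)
Finiteness is immediate. $\mathbb K_\mind\subseteq 2^{\mc U_\mind}$, and $\mc U_\mind$ is finite by \Cref{ass:finite}. The five items are then handled in the order (i), (ii), (iii), (iv), (v), since each later item uses the earlier ones.

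For (i), the trivial chain with $L=0$ shows that $\Acal_\mind\in\mathbb K_\mind$. Every witnessing chain starts at $\mc K_0=\Acal_\mind$ and increases, so $\Acal_\mind\subseteq\mc K$ for every reachable $\mc K$. Hence $\Acal_\mind$ is the minimum.

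For (ii), take a witnessing chain $\Acal_\mind=\mc K_0\subset\cdots\subset\mc K_L=\mc K$. Since $\mc K\neq\Acal_\mind$, we have $L\ge1$. The truncated chain up to $\mc K_{L-1}$ witnesses $\mc K_{L-1}\in\mathbb K_\mind$, and $\mc K\setminus\mc K_{L-1}=\{c_{L-1}\}$. So $\mc K'=\mc K_{L-1}$ works.

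For (iii), the key tool is the monotonicity of $\Phi_\mind$ (\Cref{lem:mono}). Fix a witnessing chain $\mc K'_0\subset\cdots\subset\mc K'_M=\mc K'$ with added concepts $c'_0,\dots,c'_{M-1}$. Start from a witnessing chain for $\mc K$ and extend it by processing $c'_0,\dots,c'_{M-1}$ in order. The current set after step $j$ is $\mc K\cup\mc K'_j$. At step $j$, if $c'_j\in\mc K\cup\mc K'_j$, skip it. Otherwise, since $c'_j\in\Phi_\mind(\mc K'_j)\subseteq\Phi_\mind(\mc K\cup\mc K'_j)$ by \Cref{lem:mono}, adding $c'_j$ is a legal step. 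By induction on $j$ the current set equals $\mc K\cup\mc K'_j$. The extended chain is strictly increasing because skipped elements are removed, and it ends at $\mc K\cup\mc K'$. This set lies in $\mc U_\mind$ because both $\mc K$ and $\mc K'$ do. The only delicate point is the bookkeeping with skipped steps so that the chain stays strictly increasing; this is the main (mild) obstacle.

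For (iv), we show two things. First, every reachable set lies in $\mc U_\mind$, which holds by definition. Second, $\mc U_\mind$ itself is reachable. By (iii) and finiteness, the union $\mc M$ of all members of $\mathbb K_\mind$ is reachable. If $\mc M\neq\mc U_\mind$, we derive a contradiction. By \Cref{prop:U-unique} (minimality), $\mc M\supseteq\Acal_\mind$ cannot be closed under expansion. So some rule $(\mc S,c)$ has $\mc S\subseteq\mc M$ and $c\notin\mc M$. Then $c\in\Phi_\mind(\mc M)\setminus\mc M$, and $c\in\mc U_\mind$ because $\mc M\subseteq\mc U_\mind$ and $\mc U_\mind$ is closed under expansion. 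Therefore $\mc M\cup\{c\}$ is reachable, contradicting the maximality of $\mc M$.

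For (v), the finite poset $(\mathbb K_\mind,\subseteq)$ is closed under unions by (iii). The union of two members is their least upper bound in $2^{\mc C}$, and it lies in the family, so it is also their least upper bound within the family. This gives a finite join-semilattice with $\mc K\vee\mc K'=\mc K\cup\mc K'$.
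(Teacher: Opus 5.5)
Your proof is correct and follows the paper's argument essentially step for step: the truncated witnessing chain for (ii), the $\mc K\cup\mc K'_j$ construction via monotonicity of $\Phi_\mind$ for (iii), and the least-upper-bound observation for (v). The only variation is in (iv): you get $\mc U_\mind\in\mathbb K_\mind$ by taking the union of all reachable sets (available from (iii) and finiteness) and deriving a contradiction, whereas the paper extends greedily from $\Acal_\mind$ until it reaches a fixed point of $\Phi_\mind$; both rest on the same fact that a reachable proper subset of $\mc U_\mind$ is not closed and can be extended by one concept inside $\mc U_\mind$.
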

Properties \textnormal{(i)} through \textnormal{(iii)} identify the core
combinatorial features of the reachable family: a distinguished minimum state,
one-step accessibility, and union-closure. These are precisely the ingredients
that connect the reachable family to the notions of learning space~\citep{ref:doignon1999knowledge} and
antimatroid~\citep{ref:korte1983structural}.

To make the connection precise, we recall both
concepts and their equivalence.
{An antimatroid on a finite set $\mathcal{E}$ is a family $\mathcal{F} \subseteq 2^{\mathcal E}$ satisfying: (i) $\varnothing \in \mathcal{F}$; (ii) for every nonempty $\mathcal{S} \in \mathcal{F}$, there exists $x \in \mathcal{S}$ such that $\mathcal{S} \backslash\{x\} \in \mathcal{F}$ (accessibility); and (iii) $\mathcal{F}$ is union-closed. While matroids \citep{ref:hassler} axiomatize independence structures where feasibility is closed downward: every subset of a feasible set is feasible; antimatroids \citep{ref:korte1983structural} capture the complementary pattern: feasibility is closed upward under unions, modeling sequential construction under precedence constraints. Independently, \citep{ref:doignon1999knowledge} arrived at the same mathematical structure from a different motivation: modeling the feasible knowledge states of a human learner. They called the resulting object a learning space \cite[Theorem 7]{ref:doignon2015knowledge}, \citep{ref:doignon-falmagne-2016-learning-spaces}, which is an antimatroid.}
The standard definition of a learning space takes the empty set as the minimum
element, modeling a learner who begins with no knowledge. In our setting the
learner starts from the axiom set $\Acal_\mind$, so we introduce a shifted
variant that replaces $\varnothing$ with $\Acal$.
\begin{definition}[$\Acal$-based learning space]\label{def:A-based-learning-space}
Let $\mc U$ be a finite set and let $\Acal\subseteq \mc U$. A family $
\mathbb F\subseteq 2^{\mc U}$
is called an \emph{$\Acal$-based learning space} if:
\begin{enumerate}[nosep,label=\textnormal{(\roman*)}]
\item $\Acal\in\mathbb F$ and every $\mc K\in\mathbb F$ satisfies $\Acal\subseteq \mc K$;
\item for every $\mc K\in\mathbb F$ with $\mc K\neq \Acal$, there exists $x\in \mc K\setminus \Acal$ such that $\mc K\setminus\{x\}\in\mathbb F$;
\item $\mathbb F$ is union-closed.
\end{enumerate}
Equivalently, the shifted family $
\widehat{\mathbb F}
=
\{\mc K\setminus \Acal : \mc K\in\mathbb F\}
\subseteq 2^{\mc U\setminus \Acal}$
is an antimatroid.
\end{definition}

\begin{corollary}[Shifted antimatroid structure]\label{cor:reachable-learning-space}
Under \Cref{ass:finite}, the reachable family $\mathbb K_\mind$ is an $\Acal_\mind$-based learning space. Equivalently, the shifted family $ 
\widehat{\mathbb K}_\mind
=
\{\mc K\setminus \Acal_\mind : \mc K\in\mathbb K_\mind\}
\subseteq 2^{\mc U_\mind\setminus \Acal_\mind}$
is an antimatroid.
\end{corollary}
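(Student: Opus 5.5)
The plan is to verify the three conditions of \Cref{def:A-based-learning-space} for $\mathbb F=\mathbb K_\mind$ and $\mc U=\mc U_\mind$, using \Cref{prop:reach-structure} for the bulk of the work. By \Cref{ass:finite}, $\mc U_\mind$ is finite and every reachable set is a subset of it, so the ambient setting of the definition applies. The equivalence with the antimatroid formulation then follows from the translation $\mc K\mapsto \mc K\setminus\Acal_\mind$. This map is a bijection from $\mathbb K_\mind$ onto $\widehat{\mathbb K}_\mind$, because every member contains $\Acal_\mind$. It sends $\Acal_\mind$ to $\varnothing$, commutes with unions, and commutes with deleting an element $x\notin\Acal_\mind$.

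Condition (i) follows from \Cref{prop:reach-structure}(i). The empty chain ($L=0$) shows $\Acal_\mind\in\mathbb K_\mind$, and every witnessing chain starts at $\Acal_\mind$ and is increasing, so $\Acal_\mind\subseteq\mc K$ for all $\mc K\in\mathbb K_\mind$. Condition (iii) is exactly \Cref{prop:reach-structure}(iii).

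Condition (ii), accessibility, is the step needing care. \Cref{prop:reach-structure}(ii) gives $\mc K'\in\mathbb K_\mind$ with $\mc K'\subset\mc K$ and $|\mc K\setminus\mc K'|=1$. Write $\mc K'=\mc K\setminus\{x\}$. We must check $x\notin\Acal_\mind$. Since $\mc K'\in\mathbb K_\mind$, part (i) gives $\Acal_\mind\subseteq\mc K'$, so $x\notin\Acal_\mind$. Alternatively, one can argue directly from the definition. Take a witnessing chain $\Acal_\mind=\mc K_0\subset\cdots\subset\mc K_L=\mc K$ with $L\ge1$; such a chain has $L\ge1$ because $\mc K\neq\Acal_\mind$. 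Its penultimate set $\mc K_{L-1}=\mc K\setminus\{c_{L-1}\}$ is reachable, witnessed by the truncated chain. Moreover $c_{L-1}\notin\mc K_{L-1}\supseteq\Acal_\mind$. So the removable element lies in $\mc K\setminus\Acal_\mind$.

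The only potential obstacle is this bookkeeping that the removed element lies outside the axiom core, which is what makes the shifted family satisfy the antimatroid accessibility axiom. As shown above, it is immediate from the strict-chain structure. For the shifted family, the axioms then read as follows: $\varnothing\in\widehat{\mathbb K}_\mind$; for nonempty $\widehat{\mc K}=\mc K\setminus\Acal_\mind$, removing the element $x$ gives $(\mc K\setminus\{x\})\setminus\Acal_\mind\in\widehat{\mathbb K}_\mind$; and union-closure transfers because $(\mc K\cup\mc K')\setminus\Acal_\mind=(\mc K\setminus\Acal_\mind)\cup(\mc K'\setminus\Acal_\mind)$.
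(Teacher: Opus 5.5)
Your proposal is correct and follows the same route as the paper, whose proof simply invokes \Cref{prop:reach-structure}\,(i)--(iii) together with \Cref{def:A-based-learning-space}. Your check that the removed element lies outside $\Acal_\mind$ (because $\Acal_\mind\subseteq\mc K'$), and your verification of the shift $\mc K\mapsto\mc K\setminus\Acal_\mind$, only spell out details the paper leaves implicit.
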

The next theorem characterizes the reachable families generated by minds: they are precisely the $\Acal$-based learning spaces.
\begin{theorem}[Representation of reachable families]\label{thm:representation-learning-space}
Let $\mc C$ be a finite set, let $\Acal\subseteq \mc C$, and let $\mathbb F\subseteq 2^{\mc C}$. The following are equivalent:
\begin{enumerate}[nosep,label=\textnormal{(\roman*)}]
\item $\mathbb F$ is an $\Acal$-based learning space;
\item there exists a mind $
\mind=(\mc C,\Acal,\mathcal E_\mind)$
whose reachable family satisfies $
\mathbb K_\mind=\mathbb F$.
\end{enumerate}
Moreover, when \textnormal{(i)} holds, the mind
$\mind_{\mathbb F}=(\mc C,\Acal,\mathcal E_{\mathbb F})$ with rule set $
\mathcal E_{\mathbb F}
=
\{(\mc S,c): \mc S\in \mathbb F,\ c\in \mc C\setminus \mc S,\ 
\mc S\cup\{c\}\in \mathbb F\}$
satisfies $\mathbb K_{\mind_{\mathbb F}}=\mathbb F$.
\end{theorem}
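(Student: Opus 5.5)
The direction (ii)$\Rightarrow$(i) should follow almost immediately from earlier results. Suppose $\mathbb K_\mind=\mathbb F$ for some mind $\mind=(\mc C,\Acal,\mathcal E_\mind)$. Since $\mc C$ is finite, $\mc U_\mind$ is finite, so \Cref{ass:finite} holds. \Cref{cor:reachable-learning-space} then says $\mathbb K_\mind$ is an $\Acal$-based learning space.

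Two conditions need a short check, because \Cref{prop:reach-structure}(ii) only gives some reachable $\mc K'$ with $|\mc K\setminus\mc K'|=1$.
\begin{itemize}
\item \emph{Every member contains $\Acal$.} Each reachable set is the top of a witnessing chain starting at $\Acal$, so it contains $\Acal$.
\item \emph{The removed element lies outside $\Acal$.} Take the last step of a witnessing chain for $\mc K\neq\Acal$. It adds some $c_{L-1}\notin\mc K_{L-1}\supseteq\Acal$, and $\mc K\setminus\{c_{L-1}\}=\mc K_{L-1}$ is reachable.
\end{itemize}

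For (i)$\Rightarrow$(ii), I will prove the stronger ``moreover'' claim: with $\mathcal E_{\mathbb F}$ as defined, $\mathbb K_{\mind_{\mathbb F}}=\mathbb F$. The rules are well formed, since every $\mc S\in\mathbb F$ is finite. I will write $\Phi$ for $\Phi_{\mind_{\mathbb F}}$.

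\textbf{Step 1: $\mathbb F\subseteq\mathbb K_{\mind_{\mathbb F}}$.} Take $\mc K\in\mathbb F$. Repeatedly apply accessibility (ii) of the learning space, which keeps removing an element outside $\Acal$ while staying in $\mathbb F$. Since $\mc K\setminus\Acal$ is finite, this produces a chain $\Acal=\mc K_0\subset\cdots\subset\mc K_L=\mc K$ in $\mathbb F$ whose consecutive members differ by one element. Each step $\mc K_{i+1}=\mc K_i\cup\{c_i\}$ has $\mc K_i\in\mathbb F$ and $\mc K_i\cup\{c_i\}\in\mathbb F$, so $(\mc K_i,c_i)\in\mathcal E_{\mathbb F}$ and hence $c_i\in\Phi(\mc K_i)\setminus\mc K_i$. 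It remains to see that $\mc K\subseteq\mc U_{\mind_{\mathbb F}}$, as \Cref{def:reachable} requires. This follows by induction along the chain, because $\mc U$ is closed under rules (\Cref{prop:U-unique}). So $\mc K$ is reachable.

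\textbf{Step 2: $\mathbb K_{\mind_{\mathbb F}}\subseteq\mathbb F$.} I induct along a witnessing chain, showing each $\mc K_i\in\mathbb F$. The base case $\mc K_0=\Acal$ lies in $\mathbb F$ by (i). For the inductive step, suppose $\mc K_i\in\mathbb F$ and $c\in\Phi(\mc K_i)\setminus\mc K_i$. Then there is a rule $(\mc S,c)\in\mathcal E_{\mathbb F}$ with $\mc S\subseteq\mc K_i$, $\mc S\in\mathbb F$ and $\mc S\cup\{c\}\in\mathbb F$. By union-closure, $\mc K_i\cup(\mc S\cup\{c\})=\mc K_i\cup\{c\}\in\mathbb F$. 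This use of union-closure is the crux of the whole direction.

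The main obstacle is therefore modest. It lies in keeping the definitional fine print straight: reachable sets must lie inside $\mc U_\mind$, and accessibility must remove an element outside $\Acal$. Neither is deep.
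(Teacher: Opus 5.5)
Your proof is correct and follows essentially the same route as the paper. Direction (ii)$\Rightarrow$(i) goes through \Cref{cor:reachable-learning-space}. For the canonical mind $\mind_{\mathbb F}$, you get $\mathbb K_{\mind_{\mathbb F}}\subseteq\mathbb F$ by induction along a witnessing chain using union-closure, and $\mathbb F\subseteq\mathbb K_{\mind_{\mathbb F}}$ by reversing an accessibility chain whose steps are rules of $\mathcal E_{\mathbb F}$. Your extra checks, that the removed element lies outside $\Acal$ and that the constructed sets lie in $\mc U_{\mind_{\mathbb F}}$, are details the paper leaves implicit, and you handle them correctly.
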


\Cref{fig:antimatroid-hasse} illustrates the reachable family for a mind with axiom set $\Acal_\mind=\{a\}$ and expansion rules $
\{a\}\Rightarrow b,~
\{a\}\Rightarrow c,~
\{b,c\}\Rightarrow d$.
Starting from $\underline{\mc K}=\{a\}$, the learner can acquire $b$ or $c$ in either order, since both are individually unlocked by the axiom. However, $d$ becomes reachable only once both $b$ and $c$ have been acquired, so $\{a,b,c\}$ is the unique gateway to $d$. The set $\{a,b,d\}$, for instance, does not belong to $\mathbb K_\mind$ because the rule for $d$ requires $c$, which is absent. The figure makes both accessibility and union-closure visible.

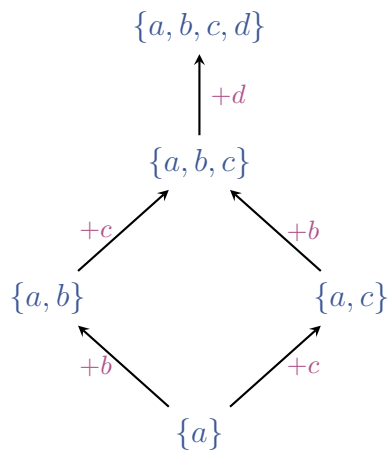
\begin{wrapfigure}{r}{0.39\textwidth}
\centering
\vspace{-0.7cm}
\begin{tikzpicture}[node distance=1.5cm and 2cm, thick, >=stealth]
    \node (a) at (0,0) {\color{mblue}$\{a\}$};
    \node (ab) at (-2,1.8) {\color{mblue}$\{a,b\}$};
    \node (ac) at (2,1.8) {\color{mblue}$\{a,c\}$};
    \node (abc) at (0,3.6) {\color{mblue}$\{a,b,c\}$};
    \node (abcd) at (0,5.4) {\color{mblue}$\{a,b,c,d\}$};

    \draw[->] (a) -- (ab) node[midway,left] {\footnotesize \color{spring}$+b$};
    \draw[->] (a) -- (ac) node[midway,right] {\footnotesize \color{spring}$+c$};
    \draw[->] (ab) -- (abc) node[midway,left] {\footnotesize \color{spring}$+c$};
    \draw[->] (ac) -- (abc) node[midway,right] {\footnotesize \color{spring}$+b$};
    \draw[->] (abc) -- (abcd) node[midway,right] {\footnotesize \color{spring}$+d$};
\end{tikzpicture}
\vspace{-0.5cm}
\caption{\small The reachable family $\mathbb K_\mind$ for a mind with axiom set $\Acal_\mind=\{a\}$ and expansion rules $\{a\}\Rightarrow b$, $\{a\}\Rightarrow c$, $\{b,c\}\Rightarrow d$. The concept $d$ becomes reachable only at $\{a,b,c\}$, where both prerequisites are present. Sets such as $\{a,b,d\}$ are structurally unreachable.}
\label{fig:antimatroid-hasse}
\end{wrapfigure}
\Cref{thm:representation-learning-space} characterizes the reachable families generated by minds as the $\Acal$-based learning spaces.
This has two consequences for the present work. First, the feasible
knowledge states of a mind need not be postulated as a primitive; they are
derived from axioms and expansion rules, and the resulting state space
automatically inherits the rich combinatorial structure of an antimatroid.
Second, the converse direction guarantees that the mind formalism is fully
expressive: any learning space one might wish to study can be generated by a
suitably chosen mind. Thus the structural and the generative viewpoints are
equivalent. We note, however, that not every union-closed family above the axioms
qualifies as an $\Acal$-based learning space. Accessibility is an additional
requirement. It rules out degenerate state spaces in which the learner cannot
progress one concept at a time; see \Cref{cor:not-every-knowledge-space}.

\section{Teaching and Learning Dynamics}\label{sec:teaching}\label{sec:arrow}

{Understanding characterizes which concepts are in principle accessible under a
prerequisite structure. Teaching introduces a second challenge beyond
accessibility: the learner must identify the teaching target. A signal about
addition in a mathematics course, for example, may indicate that addition is
itself the intended endpoint, or it may be an intermediate step on the way to
multiplication. This is the identification component of teaching.

It is here that intentionality enters. A teaching move is not merely the
presentation of a concept; it is an action chosen in light of a target and
interpreted by the learner as evidence about that target. To represent this
\textit{as}ymmetry, we model the target concept as a latent variable known to the teacher
and unknown to the learner, and we represent the learner's evolving belief as a
probability distribution over candidate targets.

The latent target need not be interpreted only as the teacher's intended
endpoint. It may also be read as the higher-level concept that renders the
currently acquired material globally coherent. On this interpretation, learning
involves two coupled dimensions: the acquired concept set expands, while the
learner simultaneously infers which larger target those concepts are organizing
toward. A concept may therefore be acquired locally before its place in the
larger conceptual graph is understood. For example, a learner may acquire many
concepts from electromagnetism and electronics while still lacking the bridge
concept that connects them to wireless communication. Once that target is
identified, previously disconnected material becomes integrated as part of a
single explanatory structure.

Teaching dynamics therefore involve both structural and epistemic progress.
Structural progress is governed by the prerequisite structure: once the learner
is at an acquired set from which a concept is \emph{ordered}, and the
appropriate signal is successfully parsed, that concept enters the learner's
acquired concept set. Epistemic progress, by contrast, concerns the gradual
resolution of uncertainty about the target. Because the learner does not know
which target the teacher intends, each signal must play a dual role: it must be
a valid instructional step in the prerequisite structure, and it must
simultaneously provide evidence that distinguishes the intended target from the
alternatives. From the learner's perspective, the observed signal is therefore a
random variable whose distribution depends on both the unknown target and the
teaching strategy. Each round can convey only a bounded amount of usable
information about the latent target, and the total teaching time is governed by
the rate at which this epistemic uncertainty is resolved.
If the learner knew the target from the outset, the epistemic dimension would
disappear and teaching would reduce to the purely structural problem of
reaching a known target by a valid curriculum.}

We now make these ideas concrete by introducing a stochastic model of teaching.

\subsection{A Stochastic Model of Teaching}
\label{sec:stoc-teaching}

Fix a probability space $(\Omega_0,\Fcal,\mathbb P)$ on which all random variables below are defined. Let $\Omega \subseteq \mc C$ be a finite set of \emph{target concepts}. Let $
\Theta:\Omega_0\to\Omega$
be an $\Omega$-valued random variable representing the realized target concept, known to the teacher but unknown to the learner. The learner's goal is to identify $\Theta$.

Let $\mc Z$ be a finite set, called the \emph{teaching signal set}, consisting of the raw signals the teacher can emit. Let $\bot\notin\mc Z$ be an additional symbol representing a null observation
produced when a signal cannot be parsed at the learner's current knowledge
state. The learner observation set is $
\mc Y=\mc Z\cup\{\bot\}$.

\begin{definition}[Signal target map]
\label{def:signal-target-map}
A \emph{signal target map} is a function $
\tgt:\mc Z\to\mc C$
that assigns to each raw teaching signal $z\in\mc Z$ the concept
$\tgt(z)\in\mc C$ that the signal is intended to teach.
We assume that every target concept is associated with at least one raw
signal, that is, $
\Omega \subseteq \operatorname{im}(\tgt)$.
\end{definition}
For each concept $c\in\mc C$, the fiber $
\tgt^{-1}(c)=\{z\in\mc Z:\tgt(z)=c\}$
is the set of all raw signals designed to teach $c$, representing different
explanations, examples, or phrasings of the same concept. Signals in the fiber $\tgt^{-1}(c)$ all target the same concept and therefore
have the same structural effect on the learner's acquired concept set.
However, they may still differ informationally: distinct signals in the fiber
can encode different information about the latent target $\Theta$.

\begin{remark}[Fixed signal system and notation]\label{rmk:fixed-signal-system}
The raw signal alphabet $\mc Z$ and the target map $\tgt$ are treated as fixed
throughout a given teaching problem. Capacity quantities introduced later
therefore depend not only on the mind $\mind$ and the acquired concept set
$\mc K$, but also on this signal system $(\mc Z,\tgt)$. When no ambiguity
arises, we suppress this dependence in the notation.
\end{remark}
The signal target map $\tgt$ and the latent target $\Theta$ play complementary
but distinct roles. The random variable $\Theta\in\Omega$ specifies what the
learner must ultimately identify: the realized target concept. The map $\tgt$
specifies what each individual signal is about: a signal $z$ with $\tgt(z)=c$
is designed to teach concept $c$, which may or may not equal $\Theta$.

In general, signals targeting prerequisite concepts may need to be presented
before signals targeting $\Theta$ itself can become usable to the learner.
Thus the teacher's eventual strategy has two degrees of freedom: which concept
to target, and which particular encoding of that concept to use within the
fiber $\tgt^{-1}(c)$. Consequently, a signal may carry information about the
target even when it does not directly target the concept $\Theta$.

We now introduce the \emph{parsing map} $\rho_\mind$, which takes a raw teaching
signal together with the learner's current knowledge set and either passes the
signal through, when the prerequisites are satisfied, or collapses it to the
null token $\bot$ otherwise.

\begin{definition}[Parsing map]\label{def:parsing}
A mind $\mind$ is equipped with a \emph{parsing map} $
\rho_\mind:\mc Z\times 2^{\mc C}\to \mc Z\cup\{\bot\}$,
where $\bot$ is a \emph{null token} indicating that the signal is unparseable. For a signal $z\in\mc Z$ with target $c=\tgt(z)$ and a knowledge set $\mc K\subseteq\mc C$:
\begin{enumerate}[nosep, label=\textnormal{(\roman*)}]
\item $
\rho_\mind(z,\mc K)=z$
if $c\in \Phi_\mind(\mc K)$, equivalently, if either $c\in\mc K$ already or there exists a rule $(\mc S,c)\in\mathcal E_\mind$ with $\mc S\subseteq \mc K$;
\item $
\rho_\mind(z,\mc K)=\bot$
if $c\notin \Phi_\mind(\mc K)$, equivalently, if $c\notin\mc K$ and no rule for $c$ has all its prerequisites in $\mc K$.
\end{enumerate}
\end{definition}
The condition $c\in\Phi_\mind(\mc K)$ is the ordered condition of \Cref{def:ordered}. A concept may have multiple prerequisite sets, and the signal is parseable if any one of them is satisfied.

\paragraph{Dynamics.}
We model teaching as a repeated interaction between a teacher and a learner unfolding over discrete rounds $t=0,1,2,\ldots$. 
{The model uses a concept-level time scale: one round represents a single
instructional interaction in which the teacher emits one raw signal, the
learner observes its parsed version, and the learner's acquired concept set may
be updated as a result.

We take the learner's initial acquired concept set to be the axiom set of the
mind:} $
\mc K_0=\Acal_\mind$.
For each $t\ge 0$, the set $\mc K_t\subseteq\mc C$ denotes the concepts acquired by the learner after the first $t$ rounds of instruction.
At round $t+1$, the teacher emits a raw signal $
Z_{t+1}\in\mc Z$.
Given the learner's current acquired concept set $\mc K_t$, the learner observation is the parsed signal $
Y_{t+1}=\rho_\mind(Z_{t+1},\mc K_t)\in\mc Y$.

\begin{definition}[Concept-acquisition update rule]\label{def:knowledge-update}
Given the parsed observation $Y_{t+1}\in\mc Y$, the learner's acquired concept set evolves according to
\[
\mc K_{t+1}=
\begin{cases}
\mc K_t\cup\{\tgt(Y_{t+1})\} & \text{if } Y_{t+1}\in \mc Z,\\[2mm]
\mc K_t & \text{if } Y_{t+1}=\bot.
\end{cases}
\]
\end{definition}
Under this update rule, each round can add at most one newly acquired concept, namely the concept targeted by the parsed signal when parsing succeeds. Time is therefore measured in units of concept-level teaching opportunities.

The rule in \Cref{def:knowledge-update} has two immediate consequences. First, acquisition is monotone: $
\mc K_t\subseteq \mc K_{t+1}$ for all $t\ge 0$.
Second, the set $\mc K_t$ records only concepts that have been explicitly acquired through parsed instruction; it is not automatically closed under the expansion rules. Thus a concept may already be reachable from $\mc K_t$, in the sense that $c\in\Phi_\mind(\mc K_t)$, without yet belonging to $\mc K_t$ itself. The learner acquires such a concept only at a later round in which it receives a parseable signal targeting $c$. Therefore, $\mc U_\mind=\cl_\mind(\Acal_\mind)$ describes what is in principle reachable from the axioms, whereas the process $(\mc K_t)_{t\ge 0}$ describes what has actually been acquired over time.
{\begin{lemma}[The instructional process stays inside the reachable family]
\label{lem:Kt-reachable}
For every $t\ge 0$, one has $
\mc K_t\in \mathbb K_\mind$ \text{almost surely.}
\end{lemma}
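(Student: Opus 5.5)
The plan is to argue by induction on $t$, proving the stronger pathwise statement that for every $\omega\in\Omega_0$ the set $\mc K_t(\omega)$ is reachable in the sense of \Cref{def:reachable}. This gives the almost-sure claim immediately, and no probabilistic argument is needed. Finiteness of the horizon (\Cref{ass:finite}) is used only so that $\mathbb K_\mind$ is defined as stated. Reachability itself is a finitary notion, and the inductive argument never needs finiteness.

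For the base case, $\mc K_0=\Acal_\mind$, and the trivial chain with $L=0$ witnesses reachability. We also need $\Acal_\mind\subseteq\mc U_\mind$, which holds by extensivity of the closure (\Cref{prop:U-unique}(i)).

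For the inductive step, fix $\omega$ and suppose $\mc K_t$ is reachable, witnessed by a chain $\Acal_\mind=\mc K^{(0)}\subset\cdots\subset\mc K^{(L)}=\mc K_t$. We split according to \Cref{def:knowledge-update}.
\begin{itemize}
\item If $Y_{t+1}=\bot$, then $\mc K_{t+1}=\mc K_t$, and the same chain works.
\item If $Y_{t+1}\in\mc Z$, then by \Cref{def:parsing} we have $Y_{t+1}=Z_{t+1}$ and $c:=\tgt(Z_{t+1})\in\Phi_\mind(\mc K_t)$.
\begin{itemize}
\item If $c\in\mc K_t$, then $\mc K_{t+1}=\mc K_t$ and there is nothing to prove.
\item Otherwise $c\in\Phi_\mind(\mc K_t)\setminus\mc K_t$, and appending the strict step $\mc K_t\subset\mc K_t\cup\{c\}=\mc K_{t+1}$ to the witnessing chain yields a witnessing chain for $\mc K_{t+1}$.
\end{itemize}
\end{itemize}

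The one point that needs care is membership in $\mc U_\mind$, since \Cref{def:reachable} requires $\mc K\subseteq\mc U_\mind$. I would handle this by showing that every set along a witnessing chain lies in $\mc U_\mind$. Inductively, if $\mc K^{(i)}\subseteq\mc U_\mind$, then by monotonicity (\Cref{lem:mono}) and the fixed-point property (\Cref{prop:closure-char}(i)) we get $\Phi_\mind(\mc K^{(i)})\subseteq\Phi_\mind(\mc U_\mind)=\mc U_\mind$, so the added concept $c_i$ lies in $\mc U_\mind$. Alternatively, one can convert the chain into a valid ordered curriculum and invoke \Cref{prop:unteachable}.

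I expect the hardest part to be bookkeeping rather than mathematics. The chain must be strictly increasing, which is why the repeated-concept case has to be separated out. The statement must also hold for every sample path, so measurability is not an issue beyond the fact that each $\mc K_t$ is a random set.
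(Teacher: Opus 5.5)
Your proposal is correct and follows the paper's proof essentially verbatim. It uses induction on $t$ with the trivial chain at $t=0$, and the same three-way split: $Y_{t+1}=\bot$; the parsed target already lies in $\mc K_t$; or the parsed target lies in $\Phi_\mind(\mc K_t)\setminus\mc K_t$ and is appended as a one-step extension. Your explicit check that each appended concept stays inside $\mc U_\mind$, via monotonicity and the fixed-point property of the closure, is correct and fills in a step the paper leaves implicit.
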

\Cref{lem:Kt-reachable} shows that the stochastic teaching process evolves
within the reachable family $\mathbb K_\mind$. Thus the family introduced in
\Cref{sec:reachable-states} not only describes structurally feasible knowledge
states but also forms the natural state space for the instructional dynamics.

\begin{definition}[Admissible teaching strategy]\label{def:admissible-strategy}
An \emph{admissible teaching strategy} is a sequence of stochastic kernels
\[
\kappa_{t+1}(\,\cdot \mid \theta,y_1,\dots,y_t)\in \Delta(\mc Z),
\qquad t\ge 0,
\]
so that, conditional on the realized target $\Theta=\theta$ and the parsed
history $(Y_1,\dots,Y_t)=(y_1,\dots,y_t)$, the teacher chooses the next raw
signal $Z_{t+1}$ according to $\kappa_{t+1}$.
\end{definition}
Because the learner's epistemic objective is to identify the latent target
$\Theta$, it maintains at each time $t$ a belief over the possible target
concepts. This belief is updated from the parsed observations
$Y_1,\dots,Y_t$, rather than from the raw teacher emissions
$Z_1,\dots,Z_t$, which are not directly observed by the learner.
Accordingly, define the learner's information filtration by
\[
\Fcal_t=\sigma(Y_1,\dots,Y_t)\subseteq \Fcal,\qquad t\ge 1,
\]
and set $
\Fcal_0=\{\varnothing,\Omega_0\}$.
Given a fixed prior $\pi_0$ and a fixed admissible teaching strategy, the
learner's posterior at time $t$ is the random probability vector
$\pi_t\in\Delta(\Omega)$ defined by
\[
\pi_t(c)=\mathbb P(\Theta=c\mid \Fcal_t),
\qquad c\in\Omega.
\]
The conditional probability is taken with respect to the probability law
induced by the prior $\pi_0$ and the admissible teaching strategy. Thus the
learner is modeled as Bayesian: its belief state at time $t$ is the posterior
distribution of the latent target given the parsed observation history.
\begin{definition}[Learning state]\label{def:learning-state}
A \emph{learning state} at time $t$ is a pair $(\mc K_t,\pi_t)$ where
\begin{enumerate}[nosep,label=(\roman*)]
\item $\mc K_t\subseteq\mc C$ is the learner's acquired concept set at time $t$;
\item $\pi_t\in\Delta(\Omega)$ is the learner's posterior belief over target concepts.
\end{enumerate}
\end{definition}
Thus the learning state records both dimensions of progress in the teaching
process: structural progress, captured by the acquired concept set $\mc K_t$,
and epistemic progress, captured by the posterior belief $\pi_t$ about the
latent target. The stochastic teaching dynamics therefore evolve on the
product space $\mathbb K_\mind \times \Delta(\Omega)$.
\begin{definition}[Completion]\label{def:completion}
Teaching is \emph{complete} at time $\tau$ if both
\begin{enumerate}[nosep,label=(\roman*)]
\item \emph{target acquisition}: $\Theta\in\mc K_\tau$;
\item \emph{identification}: $\pi_\tau(\Theta)=1$.
\end{enumerate}
\end{definition}
The framework therefore distinguishes three related notions. First, the
understanding horizon $
\mc U_\mind=\cl_\mind(\Acal_\mind)$
is the set of concepts that are in principle reachable from the axioms under
the expansion rules of the mind. Second, the time-indexed set $\mc K_t$
records which concepts have actually been acquired through instruction by time
$t$. Third, the completion condition of \Cref{def:completion} formalizes
successful teaching of the target: it requires both acquisition of the target,
$\Theta\in\mc K_\tau$, and identification of the target,
$\pi_\tau(\Theta)=1$.
Acquisition without identification corresponds to having acquired a concept
without yet knowing that it is the intended target. Identification without
acquisition corresponds to knowing which concept is intended without yet having
reached it. Completion requires both.

\begin{example}[A full teaching interaction]\label{ex:full-teaching}
Let $\mc C=\{a,b,c,d\}$ with the informal readings
\[
a=\text{counting},\qquad
b=\text{addition},\qquad
c=\text{arrays},\qquad
d=\text{multiplication}.
\]
Fix Mind~1 from \Cref{ex:arithmetic-minds}, with axiom set $\Acal_\mind=\{a\}$ and expansion rules
\[
\{a\}\Rightarrow b,\qquad
\{b\}\Rightarrow c,\qquad
\{b,c\}\Rightarrow d.
\]
Let $\Omega=\{b,c,d\}$, let the prior be uniform on $\Omega$, and let the teacher use the deterministic policy
\[
\Theta=b:\ (Z_1,Z_2,Z_3)=(z_b^{(1)},z_b^{(1)},z_b^{(1)}),
\]
\[
\Theta=c:\ (Z_1,Z_2,Z_3)=(z_b^{(1)},z_c^{(1)},z_c^{(1)}),
\]
\[
\Theta=d:\ (Z_1,Z_2,Z_3)=(z_b^{(1)},z_c^{(1)},z_d^{(1)}),
\]
where $\tgt(z_b^{(1)})=b$, $\tgt(z_c^{(1)})=c$, and $\tgt(z_d^{(1)})=d$.
Suppose the realized target is $\Theta=d$. The learner starts from
\[
\mc K_0=\{a\},
\qquad
\pi_0(b)=\pi_0(c)=\pi_0(d)=\frac13.
\]

At $t=0$, the teacher emits $Z_1=z_b^{(1)}$. Since $
b\in\Phi_\mind(\{a\})$,
the signal is parseable, so
\[
Y_1=z_b^{(1)}
\qquad\text{and}\qquad
\mc K_1=\{a,b\}.
\]
Because the same first signal is prescribed under all three targets, the observation $Y_1=z_b^{(1)}$ does not yet distinguish among them, and therefore $
\pi_1=\pi_0$.

At $t=1$, the teacher emits $Z_2=z_c^{(1)}$. Since $b$ has already been acquired, the concept $c$ is now ordered, so the signal is parseable:
\[
Y_2=z_c^{(1)}
\qquad\text{and}\qquad
\mc K_2=\{a,b,c\}.
\]
Under the stated policy, the history $(Y_1,Y_2)=(z_b^{(1)},z_c^{(1)})$ is inconsistent with $\Theta=b$. Hence the posterior assigns zero mass to $b$ and splits mass equally between $c$ and $d$:
\[
\pi_2(b)=0,
\qquad
\pi_2(c)=\pi_2(d)=\frac12.
\]

At $t=2$, the teacher emits $Z_3=z_d^{(1)}$. Since both $b$ and $c$ are now present, the concept $d$ is ordered, so the signal is parseable:
\[
Y_3=z_d^{(1)}
\qquad\text{and}\qquad
\mc K_3=\{a,b,c,d\}.
\]
Now the full observation history is consistent only with $\Theta=d$, so $
\pi_3=\delta_d$.

Thus teaching is complete at time $\tau=3$: the learner has both acquired the target, $
\Theta=d\in\mc K_3$,
and identified it, $
\pi_3(d)=1$.
This example illustrates the distinction between structural acquisition, encoded by the process $(\mc K_t)$, and epistemic identification, encoded by the posterior process $(\pi_t)$.
\end{example}

\subsection{The Epistemic Arrow of Time}
\label{sec:epistemic-arrow}
We now formalize the epistemic component of the teaching dynamics. The key
question is how the learner's uncertainty about the latent target evolves as
parsed observations accumulate over time. This motivates the term
\emph{epistemic arrow of time}: although particular observations may be
uninformative, posterior uncertainty can only decrease in conditional
expectation under Bayesian updating. The information-theoretic notions used
below are standard; see, for example, \cite[\S 2]{cover}.

\paragraph{Information-theoretic quantities.}\label{def:epistemic-entropy}
Let $X$ and $Y$ be discrete random variables on a probability space
$(\Omega_0,\Fcal,\mathbb P)$ taking values in finite or countable sets
$\mathcal X$ and $\mathcal Y$. We adopt the convention $0\log 0=0$. The
Shannon entropy of $X$ is
\[
\mathds H(X)=-\sum_{x\in\mathcal X}\mathbb P(X=x)\log \mathbb P(X=x).
\]
For a sub-$\sigma$-field $\mc G\subseteq\Fcal$, define the \emph{pathwise
conditional entropy} of $X$ given $\mc G$ by
\[
\mathds H(X\mid \mc G)
=-\sum_{x\in\mathcal X}\mathbb P(X=x\mid \mc G)\log \mathbb P(X=x\mid \mc G).
\]
Its expectation $\E[\mathds H(X\mid \mc G)]$ is the usual conditional entropy.
For brevity, we write
\[
\mathds H(X\mid Y)=\E[\mathds H(X\mid \sigma(Y))].
\]
The mutual information between $X$ and $Y$ is
\[
\mathds I(X;Y)=\mathds H(X)-\mathds H(X\mid Y).
\]
The conditional mutual information given $\mc G$ is
\[
\mathds I(X;Y\mid \mc G)
=\mathds H(X\mid \mc G)
-\E[\mathds H(X\mid \mc G\vee \sigma(Y))\mid \mc G].
\]

In the teaching model, $\Theta$ is $\Omega$-valued and the learner filtration
is
\[
\Fcal_t=\sigma(Y_1,\dots,Y_t).
\]
We define the \emph{epistemic entropy} at time $t$ by
\[
H_t=\mathds H(\Theta\mid \Fcal_t).
\]
Since $\pi_t(c)=\mathbb P(\Theta=c\mid \Fcal_t)$, this may be written as
\[
H_t=-\sum_{c\in\Omega}\pi_t(c)\log \pi_t(c)
\qquad\text{a.s.}
\]
Thus $H_t$ is the Shannon entropy of the learner posterior at time
$t$.

\begin{proposition}[Entropy drop equals information flow]
\label{prop:first-law}
The one-round expected reduction in epistemic entropy satisfies $
\mathbb E[H_t-H_{t+1}\mid \Fcal_t]
=\mathds I(\Theta;Y_{t+1}\mid \Fcal_t)$.
\end{proposition}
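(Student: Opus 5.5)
This follows directly from the paper's definition of conditional mutual information, once we check that the filtration refines correctly. By definition,
\[
\mathds I(\Theta;Y_{t+1}\mid \Fcal_t)=\mathds H(\Theta\mid \Fcal_t)-\E\bigl[\mathds H(\Theta\mid \Fcal_t\vee\sigma(Y_{t+1}))\bigm| \Fcal_t\bigr].
\]
So it suffices to identify each term with the corresponding epistemic entropy. The first term is $H_t$ by definition. For the second, note that $\Fcal_t\vee\sigma(Y_{t+1})=\sigma(Y_1,\dots,Y_{t+1})=\Fcal_{t+1}$, because the learner filtration is generated by the parsed observations alone. Hence $\mathds H(\Theta\mid \Fcal_t\vee\sigma(Y_{t+1}))=H_{t+1}$ almost surely.

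The argument then runs in three steps. First, $H_t$ is $\Fcal_t$-measurable, since it is a function of the vector $(\pi_t(c))_{c\in\Omega}$, each component of which is a version of $\mathbb P(\Theta=c\mid\Fcal_t)$. Therefore $\E[H_t\mid\Fcal_t]=H_t$. Second, since $\Omega$ is finite, $0\le H_{t+1}\le\log|\Omega|$, so $H_{t+1}$ is integrable and its conditional expectation is well defined. Third, linearity of conditional expectation gives
\[
\E[H_t-H_{t+1}\mid\Fcal_t]=H_t-\E[H_{t+1}\mid\Fcal_t],
\]
which coincides with the defining expression above.

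The only delicate point is making sure the pathwise conditional entropies are evaluated consistently. Conditional probabilities are defined only up to null sets, so I would fix regular versions. Because $\Fcal_t$ and $\Fcal_{t+1}$ are generated by finitely valued random variables (the set $\mc Y$ is finite), I can take the elementary versions: $\mathbb P(\Theta=c\mid\Fcal_t)$ evaluated on each atom $\{Y_{1:t}=y_{1:t}\}$ of positive probability. With these versions, all identities hold atom by atom, and hence almost surely.

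For completeness, I would also note that neither the parsing map nor the teacher's strategy enters the argument. They affect only the joint law of $(\Theta,Y_1,Y_2,\dots)$, and the identity holds for any such law. Nonnegativity of the conditional mutual information, which is what gives the epistemic arrow, is not needed for this proposition.
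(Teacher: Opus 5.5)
Your proposal is correct and follows the paper's proof essentially verbatim. Like the paper, you unfold the definition of conditional mutual information, identify $\Fcal_t\vee\sigma(Y_{t+1})$ with $\Fcal_{t+1}$, and use $\Fcal_t$-measurability of $H_t$ to move it inside the conditional expectation; your added remarks on the boundedness of $H_{t+1}$ and on choosing elementary versions of the conditional probabilities are sound details the paper leaves implicit.
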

\Cref{prop:first-law} expresses a conservation principle: the expected
reduction in posterior uncertainty about the target is equal to the
conditional mutual information conveyed by the next parsed observation. In
other words, expected learning progress in one round is precisely the amount
of information that $Y_{t+1}$ carries about the target $\Theta$.

\begin{theorem}[Epistemic arrow of time]\label{thm:arrow}
The epistemic entropy process $(H_t)_{t\ge 0}$ is a supermartingale:
\[
\mathbb E[H_{t+1}\mid \Fcal_t]\le H_t,
\]
with equality if and only if $Y_{t+1}$ is independent of $\Theta$ given
$\Fcal_t$.
\end{theorem}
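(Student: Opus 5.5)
The plan is to derive the supermartingale inequality directly from \Cref{prop:first-law} and then characterize the equality case through the standard characterization of zero mutual information. First I note that $H_t$ is $\Fcal_t$-measurable, being a function of $\pi_t$, which is a function of $(Y_1,\dots,Y_t)$. It is also integrable, since $0\le H_t\le \log|\Omega|$ because $\Omega$ is finite. Hence $\mathbb E[H_t-H_{t+1}\mid\Fcal_t]=H_t-\mathbb E[H_{t+1}\mid\Fcal_t]$, and \Cref{prop:first-law} yields
\[
H_t-\mathbb E[H_{t+1}\mid \Fcal_t]=\mathds I(\Theta;Y_{t+1}\mid\Fcal_t).
\]
It therefore suffices to show that the conditional mutual information is almost surely nonnegative, with equality exactly under conditional independence.

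For nonnegativity, I will work atom by atom. $\Fcal_t$ is generated by the finitely-valued vector $(Y_1,\dots,Y_t)$, with values in $\mc Y^t$ and $\mc Y$ finite. So $\Fcal_t$ is generated by a finite partition into atoms $\{(Y_1,\dots,Y_t)=h\}$. On each atom of positive probability, conditional probabilities are ordinary probabilities under $\mathbb P(\cdot\mid h)$. Also $\Fcal_t\vee\sigma(Y_{t+1})$ is generated by the refined partition indexed by $(h,y)$. On such an atom, $\mathds I(\Theta;Y_{t+1}\mid\Fcal_t)$ reduces to the ordinary mutual information of the pair $(\Theta,Y_{t+1})$ under $\mathbb P(\cdot\mid h)$. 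That is,
\[
\sum_{\theta,y}\mathbb P(\theta,y\mid h)\log\frac{\mathbb P(\theta,y\mid h)}{\mathbb P(\theta\mid h)\mathbb P(y\mid h)},
\]
which is a Kullback--Leibler divergence between the joint law and the product of the marginals. Gibbs' inequality, i.e.\ Jensen applied to the strictly concave $\log$, gives nonnegativity on each atom. Null atoms can be ignored, since conditional expectations are defined only almost surely.

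For the equality case, I again use the strict form of Gibbs' inequality. The divergence vanishes iff $\mathbb P(\theta,y\mid h)=\mathbb P(\theta\mid h)\mathbb P(y\mid h)$ for all $\theta,y$. This is exactly conditional independence of $\Theta$ and $Y_{t+1}$ on the atom $h$. Equality a.s.\ in the supermartingale inequality then holds iff this factorization holds on every positive-probability atom, which is precisely $\Theta\perp Y_{t+1}\mid\Fcal_t$.

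The main obstacle is the bookkeeping: I must check that the paper's pathwise definition of $\mathds I(\cdot;\cdot\mid\mc G)$ as $\mathds H(X\mid\mc G)-\E[\mathds H(X\mid \mc G\vee\sigma(Y))\mid\mc G]$ really coincides atomwise with the KL expression. I would verify this by expanding $\E[\cdot\mid\mc G]$ on the atom $h$ as a sum over $y$ weighted by $\mathbb P(y\mid h)$, and then combining the logarithms. Once that identification is made, the rest is the standard inequality.
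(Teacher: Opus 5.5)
Your proposal is correct and follows the paper's proof. Both obtain $H_t-\mathbb E[H_{t+1}\mid\Fcal_t]=\mathds I(\Theta;Y_{t+1}\mid\Fcal_t)$ from \Cref{prop:first-law}, then use that conditional mutual information is nonnegative and vanishes exactly under conditional independence. The paper simply asserts these facts, whereas you also check adaptedness and integrability of $H_t$ and prove nonnegativity and the equality case atom by atom via Gibbs' inequality.
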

\Cref{thm:arrow} formalizes the epistemic arrow of time:
posterior uncertainty decreases in conditional expectation, although along
particular sample paths it may increase after a realized observation. Equality
holds when the next observation carries no information about the
target.

\begin{remark}[Bayesian modeling choice]\label{rmk:bayesian}
By defining $\pi_t(c)=\mathbb P(\Theta=c\mid \Fcal_t)$, we have adopted a
Bayesian learner model: the learner belief is the true conditional
distribution of the target given the parsed observation history. This is not
the only possible choice, but it is natural here for three reasons. First,
$\pi_t$ uses all information contained in the observations and nothing else, so
it is determined entirely by the prior $\pi_0$ and the filtration $\Fcal_t$.
Second, because $\pi_t$ is the conditional distribution of $\Theta$ given
$\Fcal_t$, the epistemic entropy $H_t$ coincides with the conditional
entropy $\mathds H(\Theta\mid \Fcal_t)$. This makes mutual information the
natural measure of learning progress: each new observation reduces posterior
uncertainty by precisely $\mathds I(\Theta;Y_{t+1}\mid \Fcal_t)$ in
conditional expectation. Third, the completion condition
$\pi_\tau(\Theta)=1$ then has a strong interpretation: the parsed observations
identify the target, rather than the learner merely arriving at the correct
answer by chance.
\end{remark}

\subsection{Prerequisites and the Relativity of Randomness}
\label{sec:prereq-rel-randomness}

The epistemic arrow of time in \Cref{thm:arrow} describes how uncertainty
evolves once observations are received. It does not, however, determine what
the learner actually observes. In the teaching model the learner does not
observe the raw teacher signal $Z_{t+1}$ directly; instead it receives the
parsed observation $
Y_{t+1}=\rho_\mind(Z_{t+1},\mc K_t)$,
where the parsing map depends on the learner's current acquired concept set.
When the targeted concept is ordered, the signal passes through unchanged; when
prerequisites are missing, the parser collapses the signal to the null token
$\bot$. The effective information channel from $\Theta$ to the learner is
therefore state dependent. In particular, the same raw broadcast may transmit
usable information to one learner while being erased for another.

The next result formalizes this phenomenon. As throughout, conditional
mutual-information expressions given $U_{t+1}$ or $U_{t+1}^c$ are understood on
the event where the relevant conditioning probability is positive, and are
taken to be $0$ otherwise.

\begin{theorem}[Relativity of randomness]\label{thm:relativity}
Let $
C_{t+1}=\tgt(Z_{t+1})$
be the targeted concept, and define the unparseability event $
U_{t+1}=\{C_{t+1}\notin \Phi_\mind(\mc K_t)\}$.
Assume that on parseable rounds the raw teacher signal is informative about the
latent target: $
\mathds I(\Theta;Z_{t+1}\mid \Fcal_t,U_{t+1}^c)>0$.
Then the learner's per-round information transfer exhibits an eventwise
dichotomy:
\begin{align*}
\mathds I(\Theta;Y_{t+1}\mid \Fcal_t,U_{t+1}) &= 0
\qquad \text{\emph{(erasure)}}, \\
\mathds I(\Theta;Y_{t+1}\mid \Fcal_t,U_{t+1}^c) &> 0
\qquad \text{\emph{(informative)}}.
\end{align*}
\end{theorem}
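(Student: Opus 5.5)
The argument splits along the unparseability event $U_{t+1}$. The key structural fact is that $\mc K_t$ is $\Fcal_t$-measurable, so the event $U_{t+1}$ is determined by $\Fcal_t$ together with $Z_{t+1}$. We establish this first by induction on $t$, using \Cref{def:knowledge-update}: $\mc K_0=\Acal_\mind$ is deterministic, and $\mc K_{t+1}$ is a function of $\mc K_t$ and $Y_{t+1}$. With this in hand, the two claims follow by evaluating the parsing map of \Cref{def:parsing} on each side of the dichotomy.

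\emph{Erasure.} On $U_{t+1}$ the targeted concept $C_{t+1}$ is unordered for $(\mind,\mc K_t)$. Hence $\rho_\mind(Z_{t+1},\mc K_t)=\bot$ identically, so $Y_{t+1}$ is a.s.\ constant under the conditional law given $\Fcal_t$ and $U_{t+1}$. A constant random variable has zero mutual information with anything. Concretely, its conditional entropy vanishes, so
\[
\mathds I(\Theta;Y_{t+1}\mid \Fcal_t,U_{t+1})\le \mathds H(Y_{t+1}\mid \Fcal_t,U_{t+1})=0.
\]
Nonnegativity of mutual information then gives equality.

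\emph{Informative case.} On $U_{t+1}^c$ the parser is the identity, $Y_{t+1}=Z_{t+1}$, pathwise. Therefore the conditional joint law of $(\Theta,Y_{t+1})$ given $\Fcal_t$ and $U_{t+1}^c$ coincides with that of $(\Theta,Z_{t+1})$. It follows that
\[
\mathds I(\Theta;Y_{t+1}\mid \Fcal_t,U_{t+1}^c)=\mathds I(\Theta;Z_{t+1}\mid \Fcal_t,U_{t+1}^c),
\]
and the right-hand side is strictly positive by hypothesis.

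The main obstacle is the measure-theoretic bookkeeping around conditioning on $U_{t+1}$. The event $U_{t+1}$ is not $\Fcal_t$-measurable, because it depends on $Z_{t+1}$, which the teacher draws from $\kappa_{t+1}(\cdot\mid\Theta,Y_{1:t})$. Conditioning on it is therefore conditioning on a genuine additional event, which may itself carry information about $\Theta$. We handle this by reading the conditional mutual information as taken under the conditional measure $\mathbb P(\,\cdot\mid \Fcal_t\cap U_{t+1})$ (respectively $U_{t+1}^c$). The measure is defined on atoms of $\Fcal_t$, which are finite because $\mc Y$ is finite. By the convention stated before the theorem, the quantity is set to $0$ when the conditioning probability vanishes.

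Under this reading, both steps above are pointwise identities inside the conditioned measure. On the first event $Y_{t+1}$ is constant, and on the second $Y_{t+1}=Z_{t+1}$. So no interaction with the information that $U_{t+1}$ itself conveys about $\Theta$ enters either computation. We also note that $\mathds I(\Theta;Y_{t+1}\mid\Fcal_t)$ can still be positive even when $Y_{t+1}=\bot$, via the indicator of $U_{t+1}$. The theorem's statement sidesteps this by conditioning on the event, and we make this interpretation explicit in the proof.
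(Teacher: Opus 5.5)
Your proof is correct and follows the paper's own argument. On $U_{t+1}$ the parsed observation is the constant $\bot$, which is \Cref{lem:unparseability-erasure}; on $U_{t+1}^c$ the parser is the identity, so the two conditional mutual informations coincide by \Cref{prop:parseable-positive}, and positivity comes from the hypothesis. Your extra bookkeeping, namely the $\Fcal_t$-measurability of $\mc K_t$ and reading the conditioning as conditioning on atoms of $\Fcal_t$ intersected with $U_{t+1}$ or $U_{t+1}^c$, makes explicit what the paper leaves implicit.
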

\Cref{thm:relativity} shows that the usable information in a teaching signal is
state dependent. Under the parsing map $\rho_\mind$, if the targeted concept is
unordered at $\mc K_t$, then the parsed observation collapses to $\bot$; by
\Cref{thm:relativity}, the learner receives no further within-event discrimination from the raw signal on that event: conditional on unparseability, the parsed observation is the constant $\perp$, although the occurrence of unparseability itself may still be informative about $\Theta$. By contrast, on parseable rounds the same raw
broadcast may transmit strictly positive information. In this precise sense,
the informational status of a signal is relative to the learner's structural
capacity to decode it.

This relativity is consistent with classical information theory. Randomness has
always been observer dependent: a ciphertext appears as pure noise without the
cryptographic key \citep{shannon1949secrecy}, and conditional mutual
information formalizes the dependence of information on what is known
\citep{cover}. What is distinctive here is the mechanism that generates
this dependence: the learner's decoding power is governed by the combinatorial
closure operator $\Phi_\mind$, so prerequisite topology directly determines
when the channel behaves as identity and when it behaves as erasure.

\medskip

The notion of \emph{mind-relative randomness} introduced earlier is related to,
but distinct from, the combinatorial distinction between ordered and unordered
concepts from \Cref{def:ordered}. The latter is a structural property of the
targeted concept relative to the learner's acquired concept set, whereas the
former is an epistemic property of the observation process relative to the
latent target $\Theta$.

In the sharp parsing model, if the teacher targets a concept $
C_{t+1}=\tgt(Z_{t+1})$
that is unordered at the current acquired concept set, $
C_{t+1}\notin \Phi_\mind(\mc K_t)$, 
then the parser maps every such raw signal to the same null observation: $
Y_{t+1}=\bot$ {almost surely on that event.}
Thus all distinctions among those raw signals are erased at the learner end of
the channel.

However, the appearance of $\bot$ does not by itself imply mind-relative
randomness. Even though the symbol $\bot$ contains no internal distinctions,
the event $\{Y_{t+1}=\bot\}$ may still convey information about the target
$\Theta$. In particular, if the teacher's targeting rule depends on $\Theta$,
then the probability that the teacher selects a concept outside
$\Phi_\mind(\mc K_t)$ may vary with $\Theta$, and observing $\bot$ can update
the learner's posterior belief.

Conversely, an ordered round need not be informative. If $
C_{t+1}\in \Phi_\mind(\mc K_t)$,
then the signal is parseable and $Y_{t+1}=Z_{t+1}$. But even in this case the
parsed observation may still be mind-random if, conditional on the public
history $\Fcal_t$, the teacher's policy induces the same distribution of
$Y_{t+1}$ under every possible target. Equivalently, $
\Theta \dperp Y_{t+1}\mid \Fcal_t$.
Thus parseability and informativeness are logically distinct: an unordered
round may still be informative through the occurrence of erasure, while an
ordered round may be uninformative if the parsed signal distribution does not
depend on the target.

An immediate consequence of sharp parsing is that repeated rephrasings of the
same unordered concept do not help. If $c\notin \Phi_\mind(\mc K_t)$, then
every raw signal targeting $c$ collapses to the null observation $\bot$,
regardless of how many distinct encodings or phrasings are available (see \Cref{cor:rephrasing}). Thus, on
that event, repetition and rephrasing do not reduce epistemic uncertainty about
the target.

Combined with the prerequisite gating established in \Cref{thm:relativity},
these observations formalize a central thesis of the framework: whether a
broadcast conveys usable information is not an intrinsic property of the signal
itself, but of the interaction among the signal, the learner's current
acquired concept set, and the teacher's policy.

\begin{remark}The relativity of randomness established in \Cref{thm:relativity} suggests a
broader perspective in which randomness itself becomes observer dependent.
The parsing map $\rho_\mind$ determines, for each mind and acquired state,
which signals are informative and which collapse to noise. In this sense,
randomness is not an intrinsic property of a signal but a relation between the
signal and the observer's structure of understanding.
\end{remark}

\section{Speed Limits of Teaching}
\label{sec:information-transfer-speed-teaching}
We now derive the quantitative speed limits of the teaching model. Two
obstructions coexist. The first is \emph{structural}: the learner must acquire
enough prerequisite concepts for the target to become reachable. The second is
\emph{epistemic}: the learner must resolve uncertainty about which target
concept the teacher intends. The purpose of this section is to formalize both
obstructions and combine them into a single lower bound on the expected
completion time.

Fix a mind $
\mind=(\mc C,\Acal_\mind,\mathcal E_\mind)$
and a finite target set $
\Omega\subseteq \mc U_\mind=\cl_\mind(\Acal_\mind)$.
Thus every target under consideration lies in the learner understanding
horizon.

\subsection{Identification and state-dependent capacity}
\label{sec:state-dep-cap}

Recall that $\Theta:\Omega_0\to\Omega$ is the realized target concept and that
the learner observes the parsed history $
\Fcal_t=\sigma(Y_1,\dots,Y_t)$.
The learner epistemic objective is to identify $\Theta$ from this history. We
say that identification occurs at time $t$ if $
\pi_t(\Theta)=1$,
equivalently, $
\mathds H(\Theta\mid \Fcal_t)=0$.
Since completion additionally requires target acquisition, identification is a
strictly weaker requirement than full teaching completion.

\begin{definition}[Identification stopping time]\label{def:id-time}
A random time $\tau_{\mathrm{id}}$ is an \emph{identification stopping time} if:
\begin{enumerate}[nosep,label=\textnormal{(\roman*)}]
\item $\tau_{\mathrm{id}}$ is an $(\Fcal_t)$-stopping time;
\item $\mathbb P(\tau_{\mathrm{id}}<\infty)=1$;
\item $\Theta$ is $\Fcal_{\tau_{\mathrm{id}}}$-measurable.
\end{enumerate}
Equivalently, $
\mathds H(\Theta\mid \Fcal_{\tau_{\mathrm{id}}})=0$ \text{almost surely.}
\end{definition}

Because the parsing map $\rho_\mind$ depends on the learner's current acquired
concept set $\mc K_t$, the effective learner-side channel is state dependent.
At early
stages many raw signals may collapse to $\bot$, whereas later the same signals
may pass through unchanged once the relevant prerequisites have been acquired.

Let $\mathbb K_\mind$ be the reachable family introduced in
\Cref{def:reachable}. For each reachable acquired concept set
$\mc K\in\mathbb K_\mind$, define the ordered raw-signal set $
\mathcal Z_{\mathrm{ord}}(\mc K)
=
\{z\in\mc Z:\tgt(z)\in \Phi_\mind(\mc K)\}$.
Under sharp parsing, signals in $\mathcal Z_{\mathrm{ord}}(\mc K)$ pass through
unchanged, while all other raw signals collapse to $\bot$.

This leads to the following learner-side capacity notion.

\begin{definition}[State-dependent parsed entropy bound]
\label{def:state-capacity}
For each $\mc K\in\mathbb K_\mind$, define
\[
C_{\mind}(\mc K)
=
\sup\left\{
\mathds H\bigl(\rho_\mind(Z,\mc K)\bigr)
:
Z \text{ is an }\mc Z\text{-valued random variable}
\right\}.
\]
\end{definition}
The parsed entropy bound $C_\mind(\mc K)$ also depends on the signal system
$(\mc Z,\tgt)$. Throughout the paper this instructional interface is treated
as fixed, and we therefore suppress this dependence in the notation. For a
given interface, the variation of $C_\mind(\mc K)$ across acquired concept
sets is endogenous to the learner state, whereas its numerical level is
determined jointly by the mind $\mind$ and the signal system $(\mc Z,\tgt)$.
Thus $C_\mind(\mc K)$ should be understood as a property of the pair
$(\mind,(\mc Z,\tgt))$ evaluated at state $\mc K$. 

Thus $C_\mind(\mc K)$ is the largest Shannon entropy that a one-round parsed
observation $\rho_\mind(Z,\mc K)$ can attain at the learner end of the channel
when the acquired concept set is $\mc K$, as the law of the raw input signal
$Z$ ranges over all $\mc Z$-valued distributions.

\begin{proposition}[Statewise one-round information bound]
\label{prop:state-info-bound}
For every $t\ge 0$, $
\mathds I(\Theta;Y_{t+1}\mid \Fcal_t)\le C_\mind(\mc K_t)$ \text{almost surely.}
\end{proposition}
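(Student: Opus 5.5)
The plan is to bound the conditional mutual information by the conditional entropy of the observation, and then bound that entropy by the state-dependent capacity. On any atom of $\Fcal_t$ with positive probability, the information identity underlying \Cref{prop:first-law} gives
\[
\mathds I(\Theta;Y_{t+1}\mid \Fcal_t)\le \mathds H(Y_{t+1}\mid \Fcal_t),
\]
because the conditional entropy of $Y_{t+1}$ given $\Fcal_t\vee\sigma(\Theta)$ is nonnegative. Since $\mc Y$ is finite, every quantity is finite and this is a routine pathwise version of $\mathds I(X;Y)=\mathds H(Y)-\mathds H(Y\mid X)\le \mathds H(Y)$. Note that $\Fcal_t$ is generated by finitely many finite-valued variables, so it is atomic with finitely many atoms, and conditioning reduces to elementary conditioning on the event $\{(Y_1,\dots,Y_t)=(y_1,\dots,y_t)\}$.

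The key structural observation is that $\mc K_t$ is $\Fcal_t$-measurable. By \Cref{def:knowledge-update}, $\mc K_{t+1}$ is a deterministic function of $\mc K_t$ and $Y_{t+1}$. Hence, by induction from $\mc K_0=\Acal_\mind$, each $\mc K_t$ is a function of $(Y_1,\dots,Y_t)$. So on a fixed atom of $\Fcal_t$ the acquired set equals some fixed $\mc K$. By \Cref{lem:Kt-reachable}, this $\mc K$ lies in $\mathbb K_\mind$ (after discarding a null set), so $C_\mind(\mc K)$ is defined.

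On that atom, $Y_{t+1}=\rho_\mind(Z_{t+1},\mc K)$ with the map $\rho_\mind(\cdot,\mc K)$ fixed. Let $Z$ be a $\mc Z$-valued random variable whose law is the conditional law of $Z_{t+1}$ given the atom. Then the conditional law of $Y_{t+1}$ given the atom equals the law of $\rho_\mind(Z,\mc K)$. Therefore $\mathds H(Y_{t+1}\mid\Fcal_t)=\mathds H(\rho_\mind(Z,\mc K))\le C_\mind(\mc K)=C_\mind(\mc K_t)$ by the supremum in \Cref{def:state-capacity}. Combining this with the first display on each positive-probability atom yields the bound almost surely.

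The only delicate step is the measurability of $\mc K_t$ with respect to the learner filtration. The learner never sees $Z_{t+1}$, so we must confirm that the state entering the parser is known on each atom. Otherwise the conditional law of $Y_{t+1}$ would be a mixture over several states, and we could only bound it by a maximum over those states. Since the update depends on $Y_{t+1}$ only through $\tgt(Y_{t+1})$ and the indicator of $\bot$, the induction goes through directly. The remaining steps are standard entropy bookkeeping.
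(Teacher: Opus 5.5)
Your proof is correct and follows the paper's argument. You bound $\mathds I(\Theta;Y_{t+1}\mid\Fcal_t)$ by $\mathds H(Y_{t+1}\mid\Fcal_t)$, then use $\Fcal_t$-measurability of $\mc K_t$ to see the conditional law of $Y_{t+1}$ as the pushforward of the conditional law of $Z_{t+1}$ through the fixed map $\rho_\mind(\cdot,\mc K_t)$. You also make explicit several details the paper takes for granted: the inductive proof that $\mc K_t$ is a function of $(Y_1,\dots,Y_t)$, the finite atomic structure of $\Fcal_t$, and the appeal to \Cref{lem:Kt-reachable} so that $C_\mind(\mc K_t)$ is defined.
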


\Cref{prop:state-info-bound} shows that the learner's per-round information
gain about the target is bounded by the capacity $C_\mind(\mc K_t)$, which
depends on the learner's acquired concept set at time $t$. As the learner
acquires more concepts, the set of parseable signals grows, and the capacity
may increase. The bound is therefore not static: structural progress expands
the effective channel through which teaching occurs. This coupling between
structural progress and informational capacity is the mechanism through which
prerequisites govern the speed of teaching.

The next lemma shows that the learner-side channel can only improve as the
learner acquires more concepts.

\begin{lemma}[Monotonicity of the state-dependent bound]
\label{lem:cap-mono}
If $\mc K,\mc K'\in\mathbb K_\mind$ satisfy $\mc K\subseteq \mc K'$, then $
C_\mind(\mc K)\le C_\mind(\mc K')$.
\end{lemma}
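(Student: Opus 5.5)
The plan is to compare the two suprema in \Cref{def:state-capacity} directly. For every $\mc Z$-valued random variable $Z$, the parsed observation $\rho_\mind(Z,\mc K)$ is a deterministic function of $\rho_\mind(Z,\mc K')$. Since applying a deterministic function to a discrete random variable cannot increase Shannon entropy, this gives $\mathds H(\rho_\mind(Z,\mc K))\le \mathds H(\rho_\mind(Z,\mc K'))\le C_\mind(\mc K')$. Taking the supremum over $Z$ on the left then yields $C_\mind(\mc K)\le C_\mind(\mc K')$.

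The key step is to construct the coarsening map. By \Cref{lem:mono}, $\mc K\subseteq\mc K'$ implies $\Phi_\mind(\mc K)\subseteq\Phi_\mind(\mc K')$, hence $\mathcal Z_{\mathrm{ord}}(\mc K)\subseteq \mathcal Z_{\mathrm{ord}}(\mc K')$. Define $g:\mc Y\to\mc Y$ by
\[
g(y)=y \text{ if } y\in \mathcal Z_{\mathrm{ord}}(\mc K), \qquad g(y)=\bot \text{ otherwise}
\]
(in particular $g(\bot)=\bot$). We then verify pointwise that $g(\rho_\mind(z,\mc K'))=\rho_\mind(z,\mc K)$ for every $z\in\mc Z$, splitting into three cases according to \Cref{def:parsing}.
\begin{itemize}
\item If $z\in\mathcal Z_{\mathrm{ord}}(\mc K)$, both parsers return $z$ and $g$ fixes it.
\item If $z\in\mathcal Z_{\mathrm{ord}}(\mc K')\setminus\mathcal Z_{\mathrm{ord}}(\mc K)$, then $\rho_\mind(z,\mc K')=z$ and $g(z)=\bot=\rho_\mind(z,\mc K)$.
\item If $z\notin\mathcal Z_{\mathrm{ord}}(\mc K')$, both parsers return $\bot$ and $g(\bot)=\bot$.
\end{itemize}

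Two routine points remain. The first is the data-processing fact $\mathds H(g(X))\le\mathds H(X)$ for a function of a finite-valued $X$. It holds because the law of $g(X)$ is obtained by merging atoms of the law of $X$, and $-p\log p$ is subadditive in the sense that $(p+q)\log(p+q)\ge p\log p+q\log q$; alternatively, use $\mathds H(X)=\mathds H(X,g(X))\ge \mathds H(g(X))$. The second is that both suprema are finite, being bounded by $\log|\mc Y|$ since $\mc Z$ is finite, so no infinities arise. I expect no real obstacle here; the only care needed is the middle case of the case split, which is precisely where monotonicity of $\Phi_\mind$ (\Cref{lem:mono}) is used to guarantee that $g$ is well defined and that nothing parseable at $\mc K$ fails to be parseable at $\mc K'$. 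Reachability of $\mc K,\mc K'$ plays no role beyond making $C_\mind$ defined on them.
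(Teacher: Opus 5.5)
Your proposal is correct and follows the paper's proof step for step. It uses the inclusion $\mathcal Z_{\mathrm{ord}}(\mc K)\subseteq \mathcal Z_{\mathrm{ord}}(\mc K')$ from \Cref{lem:mono}, the same coarsening map $g$ with $\rho_\mind(z,\mc K)=g(\rho_\mind(z,\mc K'))$, entropy non-increase under a deterministic map, and a supremum over $Z$. One small correction to your closing remark: the inclusion is what your \emph{first} case needs (it guarantees $\rho_\mind(z,\mc K')=z$ when $z\in\mathcal Z_{\mathrm{ord}}(\mc K)$), whereas the middle case and the well-definedness of $g$ need nothing.
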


\Cref{lem:cap-mono} reflects a basic property of the parsing model: acquiring
additional concepts cannot reduce the learner's ability to decode signals.
When the acquired concept set grows, previously parseable signals remain
parseable, and additional signals may become usable. Consequently the entropy
of the parsed observation, and therefore the effective channel capacity,
cannot decrease as the learner acquires additional concepts.

This monotonicity admits a stronger statistical interpretation. To formalize
it, we use the Blackwell order on statistical experiments
\citep{ref:blackwell1}. Informally, one experiment Blackwell-dominates another if
the latter can be obtained from the former by garbling, that is, by
post-processing through a stochastic map independent of the underlying state.
Equivalently, the dominating experiment is at least as informative for every
statistical decision problem.

\begin{definition}[Blackwell domination]
Let $\Omega$ be a finite state space, and let
$\mathds W:\Omega\to\Delta(\mc Y)$ and
$\mathds W':\Omega\to\Delta(\mc Y')$
be two statistical experiments. We say that $\mathds W$
\emph{Blackwell-dominates} $\mathds W'$ if there exists a Markov kernel $
\mathbb G:\mc Y\to\Delta(\mc Y')$
such that for every $\omega\in\Omega$, $
\mathds W'(\cdot\mid \omega)
=
\sum_{y\in\mc Y} \mathbb G(\cdot\mid y)\mathds W(y\mid \omega)$.
Equivalently, $\mathds W'$ is a garbling of $\mathds W$.
\end{definition}
\begin{theorem}[Blackwell order on acquired concept sets]
\label{thm:blackwell}
Fix $t\ge 0$ and a public history $
h_t=(y_1,\dots,y_t)\in \mc Y^t$
with $
\mathbb P((Y_1,\dots,Y_t)=h_t)>0$.
For each $\mc K\in\mathbb K_\mind$, let $ \mathds W_{\mc K,h_t}$ denote the
statistical experiment from $\Theta$ to the parsed observation induced by the
conditional raw-signal law
\[
\mathbb P\bigl(Z_{t+1}\in\cdot\mid \Theta=\omega,\,(Y_1,\dots,Y_t)=h_t\bigr),
\qquad \omega\in\Omega.
\]
If $\mc K\subseteq \mc K'$, then $\mathds W_{\mc K',h_t}$
Blackwell-dominates $\mathds W_{\mc K,h_t}$.
\end{theorem}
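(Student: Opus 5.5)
The plan is to build an explicit garbling kernel $\mathbb G:\mc Y\to\Delta(\mc Y)$ that does not depend on $\omega$ and maps the parsed output at state $\mc K'$ to the parsed output at state $\mc K$. Write $Q_\omega(\cdot)=\mathbb P(Z_{t+1}\in\cdot\mid \Theta=\omega,(Y_1,\dots,Y_t)=h_t)$ for the common raw-signal law. For $\mc K$ and $\mc K'$ the experiments are then
\[
\mathds W_{\mc K,h_t}(y\mid\omega)=\sum_{z\in\mc Z}Q_\omega(z)\,\mathbf 1\{\rho_\mind(z,\mc K)=y\},
\]
and likewise for $\mc K'$. Both experiments share the same $Q_\omega$ and differ only in the deterministic parser applied to $z$. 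It therefore suffices to exhibit a deterministic map $g:\mc Y\to\mc Y$ with
\[
\rho_\mind(z,\mc K)=g\bigl(\rho_\mind(z,\mc K')\bigr)\quad\text{for every } z\in\mc Z,
\]
and to take $\mathbb G(\cdot\mid y)=\delta_{g(y)}$.

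The key input is \Cref{lem:mono}: $\mc K\subseteq\mc K'$ implies $\Phi_\mind(\mc K)\subseteq\Phi_\mind(\mc K')$, so $\mathcal Z_{\mathrm{ord}}(\mc K)\subseteq\mathcal Z_{\mathrm{ord}}(\mc K')$. Define $g(\bot)=\bot$. For $y\in\mc Z$, set $g(y)=y$ if $\tgt(y)\in\Phi_\mind(\mc K)$ and $g(y)=\bot$ otherwise.

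To verify the identity for each $z$, split into cases.
\begin{enumerate}[nosep]
\item If $\tgt(z)\in\Phi_\mind(\mc K)$, then by monotonicity $\tgt(z)\in\Phi_\mind(\mc K')$. Both parsers return $z$, and $g(z)=z$.
\item If $\tgt(z)\in\Phi_\mind(\mc K')\setminus\Phi_\mind(\mc K)$, then $\rho_\mind(z,\mc K')=z$ and $g(z)=\bot=\rho_\mind(z,\mc K)$.
\item If $\tgt(z)\notin\Phi_\mind(\mc K')$, then $\tgt(z)\notin\Phi_\mind(\mc K)$. Both parsers return $\bot$, and $g(\bot)=\bot$.
\end{enumerate}
Summing the identity against $Q_\omega$ gives $\mathds W_{\mc K,h_t}(\cdot\mid\omega)=\sum_y\mathbb G(\cdot\mid y)\,\mathds W_{\mc K',h_t}(y\mid\omega)$ for every $\omega$. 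This is exactly Blackwell domination.

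The one point needing care is that $g$ must not depend on $\omega$ or on $z$ beyond the observed $y$. This holds because the parser is deterministic, and a parsed non-null output $y\in\mc Z$ equals $z$ itself, so $\tgt(y)$ is computable from $y$. If some $\omega$ has $\mathbb P(\Theta=\omega,\,(Y_1,\dots,Y_t)=h_t)=0$, so that $Q_\omega$ is undefined, I will fix any version of $Q_\omega$, for instance the one given by the teacher kernel. The argument above holds for every raw law, so the choice of version is irrelevant.
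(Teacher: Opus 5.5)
Your proposal is correct and follows the paper's proof essentially verbatim. The paper uses the same deterministic map $g_{\mc K,\mc K'}$ (identity on $\mathcal Z_{\mathrm{ord}}(\mc K)$, $\bot$ otherwise), checks $\rho_\mind(z,\mc K)=g_{\mc K,\mc K'}(\rho_\mind(z,\mc K'))$ via \Cref{lem:mono}, and takes the induced indicator kernel as the $\omega$-independent garbling; your remark about fixing a version of $Q_\omega$ for zero-probability $\omega$ is a small extra piece of care that the paper leaves implicit.
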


\Cref{thm:blackwell} holds for each realized public history $h_t$
separately. Thus the ordering of acquired concept sets is pathwise rather than
merely averaged: conditional on any history for which the next-round raw-signal
law is defined, the parsed experiment induced by a larger acquired concept set
Blackwell-dominates the parsed experiment induced by a smaller one.

This theorem strengthens \Cref{lem:cap-mono}. The monotonicity of
$C_\mind(\mc K)$ says that larger acquired concept sets permit weakly greater
parsed entropy. \Cref{thm:blackwell} shows more: they induce uniformly more
informative experiments in the sense of statistical decision theory. The
universal-broadcast theorem of \Cref{thm:impossibility} will show that this
dependence on the learner prerequisite structure cannot, in general, be
eliminated by a common broadcast curriculum.
\subsection{Structural and epistemic lower bounds}
\label{sec:structural-epistemic}
We now combine the structural and epistemic constraints of the model to derive a single lower bound on teaching time.

\begin{definition}[Structural distance to a target concept]
\label{def:structural-distance}
For $c\in \mc U_\mind$, define
\[
L_\mind(c)
=
\min\Bigl\{
L\ge 0:
\exists\,\mc K_0,\dots,\mc K_L\in \mathbb K_\mind,\;
u_0,\dots,u_{L-1}\in \mc U_\mind
\text{ such that }
\]
\[
\mc K_0=\Acal_\mind,~
c\in \mc K_L,~
\mc K_{i+1}=\mc K_i\cup\{u_i\},~
u_i\in \Phi_\mind(\mc K_i)\setminus \mc K_i, 
\quad i=0,\dots,L-1
\Bigr\}.
\]
\end{definition}

The quantity $L_\mind(c)$ measures the shortest prerequisite-respecting route
from the axioms to a state containing $c$. It therefore gives the natural
structural benchmark against which any completion time must be compared. The first fundamental constraint on teaching time is structural: the learner
must traverse the prerequisite chain before the target can be acquired.  
\begin{proposition}[Structural barrier]\label{prop:structural-barrier}
Let $\tau$ be any completion time in the sense of \Cref{def:completion}. Then $
\tau \ge L_\mind(\Theta)$ almost surely.
Consequently, $
\mathbb E[\tau]\ge \mathbb E[L_\mind(\Theta)]$.
\end{proposition}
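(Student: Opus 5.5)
The argument is pathwise. We fix an outcome $\omega_0\in\Omega_0$ in the almost-sure event of \Cref{lem:Kt-reachable}, on which $\mc K_t\in\mathbb K_\mind$ for every $t\ge 0$; this is a countable intersection of almost-sure events, hence almost sure. On this event we show that $\Theta\in\mc K_\tau$ forces $\tau\ge L_\mind(\Theta)$. The identification condition in \Cref{def:completion} plays no role; only target acquisition is used. If $\tau=\infty$ there is nothing to prove, so we assume $\tau<\infty$.

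The key step is to extract from the realized trajectory $\mc K_0\subseteq\mc K_1\subseteq\cdots\subseteq\mc K_\tau$ a feasible chain for the minimization in \Cref{def:structural-distance}, and to check that its length is at most $\tau$. By the update rule of \Cref{def:knowledge-update}, each round either leaves $\mc K_t$ unchanged (when $Y_{t+1}=\bot$, or when the parsed concept is already in $\mc K_t$) or adds exactly one concept $u=\tgt(Y_{t+1})$. In the latter case $Y_{t+1}\ne\bot$, so by \Cref{def:parsing} we have $u\in\Phi_\mind(\mc K_t)$, and $u\notin\mc K_t$ because the set strictly grew.

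We then delete the rounds in which $\mc K_t$ does not change and list the distinct successive sets as $\mc K'_0=\Acal_\mind,\mc K'_1,\dots,\mc K'_{L'}=\mc K_\tau$. This is a chain with $\mc K'_{i+1}=\mc K'_i\cup\{u_i\}$ and $u_i\in\Phi_\mind(\mc K'_i)\setminus\mc K'_i$. Each $\mc K'_i$ lies in $\mathbb K_\mind$ by \Cref{lem:Kt-reachable}, and each $u_i$ lies in $\mc U_\mind$ because $\mathbb K_\mind\subseteq 2^{\mc U_\mind}$. Since each of the $\tau$ rounds adds at most one concept, $L'\le\tau$. Because $\Theta\in\mc K_\tau=\mc K'_{L'}$, this chain is admissible in the definition of $L_\mind(\Theta)$, and therefore $L_\mind(\Theta)\le L'\le\tau$.

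Taking expectations of the almost-sure inequality $\tau\ge L_\mind(\Theta)$ gives the second claim by monotonicity of expectation. Both quantities are nonnegative, so the inequality holds even if $\mathbb E[\tau]=\infty$. The quantity $L_\mind(\Theta)$ is a well-defined finite random variable: $\Omega\subseteq\mc U_\mind$, every $c\in\mc U_\mind$ admits a chain by \Cref{thm:ordering}, and $\Omega$ is finite. The only point requiring care is the bookkeeping that non-growth rounds can be discarded while keeping every intermediate set reachable, and \Cref{lem:Kt-reachable} provides exactly that. There is no real obstacle beyond this.
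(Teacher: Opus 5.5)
Your proof is correct and follows the paper's argument. You fix a sample path, delete repeated sets from $\mc K_0,\dots,\mc K_\tau$ to obtain a prerequisite-respecting chain of length at most $\tau$ ending at a set containing $\Theta$, and compare it with the minimum defining $L_\mind(\Theta)$. You are slightly more careful than the paper in checking that the extracted chain meets the membership requirements of \Cref{def:structural-distance} ($\mc K'_i\in\mathbb K_\mind$, $u_i\in\mc U_\mind$), but the route is the same.
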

\Cref{prop:structural-barrier} is the purely geometric obstruction in the
model. Regardless of how informative the signals are, the learner cannot
complete teaching before traversing a prerequisite-respecting path to a state
containing the realized target. Since each round adds at most one concept, the
shortest such path gives an unavoidable lower bound on completion time.

To control the epistemic obstruction, we next aggregate the information gained
across all rounds up to identification $\tau_{\mathrm{id}}$. The point is that the per-round entropy
drop identity from \Cref{prop:first-law} telescopes over time.

\begin{lemma}[Total information required for identification]
\label{lem:chain}
Let $\tau_{\mathrm{id}}$ be an identification stopping time. Then
\[
\mathbb E\left[\sum_{t=0}^{\tau_{\mathrm{id}}-1}
\mathds I(\Theta;Y_{t+1}\mid \Fcal_t)\right]
=
\mathds H(\Theta).
\]
\end{lemma}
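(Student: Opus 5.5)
The plan is to telescope the per-round identity of \Cref{prop:first-law} along the filtration and stop it at $\tau_{\mathrm{id}}$ by optional stopping. Write $D_t=H_t-\mathbb E[H_{t+1}\mid\Fcal_t]$. By \Cref{prop:first-law}, $D_t=\mathds I(\Theta;Y_{t+1}\mid\Fcal_t)\ge 0$, and $D_t$ is $\Fcal_t$-measurable. This gives the Doob decomposition of the supermartingale $(H_t)$ from \Cref{thm:arrow}: the process
\[
M_t=H_t+\sum_{s=0}^{t-1}\mathds I(\Theta;Y_{s+1}\mid\Fcal_s)
\]
is an $(\Fcal_t)$-martingale with $M_0=H_0$. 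Since $\Fcal_0$ is trivial, $H_0=\mathds H(\Theta)$. All quantities are bounded, because $0\le H_t\le\log|\Omega|$ and the information terms are nonnegative, so $M_t$ is integrable for every $t$.

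Next, apply the martingale property to the bounded stopping time $\tau_{\mathrm{id}}\wedge n$. This yields
\[
\mathbb E\bigl[H_{\tau_{\mathrm{id}}\wedge n}\bigr]+\mathbb E\Bigl[\sum_{t=0}^{(\tau_{\mathrm{id}}\wedge n)-1}\mathds I(\Theta;Y_{t+1}\mid\Fcal_t)\Bigr]=\mathds H(\Theta).
\]
Then let $n\to\infty$. The sum term increases monotonically to the full sum up to $\tau_{\mathrm{id}}$ because each summand is nonnegative, so monotone convergence applies. The entropy term converges by bounded convergence, since $H$ is bounded by $\log|\Omega|$ and $\tau_{\mathrm{id}}<\infty$ almost surely, so $H_{\tau_{\mathrm{id}}\wedge n}\to H_{\tau_{\mathrm{id}}}$ almost surely.

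The remaining step, which I expect to be the main obstacle, is to show $H_{\tau_{\mathrm{id}}}=0$ almost surely. The key point is that $H_{\tau_{\mathrm{id}}}$ means $H_t$ evaluated at $t=\tau_{\mathrm{id}}$, that is, the entropy of the posterior $\pi_{\tau_{\mathrm{id}}}$. Definition \ref{def:id-time} instead states $\mathds H(\Theta\mid\Fcal_{\tau_{\mathrm{id}}})=0$, which involves the stopped $\sigma$-field. I would reconcile the two as follows. On the event $\{\tau_{\mathrm{id}}=t\}\in\Fcal_t$, the standard identity $\mathbb P(\Theta=c\mid\Fcal_\tau)=\mathbb P(\Theta=c\mid\Fcal_t)$ holds almost surely, because conditional expectations given $\Fcal_\tau$ and $\Fcal_t$ agree on events of the form $\{\tau=t\}$. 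Hence $\pi_{\tau_{\mathrm{id}}}=\delta_\Theta$ almost surely on each such event. Summing over the countably many values of $t$ gives $H_{\tau_{\mathrm{id}}}=0$ almost surely, and substituting this into the limit above yields the claim.
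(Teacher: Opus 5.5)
Your proposal is correct and follows essentially the paper's route: the paper telescopes \Cref{prop:first-law} against $\mathbf 1_{\{\tau_{\mathrm{id}}>t\}}$ up to $\tau_{\mathrm{id}}\wedge n$, which is exactly optional stopping for your Doob martingale $M_t$ at a bounded time, and then passes to the limit by monotone and bounded convergence. Your argument that $H_{\tau_{\mathrm{id}}}=0$, via agreement of $\mathbb P(\Theta=c\mid\Fcal_{\tau_{\mathrm{id}}})$ and $\pi_t(c)$ on $\{\tau_{\mathrm{id}}=t\}$, fills in a step the paper only asserts; one small fix is that integrability of $M_t$ follows from the bound $\mathds I(\Theta;Y_{s+1}\mid\Fcal_s)\le H_s\le\log|\Omega|$, not from nonnegativity of the information terms.
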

\Cref{lem:chain} says that identification must pay for the full initial
uncertainty of the target. The cumulative conditional mutual information
transmitted through the parsed observations up to identification is the
entropy of $\Theta$. Thus the learner cannot identify the target until enough
usable information has flowed through the learner-side channel to resolve all
initial uncertainty.

The next step is to combine this accounting identity with the statewise capacity
bound from \Cref{prop:state-info-bound}. This converts total required
information into a lower bound expressed in terms of the learner trajectory
through the reachable family.

\begin{proposition}[Trajectory information budget]
\label{prop:trajectory-budget}
Let $\tau_{\mathrm{id}}$ be any identification stopping time. Then
\[
\mathds H(\Theta)
\le
\mathbb E\left[\sum_{t=0}^{\tau_{\mathrm{id}}-1} C_\mind(\mc K_t)\right].
\]
\end{proposition}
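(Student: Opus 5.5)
The plan is to combine the accounting identity of \Cref{lem:chain} with the statewise bound of \Cref{prop:state-info-bound}, summed pathwise over the random horizon. By \Cref{lem:chain},
\[
\mathds H(\Theta)=\mathbb E\Bigl[\sum_{t=0}^{\tau_{\mathrm{id}}-1}\mathds I(\Theta;Y_{t+1}\mid\Fcal_t)\Bigr].
\]
So it suffices to compare the two random sums term by term inside the expectation.

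First I rewrite each sum with indicators, as $\sum_{t\ge0}\mathds 1\{\tau_{\mathrm{id}}>t\}\,\mathds I(\Theta;Y_{t+1}\mid\Fcal_t)$, and likewise with $C_\mind(\mc K_t)$ in place of the mutual information. Both summands are nonnegative: conditional mutual information is nonnegative, and $C_\mind\ge 0$ because it is a supremum of entropies. Next, by \Cref{prop:state-info-bound}, $\mathds I(\Theta;Y_{t+1}\mid\Fcal_t)\le C_\mind(\mc K_t)$ almost surely for each fixed $t$. Multiplying by the indicator $\mathds 1\{\tau_{\mathrm{id}}>t\}\ge 0$ preserves this inequality. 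Since there are countably many $t$, the null sets combine into one null set, so the summed inequality holds almost surely along every path. Finally, I take expectations. Monotonicity of expectation for nonnegative extended-real random variables, justified via Tonelli or monotone convergence, gives
\[
\mathbb E\Bigl[\sum_{t<\tau_{\mathrm{id}}}\mathds I(\Theta;Y_{t+1}\mid\Fcal_t)\Bigr]\le\mathbb E\Bigl[\sum_{t<\tau_{\mathrm{id}}}C_\mind(\mc K_t)\Bigr],
\]
and the left side equals $\mathds H(\Theta)$.

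The only delicate point is measurability and integrability on the random horizon. \Cref{lem:chain} already handles the stopping-time telescoping, so nothing beyond nonnegativity is required here. The right-hand side may be $+\infty$, in which case the claim is trivial. The sum is well defined because $\tau_{\mathrm{id}}<\infty$ almost surely. It is also measurable: $\mc K_t$ is a measurable function of the history, since it is determined by $Y_1,\dots,Y_t$ through \Cref{def:knowledge-update}, and it takes finitely many values in $\mathbb K_\mind$.
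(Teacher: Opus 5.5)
Your proposal is correct and follows the paper's proof exactly: you substitute the almost-sure statewise bound of \Cref{prop:state-info-bound} termwise into the identity of \Cref{lem:chain}. Your extra bookkeeping (nonnegativity of the summands, a single null set for countably many $t$, and $\Fcal_t$-measurability of $\mc K_t$) makes explicit what the paper leaves implicit in the word ``substituting.''
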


\Cref{prop:trajectory-budget} is the dynamic information budget of the model.
The total target uncertainty cannot exceed the cumulative parsed capacity along
the states visited before identification. In this sense, a curriculum may need
to spend rounds building the decoder before it can effectively use it: the
states through which the learner passes determine the rate at which target
information can be transmitted.

We define $
C_\mind^{\max}
=
\max_{\mc K\in\mathbb K_\mind} C_\mind(\mc K)$. 
This maximum is well defined because, under \Cref{ass:finite}, the reachable
family $\mathbb K_\mind$ is finite by \Cref{prop:reach-structure}.
\begin{theorem}[Global structural-information lower bound]
\label{thm:struct-info-bound}
Let $\tau$ be any completion time, 
then
\[
\mathbb E[\tau]
\ge
\max\left\{
\mathbb E[L_\mind(\Theta)],
\frac{\mathds H(\Theta)}{C_\mind^{\max}}
\right\}.
\]
\end{theorem}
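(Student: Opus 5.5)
The bound has two terms, and I will prove each separately and then take the maximum. The structural term is exactly \Cref{prop:structural-barrier}: any completion time satisfies $\tau\ge L_\mind(\Theta)$ almost surely, so $\mathbb E[\tau]\ge \mathbb E[L_\mind(\Theta)]$. For the epistemic term, the plan is to compare $\tau$ with an identification stopping time and then apply the trajectory budget of \Cref{prop:trajectory-budget}.

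\textbf{From completion to identification.} Completion at $\tau$ requires $\pi_\tau(\Theta)=1$, i.e.\ $\mathds H(\Theta\mid\Fcal_\tau)=0$. To invoke \Cref{prop:trajectory-budget} I need an $(\Fcal_t)$-stopping time that is almost surely finite and at which $\Theta$ is measurable. I will set
\[
\tau_{\mathrm{id}}=\inf\{t\ge 0:\max_{c\in\Omega}\pi_t(c)=1\}.
\]
This is a learner-side stopping time, since $\pi_t$ is $\Fcal_t$-measurable. On $\{\tau_{\mathrm{id}}=t\}$ the posterior is a point mass. Because $\pi_t(\Theta)>0$ almost surely, that point mass must sit on $\Theta$, so $\Theta$ is $\Fcal_{\tau_{\mathrm{id}}}$-measurable. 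Completion at $\tau$ gives $\max_c\pi_\tau(c)=1$, hence $\tau_{\mathrm{id}}\le\tau$.

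If $\mathbb E[\tau]=\infty$ there is nothing to prove. Otherwise $\tau<\infty$ almost surely, so $\tau_{\mathrm{id}}<\infty$ almost surely and $\tau_{\mathrm{id}}$ is an identification stopping time.

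\textbf{The main obstacle.} The delicate point is that $\tau$ itself is defined through $\Theta\in\mc K_\tau$ and need not be a learner stopping time. This is why I pass to the first-hitting time $\tau_{\mathrm{id}}$ rather than applying the budget to $\tau$ directly. If the paper intends $\tau$ to be an $(\Fcal_t)$-stopping time, this step simplifies further.

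\textbf{Combining.} \Cref{prop:trajectory-budget} together with the pointwise bound $C_\mind(\mc K_t)\le C_\mind^{\max}$ gives
\[
\mathds H(\Theta)\le \mathbb E\Bigl[\sum_{t=0}^{\tau_{\mathrm{id}}-1}C_\mind(\mc K_t)\Bigr]\le C_\mind^{\max}\,\mathbb E[\tau_{\mathrm{id}}]\le C_\mind^{\max}\,\mathbb E[\tau].
\]
The bound $C_\mind(\mc K_t)\le C_\mind^{\max}$ holds because $\mc K_t\in\mathbb K_\mind$ almost surely by \Cref{lem:Kt-reachable}, and $C_\mind^{\max}$ is well defined by finiteness of $\mathbb K_\mind$.

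It remains to divide by $C_\mind^{\max}$. If $C_\mind^{\max}=0$, then every parsed observation is deterministic, so no information ever flows and $\mathds H(\Theta)=0$. I adopt the convention $0/0=0$, or equivalently note that the display above then forces $\mathds H(\Theta)=0$, so the bound reduces to the structural term. If $C_\mind^{\max}>0$, dividing gives $\mathbb E[\tau]\ge \mathds H(\Theta)/C_\mind^{\max}$. Taking the maximum with the structural bound completes the argument.
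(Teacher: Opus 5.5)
Your proof is correct and follows the paper's argument. The structural term comes from \Cref{prop:structural-barrier}. The epistemic term uses the first-hitting identification time (your condition $\max_c\pi_t(c)=1$ is the paper's $\mathds H(\Theta\mid\Fcal_t)=0$), together with \Cref{prop:trajectory-budget}, the bound $C_\mind(\mc K_t)\le C_\mind^{\max}$ from \Cref{lem:Kt-reachable}, and $\tau_{\mathrm{id}}\le\tau$. Your extra checks fill in points the paper's proof passes over silently: $\tau_{\mathrm{id}}<\infty$ almost surely when $\mathbb E[\tau]<\infty$, $\Theta$ is $\Fcal_{\tau_{\mathrm{id}}}$-measurable because $\pi_t(\Theta)>0$ almost surely, and the degenerate case $C_\mind^{\max}=0$ is handled.
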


\Cref{thm:struct-info-bound} is the central speed law of the framework.
Teaching is constrained simultaneously by prerequisite geometry and by
information transmission. The lower bound takes the form of a maximum rather
than an additive sum because structural progress may itself convey information
about the target. Nevertheless both bottlenecks must be cleared.

\begin{assumption}[Structural signal availability]
\label{ass:structural-signals}
For every concept $u\in \mc U_\mind$, there exists a raw signal $z\in\mc Z$
such that $
\tgt(z)=u$.
\end{assumption}
Earlier we required only that every possible target concept admit a
corresponding signal, that is, $\Omega\subseteq \operatorname{im}(\tgt)$.
\Cref{ass:structural-signals} is stronger: it requires signals for all concepts
in the understanding horizon $\mc U_\mind$, including intermediate
prerequisites. This assumption ensures that the teacher can implement any
valid ordered curriculum by emitting signals targeting the concepts that must
be acquired along the path to the target.

\begin{proposition}[Direct target signaling collapses the epistemic term in the baseline model]
\label{prop:direct-target-collapse}
Let $
\Omega_+=\{c\in\Omega:\pi_0(c)>0\}$
be the support of the prior.

\begin{enumerate}[nosep,label=\textnormal{(\roman*)}]
\item If $\Omega\subseteq \operatorname{im}(\tgt)$, then $
\frac{\mathds H(\Theta)}{C_\mind^{\max}}\le 1$.
\item Under \Cref{ass:structural-signals}, there exists an admissible teaching
strategy with completion time $\tau$ satisfying $
\tau\le L_\mind(\Theta)+1$ \text{almost surely,}
and hence $
\mathbb E[\tau]\le \mathbb E[L_\mind(\Theta)] + 1$.
\end{enumerate}
\end{proposition}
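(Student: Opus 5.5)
For part (i) the plan is to show $C_\mind^{\max}\ge \log|\Omega_+|\ge \mathds H(\Theta)$. I would evaluate the capacity at the top state $\mc U_\mind$, which belongs to $\mathbb K_\mind$ by \Cref{prop:reach-structure}(iv). Since $\Omega\subseteq\mc U_\mind\subseteq\Phi_\mind(\mc U_\mind)$, every signal targeting a concept of $\Omega$ is parsed unchanged at $\mc U_\mind$. Because $\Omega\subseteq\operatorname{im}(\tgt)$, we can fix for each $c\in\Omega$ one signal $z_c\in\tgt^{-1}(c)$; these are pairwise distinct because their targets differ. Let $Z$ be uniform on $\{z_c:c\in\Omega_+\}$. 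Then $\rho_\mind(Z,\mc U_\mind)=Z$, so
\[
C_\mind^{\max}\ge C_\mind(\mc U_\mind)\ge \log|\Omega_+|\ge \mathds H(\Theta).
\]
The last inequality holds because $\Theta$ is supported on $\Omega_+$. If $|\Omega_+|=1$ both sides vanish, and we read the ratio as $0$ under the convention $0/0=0$; otherwise $C_\mind^{\max}>0$ and the ratio is at most $1$.

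For part (ii) I would build a deterministic strategy. For each $\theta\in\Omega$, fix a minimal witnessing chain $\Acal_\mind=\mc K_0^\theta\subset\cdots\subset\mc K^\theta_{L}$ with $L=L_\mind(\theta)$ and added concepts $u^\theta_0,\dots,u^\theta_{L-1}$. By minimality, $\theta\notin\mc K^\theta_{L-1}$, so $u^\theta_{L-1}=\theta$ whenever $L\ge1$. By \Cref{ass:structural-signals}, pick signals $w^\theta_i$ with $\tgt(w^\theta_i)=u^\theta_i$, taking $w^\theta_{L-1}=z_\theta$. The teacher emits $w^\theta_0,\dots,w^\theta_{L-1}$ in rounds $1,\dots,L$, and then emits $z_\theta$ at every later round. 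By induction on $t$, using \Cref{def:knowledge-update}, $u^\theta_t\in\Phi_\mind(\mc K_t)$ at each round, so every signal is parsed and $\mc K_t=\mc K^\theta_t$. Hence $\theta\in\mc K_L$, and the round-$(L+1)$ signal $z_\theta$ is also parsed, since $\theta\in\mc K_L\subseteq\Phi_\mind(\mc K_L)$. Thus $Y_t=Z_t$ for all $t$, and target acquisition holds from time $L$ onward.

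The main obstacle is identification: we need $\pi_{L(\theta)+1}(\theta)=1$. Since the policy is deterministic and $Y_t=Z_t$, this is equivalent to showing that no $\theta'\ne\theta$ produces the same first $L(\theta)+1$ observations. The key device is that the chain rounds of any $\theta'$ always target a concept not yet acquired. Consequently, two consecutive rounds targeting the same concept can only occur in the terminal phase of some $\theta'$, where it is the repetition of $\theta'$. I would split into cases on $L(\theta')$ versus $L(\theta)$, for $L(\theta)\ge1$.
\begin{itemize}
\item If $L(\theta')>L(\theta)$, then rounds $L(\theta)$ and $L(\theta)+1$ are both chain rounds for $\theta'$. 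They target distinct concepts, whereas $\theta$ repeats $z_\theta$ there.
\item If $L(\theta')<L(\theta)$, then $\theta'$ repeats $z_{\theta'}$ at rounds $L(\theta')$ and $L(\theta')+1$. Both of these are chain rounds for $\theta$, at which $\theta$ targets distinct concepts.
\item If $L(\theta')=L(\theta)$, the round-$L$ signals $z_{\theta'}\ne z_\theta$ already differ.
\end{itemize}
If $L(\theta)=0$, then $\theta\in\Acal_\mind$ and the round-1 signal is $z_\theta$. A $\theta'$ with $L(\theta')=0$ sends $z_{\theta'}\ne z_\theta$. A $\theta'$ with $L(\theta')\ge1$ sends a signal targeting $u^{\theta'}_0\notin\Acal_\mind$, which differs from $z_\theta$ since $\tgt(z_\theta)=\theta\in\Acal_\mind$.

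It follows that the parsed history up to time $L(\Theta)+1$ determines $\Theta$, so $\pi_{L(\Theta)+1}(\Theta)=1$ almost surely. Both conditions of \Cref{def:completion} therefore hold at $\tau=L_\mind(\Theta)+1$, giving $\tau\le L_\mind(\Theta)+1$ almost surely. Taking expectations yields $\mathbb E[\tau]\le\mathbb E[L_\mind(\Theta)]+1$.
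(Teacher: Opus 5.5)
Your proof is correct and follows the paper's route. In (i) you bound $C_\mind^{\max}$ from below by the capacity at $\mc U_\mind$, using pairwise distinct representatives $z_c$. In (ii) you run a shortest valid curriculum and add one target-specific round.

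Your case analysis on $L_\mind(\theta')$ versus $L_\mind(\theta)$ supplies an identification argument that the paper only asserts in one line. That argument is needed, because observing $z_c$ alone need not identify $c$ when $z_c$ also appears on another target's chain. One small caveat: your bullet for $L_\mind(\theta')<L_\mind(\theta)$ tacitly assumes $L_\mind(\theta')\ge 1$. The subcase $L_\mind(\theta')=0$ is covered by your round-$1$ argument with the roles of $\theta$ and $\theta'$ swapped.
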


\Cref{prop:direct-target-collapse} clarifies the role of the
information-theoretic layer in the baseline model. Once the learner has
structurally reached the realized target, a single target-specific signal
suffices for identification. Thus the dominant obstruction is typically
structural: the learner must first acquire the prerequisites that make the
target concept reachable.

The information-theoretic analysis nevertheless remains essential. It explains
why target-specific instruction is ineffective before the relevant prerequisites
are in place, and it provides a principled way to compare the informativeness of
different acquired states through Blackwell dominance. In this view, structural progress builds the decoder, and information
transmission becomes effective only after that decoder exists.

\begin{example}[A common prerequisite can open a parseable identification channel]
\label{ex:decoder-building}
Consider the mind $
\mind=(\mc C,\Acal_\mind,\mathcal E_\mind)$
with
\[
\mc C=\{a,b,d_1,d_2,d_3,d_4\},
\qquad
\Acal_\mind=\{a\},
\]
and expansion rules $
\{a\}\Rightarrow b$,
$\{b\}\Rightarrow d_j$, $j=1,2,3,4$.
Thus $b$ is a common prerequisite, and once $b$ has been acquired, any of the
four target concepts $d_1,d_2,d_3,d_4$ becomes reachable in one additional
step.
Let $
\Omega=\{d_1,d_2,d_3,d_4\}$
with the uniform prior. Then $
\mathds H(\Theta)=\log 4=2$.
For each $j=1,2,3,4$, $
L_\mind(d_j)=2$,
and therefore $
\mathbb E[L_\mind(\Theta)]=2$.
Let the raw signal alphabet be $
\mc Z=\{z_b,z_1,z_2,z_3,z_4\}$,
with $
\tgt(z_b)=b$,
$\tgt(z_j)=d_j,~j=1,2,3,4$.

At the initial acquired concept set $\{a\}$, only $b$ is ordered. Hence the
parsed observation range is $
\{z_b,\bot\}$,
so $
C_\mind(\{a\})=\log 2$.
At the acquired concept set $\{a,b\}$, all five raw signals are parseable, so
the parsed observation range is $
\{z_b,z_1,z_2,z_3,z_4\}$,
and therefore $
C_\mind(\{a,b\})=\log 5$.
Thus acquiring the single prerequisite $b$ enlarges the learner effective
channel from $1$ bit to $\log 5$ bits per round.

Now suppose the teacher tries to identify the target immediately by sending
\[
Z_1=z_j
\qquad\text{when }\Theta=d_j.
\]
At the raw-signal level this would reveal the target perfectly. But at the
learner initial acquired concept set $\{a\}$, none of the targets $d_j$ is
ordered, so
\[
Y_1=\rho_\mind(Z_1,\{a\})=\bot
\qquad\text{almost surely.}
\]
Hence $
\mathds I(\Theta;Y_1\mid \Fcal_0)=0$.
Before the common prerequisite $b$ is taught, target-specific instruction is
pure erasure.

Consider instead the two-round strategy
\[
Z_1=z_b
\qquad\text{for every realization of }\Theta,
\]
and
\[
Z_2=z_j
\qquad\text{if }\Theta=d_j.
\]
After round $1$, the learner has acquired the prerequisite: $
\mc K_1=\{a,b\}$.
The posterior does not change, because the first signal is independent of
$\Theta$. At round $2$, the signal $z_j$ is parseable, so the learner observes
$Y_2=z_j$, acquires $d_j$, and identifies the target exactly. Thus $
\tau=2$ {almost surely.}

The lower bound of \Cref{thm:struct-info-bound} is therefore tight in this
example. Since $
C_\mind^{\max}=\log 5$,
one obtains
\[
\mathbb E[\tau]
\ge
\max\left\{
\mathbb E[L_\mind(\Theta)],
\frac{\mathds H(\Theta)}{C_\mind^{\max}}
\right\}
=
\max\left\{
2,\frac{2}{\log 5}
\right\}
=
2,
\]
and the strategy above attains equality.

This example shows that an optimal teacher may rationally spend an entire round
on structural preparation rather than on target-specific signaling, because
target-specific signals are useless before the common prerequisite $b$ has been
acquired. In the baseline model, the information-theoretic term is not the
binding lower bound here, since \Cref{prop:direct-target-collapse} implies that
identification costs at most one additional round once the target is
structurally reachable. The example nevertheless illustrates the central
mechanism of the section: usable information is state dependent, and teaching
may need to enlarge the learner parsed alphabet before target information can
flow.
\end{example}

\section{Structural Limits on Teaching}
\label{sec:structural-limits}

This section develops two consequences of the structural view of teaching.
First, for a fixed learner mind, the prerequisite geometry creates
threshold effects in finite-horizon teaching: below a critical time budget,
completion is impossible for every strategy, while beyond that threshold
success becomes feasible and, under mild assumptions, eventually likely.
Second, for heterogeneous learners, structural incompatibilities generate an
intrinsic inefficiency of universal broadcast curricula: a single common
sequence of signals may be forced to pay separately for prerequisites that
personalized teaching would handle individually.

Taken together, these results show that the limits of teaching are not merely
informational. They are already encoded in the combinatorial structure of the
learner prerequisite system. That structure determines both when teaching can
begin to succeed and how costly common instruction becomes across different
minds.

\subsection{Structural thresholds in teaching}
\label{sec:value}
A central question in any teaching problem is: \textit{given a fixed time
budget, what is the probability that teaching succeeds?} The prerequisite
structure of the learner determines the answer.
Below a certain
threshold, completion is impossible for every teaching strategy. Once the time
horizon exceeds that threshold, completion is no longer ruled out a priori,
and under mild assumptions the optimal fixed-horizon success probability
converges to one as the horizon grows.

This vanishing completion probability is not an approximation but a direct consequence of the structural barrier. It also has an immediate implication for resource allocation: when training budgets are scarce, distributing time evenly across learners may produce no completed learners at all, whereas concentrating the same budget on fewer learners can yield strictly positive output.

Recall the stochastic teaching model from \Cref{sec:stoc-teaching}. The target
concept $\Theta$ is drawn from a prior $\pi_0$ on $\Omega$, known to both
teacher and learner. By \Cref{def:completion}, teaching is complete at the
random time $\tau$ if both
\begin{enumerate}[nosep, label=\textnormal{(\roman*)}]
\item the learner has acquired the target concept, $\Theta\in \mc K_\tau$;
\item the learner has identified the target, $\pi_\tau(\Theta)=1$.
\end{enumerate}

We therefore ask: if the teacher is given a budget of $t$ rounds, what is the
maximal probability of completing teaching within that budget? Define
\[
\mathds V(t)
=
\sup_{\text{admissible teaching strategies}}
\mathbb P(\tau\le t).
\]
Thus $\mathds V(t)$ is the optimal success probability achievable with a time
budget of $t$ rounds, computed under the prior on $\Theta$.

Recall also that for each target $c\in\Omega$, the quantity $L_\mind(c)$
denotes the structural distance from the axiom set $\Acal_\mind$ to a reachable
acquired concept set containing $c$. Define
\[
L_{\min}
=
\min\{L_\mind(c):\pi_0(c)>0\}.
\]
This is the smallest structural distance among targets that can arise under the
prior.

For expected completion time, the baseline model also admits the upper bound
$\mathbb E[\tau]\le \mathbb E[L_\mind(\Theta)] + 1$ under
\Cref{ass:structural-signals} (\Cref{prop:direct-target-collapse}). The
fixed-horizon analysis below complements that statement by describing the
threshold structure of success probabilities as a function of the time budget.

\begin{proposition}[Zero completion below the structural threshold]
\label{prop:hard-zero}
For every $t\in\mathbb N$,
\[
\mathds V(t)\le \mathbb P\bigl(L_\mind(\Theta)\le t\bigr).
\]
In particular, $
\mathds V(t)=0$ for all $t<L_{\min}$.
\end{proposition}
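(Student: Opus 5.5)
The plan is to reduce the bound to the pathwise structural barrier of \Cref{prop:structural-barrier} and then take a supremum over strategies. Fix an arbitrary admissible teaching strategy and let $\tau$ be its completion time. I take $\tau$ to be the first time at which both conditions of \Cref{def:completion} hold, with $\tau=\infty$ if this never happens. The first step is the event inclusion
\[
\{\tau\le t\}\subseteq\{L_\mind(\Theta)\le t\}
\]
up to a null set. Given this inclusion, monotonicity of $\mathbb P$ yields $\mathbb P(\tau\le t)\le \mathbb P(L_\mind(\Theta)\le t)$. The right-hand side depends only on the prior $\pi_0$ and the mind, and not on the strategy, so taking the supremum over admissible strategies gives $\mathds V(t)\le \mathbb P(L_\mind(\Theta)\le t)$.

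For the inclusion I will invoke \Cref{prop:structural-barrier}, which gives $\tau\ge L_\mind(\Theta)$ almost surely. If $\tau$ may be infinite, the inclusion holds trivially on $\{\tau=\infty\}$. To keep the argument self-contained, I would also recall why the barrier holds on $\{\tau\le t\}$. By \Cref{lem:Kt-reachable}, each $\mc K_s$ lies in $\mathbb K_\mind$. By \Cref{def:knowledge-update}, each round adds at most one concept, and any added concept $\tgt(Y_{s+1})$ lies in $\Phi_\mind(\mc K_s)$, because parsing succeeded. Deleting the rounds in which $\mc K_s$ does not change therefore produces a chain from $\Acal_\mind$ to $\mc K_\tau$ in which every step adds one concept from $\Phi_\mind(\mc K_i)\setminus\mc K_i$. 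The length of this chain is at most $\tau$. Since $\Theta\in\mc K_\tau$, the chain is feasible for the minimization defining $L_\mind(\Theta)$ in \Cref{def:structural-distance}, and so $L_\mind(\Theta)\le\tau\le t$.

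For the second claim, suppose $t<L_{\min}$. On the event $\{\pi_0(\Theta)>0\}$, which has probability one, we have $L_\mind(\Theta)\ge L_{\min}>t$. Hence $\mathbb P(L_\mind(\Theta)\le t)=0$, and therefore $\mathds V(t)=0$.

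The main obstacle is bookkeeping rather than substance. I need to treat $\tau$ carefully as the first completion time, possibly infinite, so that the probabilities are well defined, and I need to handle the almost-sure qualifiers from \Cref{lem:Kt-reachable} and from the support of the prior. Everything else follows from the barrier proposition already in hand.
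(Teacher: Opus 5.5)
Your proposal is correct and follows the paper's proof essentially verbatim. Like the paper, you use the pathwise barrier $\tau\ge L_\mind(\Theta)$ from \Cref{prop:structural-barrier} to get $\{\tau\le t\}\subseteq\{L_\mind(\Theta)\le t\}$, take the supremum over strategies, and conclude the second claim from $L_\mind(\Theta)\ge L_{\min}>t$ almost surely under the prior. Your re-derivation of the barrier (deleting stalled rounds to get a witnessing chain of length at most $\tau$) and your treatment of $\tau=\infty$ match the paper's argument for \Cref{prop:structural-barrier} and only make the bookkeeping explicit.
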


\Cref{prop:hard-zero} shows that if the time budget is shorter than the
structural depth of every possible target, then completion is impossible. No
teaching strategy can circumvent this obstruction, because the learner cannot
be moved to a reachable acquired concept set containing the realized target in
so few rounds.

At the opposite extreme, if some admissible strategy completes teaching in
finite expected time, then the optimal fixed-horizon success probability
converges to one as the horizon grows.

\begin{proposition}[Eventual success]
\label{prop:eventual-success}
If there exists an admissible teaching strategy such that $
\mathbb E[\tau]<\infty$,
then $
\mathds V(t)\to 1$ as $t\to\infty$.
More concretely, for any such strategy,
\[
\mathds V(t)\ge 1-\frac{\mathbb E[\tau]}{t}
\qquad\text{for all } t\ge 1.
\]
\end{proposition}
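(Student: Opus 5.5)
The plan is to fix an admissible strategy with $\mathbb E[\tau]<\infty$ and apply Markov's inequality to its completion time. The bound on $\mathds V(t)$ then follows because $\mathds V(t)$ is a supremum over admissible strategies, so it dominates the success probability of this particular strategy.

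First I pin down what $\tau$ means. I take $\tau$ to be the first completion time, the least $t$ at which $\Theta\in\mc K_t$ and $\pi_t(\Theta)=1$, with $\tau=\infty$ if this never happens. Finiteness of $\mathbb E[\tau]$ forces $\mathbb P(\tau<\infty)=1$. I also check that the event $\{\tau\le t\}$ is exactly the event that teaching has been completed within budget $t$. If one instead reads $\tau$ as any completion time, I will note that completion persists after it first occurs. Acquisition persists by monotonicity of $\mc K_t$. Identification persists because, once $\pi_s(\Theta)=1$, the variable $\Theta$ is $\Fcal_s$-measurable on that event, and conditioning on the larger $\sigma$-field $\Fcal_{s'}$ cannot undo this. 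So "complete by time $t$" is unambiguous.

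Second, since $\tau\ge 0$, Markov's inequality gives
\[
\mathbb P(\tau>t)\le \mathbb P(\tau\ge t)\le \frac{\mathbb E[\tau]}{t}
\]
for every $t\ge 1$. Hence $\mathbb P(\tau\le t)\ge 1-\mathbb E[\tau]/t$, and therefore $\mathds V(t)\ge 1-\mathbb E[\tau]/t$.

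Third, combining this with the trivial bound $\mathds V(t)\le 1$ and letting $t\to\infty$ gives $\mathds V(t)\to1$. Alternatively, monotone convergence gives $\mathbb P(\tau\le t)\uparrow\mathbb P(\tau<\infty)=1$ directly.

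The only step needing care is the measurability and persistence point in the first step, which makes $\{\tau\le t\}$ coincide with completion within budget $t$. Everything else is immediate.
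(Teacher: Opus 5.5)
Your proof is correct and is the paper's argument: apply Markov's inequality to the completion time of the fixed strategy to get $\mathbb P(\tau\le t)\ge 1-\mathbb E[\tau]/t$, transfer this to $\mathds V(t)$ through the supremum, and use $\mathds V(t)\le 1$ to conclude $\mathds V(t)\to 1$. Your extra check that acquisition and identification persist once reached, so that $\{\tau\le t\}$ unambiguously means completion within budget $t$, is a careful point the paper leaves implicit, but it does not change the route.
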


Together, \Cref{prop:hard-zero,prop:eventual-success} describe the qualitative
shape of the fixed-horizon success function $\mathds V(t)$: an initial region
of structural impossibility, followed by a region in which success becomes
increasingly likely as the time budget grows.

To make the allocation implications transparent, it is useful to consider the
special case of a deterministic target.

Let $g\in \mc U_\mind$ be fixed, and suppose that $
\Theta=g$ \text{almost surely.}
Then the prior is degenerate, so $
\pi_t=\delta_g$ for all $t$.
Hence identification is automatic, and completion reduces to target acquisition
alone.

Define the target-acquisition time of $g$ by $
\tau_g=\inf\{t\ge 0:g\in \mc K_t\}$,
and define the fixed-horizon acquisition probability by
\[
\mathds V_g(t)
=
\sup_{\text{admissible teaching strategies}}
\mathbb P(\tau_g\le t).
\]

\begin{proposition}[Step function for deterministic targets]
\label{prop:deterministic-step}
Assume that the parsing map is given by \Cref{def:parsing} and that
\Cref{ass:structural-signals} holds. Then for every deterministic target
$g\in \mc U_\mind$,
\[
\mathds V_g(t)=
\begin{cases}
0, & {\rm if }~ t<L_\mind(g),\\[0.3em]
1, & {\rm otherwise. }
\end{cases}
\]
\end{proposition}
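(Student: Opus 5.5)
The proof splits into the two regimes $t<L_\mind(g)$ and $t\ge L_\mind(g)$. Both rest on one observation. With a degenerate prior $\Theta=g$ a.s., the posterior satisfies $\pi_t=\delta_g$ for all $t$. Identification is therefore automatic, and $\tau_g$ is exactly the completion time.

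\emph{Lower regime.} For $t<L_\mind(g)$, I will invoke \Cref{prop:hard-zero} with the degenerate prior. This gives $\mathds V_g(t)\le \mathbb P(L_\mind(\Theta)\le t)=\mathbf 1\{L_\mind(g)\le t\}=0$. If one prefers a direct argument, it goes as follows. By \Cref{lem:Kt-reachable} and \Cref{def:knowledge-update}, every sample path $\mc K_0,\mc K_1,\dots,\mc K_t$, after deleting the steps where $\mc K_{s+1}=\mc K_s$, is a chain in $\mathbb K_\mind$ starting at $\Acal_\mind$. Each surviving step adds one concept $u\in\Phi_\mind(\mc K_s)\setminus\mc K_s$, because parsing succeeds only for ordered targets. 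If $g\in\mc K_t$, this chain is admissible in \Cref{def:structural-distance} and has length at most $t$, so $L_\mind(g)\le t$, a contradiction. Hence $\mathbb P(\tau_g\le t)=0$ for every strategy.

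\emph{Upper regime.} For $t\ge L_\mind(g)$ I will construct a deterministic strategy. Let $L=L_\mind(g)$, and take a minimizing chain $\Acal_\mind=\mc K^*_0\subset\cdots\subset\mc K^*_L\ni g$ with increments $u_i\in\Phi_\mind(\mc K^*_i)\setminus\mc K^*_i$. This chain exists because $g\in\mc U_\mind$; one can use \Cref{thm:ordering} or the fact that $\mc U_\mind\in\mathbb K_\mind$ from \Cref{prop:reach-structure}, so the minimum is over a nonempty set. By \Cref{ass:structural-signals}, choose raw signals $z_i$ with $\tgt(z_i)=u_i$. The teacher emits $Z_{i+1}=z_i$ for $i=0,\dots,L-1$, and after that emits arbitrary signals (say $z_{L-1}$ again). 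Induction on $i$ gives $\mc K_i=\mc K^*_i$ almost surely. Indeed, if $\mc K_i=\mc K^*_i$, then $u_i\in\Phi_\mind(\mc K_i)$, so $\rho_\mind(z_i,\mc K_i)=z_i$ by \Cref{def:parsing}, and the update yields $\mc K_{i+1}=\mc K^*_i\cup\{u_i\}$. Thus $g\in\mc K_L$, and monotonicity of $\mc K_t$ gives $\tau_g\le L\le t$ surely. This strategy is admissible: its kernels are point masses that depend only on the round index. Hence $\mathds V_g(t)=1$.

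\emph{Main obstacle.} The only delicate point is the lower regime: matching the realized trajectory to the chains in \Cref{def:structural-distance} when some rounds are wasted (parse failures, or parsed signals targeting concepts already in $\mc K_s$). This is handled by discarding the non-increasing steps, which only shortens the chain. Everything else is bookkeeping.
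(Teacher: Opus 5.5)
Your proof is correct and follows the paper's argument. The lower regime comes from the structural barrier: you reach it via \Cref{prop:hard-zero} or by re-running the repeat-deletion argument of \Cref{prop:structural-barrier}, while the paper cites \Cref{prop:structural-barrier} directly. The upper regime, as in the paper, sends one signal per step of a minimizing witnessing chain, with the signals supplied by \Cref{ass:structural-signals}; your extra checks (the minimization set is nonempty, the deterministic strategy is admissible, and $\tau_g$ equals the completion time under the degenerate prior) fill in details the paper leaves implicit.
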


Thus, for a deterministic target, the fixed-horizon acquisition probability is
a step function at the structural distance $L_\mind(g)$. Below that
threshold acquisition is impossible; at and above it, acquisition can be
achieved with certainty.

\begin{remark}
The threshold structure above contrasts with benchmark models of human-capital
accumulation in which training is represented by a smooth production technology
for human capital (\textit{e.g.}, \citep{ref:benporath1967,ref:becker1964}). In
such models every marginal unit of investment yields a positive, though
possibly diminishing, return. In the present framework, prerequisite-gated
learning induces a threshold technology: a teaching signal has no effect until
the learner prerequisite structure admits the target concept, after which
additional signals become productive. The induced production technology is
therefore \textit{non}-concave.
\end{remark}

This threshold structure has direct implications for the allocation of training
resources. Consider a decision maker who must allocate a fixed instructional
budget across learners, for example a firm training workers in a specific skill
or an instructor allocating tutoring hours across students. The planner has a
total budget of $B$ instructional rounds and must decide how to distribute them
across $N$ learners.

\begin{proposition}[Allocation under structural thresholds]
\label{prop:capital-destruction}
Assume that the parsing map is given by \Cref{def:parsing} and that
\Cref{ass:structural-signals} holds. Fix a deterministic target $
g\in \mc U_\mind$
with $
L=L_\mind(g)\ge 1$.
Consider $N$ identical learners and a total budget of $B\in\mathbb N$
instructional rounds.
\begin{enumerate}[nosep, label=\textnormal{(\roman*)}]
\item Any allocation that gives every learner fewer than $L$ rounds yields zero
completed learners.
\item There exists an allocation that gives $L$ rounds to
$\min\{N, \lfloor B/L\rfloor\}$ learners and $0$ rounds to the remaining learners, and
under this allocation $
\min \{N , \left\lfloor B/L\right\rfloor\}$
learners complete.
\end{enumerate}

In particular, if $B<NL$ and the budget is spread so that every learner
receives fewer than $L$ rounds, then total output is zero, whereas the
concentrated allocation in \textnormal{(ii)} yields strictly positive output
whenever $B\ge L$.
\end{proposition}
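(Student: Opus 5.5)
The plan is to reduce everything to \Cref{prop:deterministic-step}, applied learner by learner. Since the learners are identical and their teaching interactions run separately, each learner is its own copy of the single-learner deterministic-target model. A learner allotted $t_i$ rounds completes exactly when its target-acquisition time satisfies $\tau_g\le t_i$. Identification is automatic because $\pi_t=\delta_g$, so completion coincides with acquisition of $g$. The output of an allocation $(t_1,\dots,t_N)$ with $\sum_i t_i\le B$ is then the number of learners with $\tau_g^{(i)}\le t_i$. Part (i) and part (ii) then follow from the two branches of the step function $\mathds V_g$.

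For part (i), suppose every $t_i<L$. By \Cref{prop:deterministic-step}, or more directly by the structural barrier in \Cref{prop:structural-barrier}, we have $\mathbb P(\tau_g^{(i)}\le t_i)\le \mathds V_g(t_i)=0$ for each learner and under any admissible strategy. The underlying pathwise reason is worth stating: each round adds at most one concept, and by \Cref{lem:Kt-reachable} the process stays in $\mathbb K_\mind$. So $g\in\mc K_{t}$ forces a witnessing chain of length at most $t$, which means $t\ge L_\mind(g)$. A union bound over the $N$ learners then gives zero completed learners almost surely. Hence the expected output is zero, and in fact the realized output is zero.

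For part (ii), set $m=\min\{N,\lfloor B/L\rfloor\}$. Give $L$ rounds to each of the first $m$ learners and none to the rest. This is feasible because $mL\le \lfloor B/L\rfloor L\le B$. For each funded learner we need the upper branch $\mathds V_g(L)=1$, and we want it attained, not only as a supremum. I would make this explicit from the proof idea of \Cref{prop:deterministic-step}. Take a minimizing chain $\Acal_\mind=\mc K_0\subset\cdots\subset\mc K_L\ni g$ from \Cref{def:structural-distance}, with $u_i\in\Phi_\mind(\mc K_i)\setminus\mc K_i$. By \Cref{ass:structural-signals}, choose signals $z_i$ with $\tgt(z_i)=u_i$, and let the teacher emit $z_0,\dots,z_{L-1}$ deterministically. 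By induction, $\mc K_t$ equals the chain's $t$-th set. Each $z_t$ is parsed by \Cref{def:parsing}, since $u_t\in\Phi_\mind(\mc K_t)$, and the update rule of \Cref{def:knowledge-update} then adds $u_t$. This yields $g\in\mc K_L$ with certainty, so each funded learner completes and $m$ learners complete.

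The final remark follows by combining the two parts. If $B\ge L$, then $\lfloor B/L\rfloor\ge1$, and $N\ge1$, so $m\ge1$. The main point requiring care is the attainment step in (ii). The supremum $\mathds V_g(L)=1$ must be realized by an explicit deterministic strategy so that completion is sure rather than approximate, and the explicit curriculum above handles this. A secondary point is stating clearly that learners are independent copies of the single-learner model, so that per-learner bounds combine additively into the output count.
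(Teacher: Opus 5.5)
Your proposal is correct and follows the paper's proof. Both parts are obtained learner by learner from \Cref{prop:deterministic-step}, with the same allocation and the same feasibility check $L\min\{N,\lfloor B/L\rfloor\}\le B$, and your explicit curriculum for attaining $\mathds V_g(L)=1$ simply unpacks that proposition's proof. For the exact count in (ii), you should also state, as the paper does, that the unfunded learners receive $0<L$ rounds and so, by your part (i) argument, complete with probability zero.
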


\Cref{prop:capital-destruction} shows that evenly spreading a fixed training
budget can waste the entire budget when every learner remains below the
structural threshold. By contrast, concentrating the same budget on fewer
learners allows those learners to cross the threshold and produce strictly
positive output. The source of this effect is structural: for a deterministic
target, additional training time has no effect until the prerequisite
threshold $L_\mind(g)$ is reached, at which point completion becomes possible.
The zero-output region is therefore not imposed from outside the model but is a
direct consequence of the learner prerequisite geometry.

For random targets, the step-function structure need not persist,
because different targets may have different structural depths. What remains is
the zero-completion phenomenon from \Cref{prop:hard-zero}: if every learner
receives fewer than $
L_{\min}=\min\{L_\mind(c):\pi_0(c)>0\}$,
rounds, then the completion probability is zero regardless of the teaching
strategy. The qualitative allocation lesson therefore extends beyond the
deterministic case: if the available budget is spread so thinly that every
learner remains below the relevant structural threshold, no learner completes.

\subsection{Limits of universal broadcast curricula}
\label{sec:impossibility}

The preceding subsection concerned a single learner mind. We now turn to
\textit{heterogeneous} learners whose prerequisite structures differ. In that setting, a
teacher restricted to a single broadcast curriculum cannot adapt instruction to
individual minds. The next theorem shows that this restriction carries a
structural cost: even when each learner can be taught efficiently by a
personalized curriculum, any common broadcast may be forced to pay a linear
penalty in the number of learner types.

\begin{theorem}[Linear broadcast penalty for incompatible minds]
\label{thm:impossibility}
Fix integers $k\ge 2$ and $L\ge 2$. Then one can construct
\begin{itemize}[nosep]
\item a finite concept space $\mc C$,
\item a common axiom set $\Acal\subseteq \mc C$,
\item a finite raw-signal alphabet $\mc Z$ together with a signal target map $
\tgt:\mc Z\to \mc C$,
\item minds $\mind_1,\dots,\mind_k$ on $\mc C$ with common axiom set $
\Acal_{\mind_i}=\Acal$, $i=1,\dots,k$,
but pairwise distinct rule sets $\mathcal E_{\mind_i}$,
\item and a common deterministic target concept $g\in \mc C$,
\end{itemize}
such that:
\begin{enumerate}[label=\textnormal{(\roman*)}]
\item for each $i\in\{1,\dots,k\}$, there exists a valid ordered curriculum for
$\mind_i$ of length $L$ whose final acquired concept set contains $g$;
\item if a common broadcast sequence $
\Gamma=(z_1,\dots,z_T)\in \mc Z^T$
is presented to all $k$ minds, and if the induced acquired concept processes
start from $
\mc K_0^{(i)}=\Acal$, $i=1,\dots,k$,
and evolve according to
\[
\mc K_{t+1}^{(i)}
=
\begin{cases}
\mc K_t^{(i)}\cup\{\tgt(z_{t+1})\},
& \text{if }\tgt(z_{t+1})\in \Phi_{\mind_i}(\mc K_t^{(i)}),\\[1mm]
\mc K_t^{(i)},
& \text{otherwise,}
\end{cases}
\qquad t=0,\dots,T-1,
\]
then the condition $
g\in \mc K_T^{(i)}$ for every $ i=1,\dots,k$
implies $
T\ge k(L-1)+1;$
\item There exists a common broadcast sequence of
length $k(L-1)+1$ for which $
g\in \mc K_{k(L-1)+1}^{(i)}$ for every $ i=1,\dots,k$.
\end{enumerate}
\end{theorem}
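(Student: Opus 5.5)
The plan is an explicit construction in which each mind has a \emph{private} prerequisite chain of length $L-1$ leading to a shared target. Let
\[
\mc C=\{a,g\}\cup\{x_{i,j}: 1\le i\le k,\ 1\le j\le L-1\},\qquad \Acal=\{a\},
\]
and take one raw signal per concept. That is, $\mc Z=\{z_c: c\in\mc C\}$ with $\tgt(z_c)=c$. Mind $\mind_i$ has exactly the rules
\[
\{a\}\Rightarrow x_{i,1},\qquad \{x_{i,j}\}\Rightarrow x_{i,j+1}\ (1\le j\le L-2),\qquad \{x_{i,L-1}\}\Rightarrow g .
\]
Since $L\ge 2$, each rule set contains a rule concluding $x_{i,1}$ and no rule concluding $x_{i',1}$ for $i'\ne i$. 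So the rule sets are pairwise distinct. Part (i) is then immediate: the curriculum $(\{a\},x_{i,1}),(\{x_{i,1}\},x_{i,2}),\dots,(\{x_{i,L-1}\},g)$ has length $L$, is valid by \Cref{def:valid-curriculum}, and ends with $g$ acquired.

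For part (ii), I first record two facts about the broadcast dynamics. First, every concept in $\mc K_T^{(i)}\setminus\Acal$ equals $\tgt(z_t)$ for some round $t$ at which it was newly added. Second, at that round it lay in $\Phi_{\mind_i}(\mc K_{t-1}^{(i)})$. Because each concept of $\mind_i$ outside $\Acal$ has a unique rule with a singleton premise, a backward induction works. If $g\in\mc K_T^{(i)}$, then $g$ was added at some round $t_g^{(i)}$, and at that time $x_{i,L-1}\in\mc K^{(i)}_{t_g^{(i)}-1}$. That concept was in turn added at an earlier round where $x_{i,L-2}$ was present, and so on down to $x_{i,1}$. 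Hence for every $i$ and $j$ there is a round $t_{i,j}\le T$ with $\tgt(z_{t_{i,j}})=x_{i,j}$, and there is a round $t_g$ with $\tgt(z_{t_g})=g$. Each round has a single target concept, and the concepts $x_{i,j}$ and $g$ are pairwise distinct. So these $k(L-1)+1$ rounds are pairwise distinct, which gives $T\ge k(L-1)+1$. (This is the structural barrier of \Cref{prop:structural-barrier}, applied jointly across minds whose prerequisite concepts are disjoint.)

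For part (iii), broadcast
\[
z_{x_{1,1}},\dots,z_{x_{1,L-1}},\ z_{x_{2,1}},\dots,z_{x_{k,L-1}},\ z_g ,
\]
which has length $k(L-1)+1$. For mind $\mind_i$, a signal targeting $x_{i',j}$ with $i'\neq i$ is never ordered. Indeed, $\mind_i$ has no rule concluding $x_{i',j}$, and by induction $x_{i',j}\notin\mc K_t^{(i)}$. So such signals collapse and leave the state unchanged. Meanwhile the block $z_{x_{i,1}},\dots,z_{x_{i,L-1}}$ is parsed in order, because each concept's unique premise was acquired in the previous block step. After all blocks, $x_{i,L-1}\in\mc K^{(i)}$, so the final $z_g$ is ordered for every mind simultaneously.

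The only step needing care is the backward induction in (ii). It must use that $\mc K^{(i)}_t$ grows only through parsed signals (the update rule), and that in $\mind_i$ the concept $g$ and each $x_{i,j}$ have a \emph{unique} rule. That uniqueness is what forces the entire private chain to appear in the broadcast. Everything else is routine bookkeeping.
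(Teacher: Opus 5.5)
Your proposal is correct and matches the paper's proof: the paper uses the same private-chain construction (its $p_{i,j}$ are your $x_{i,j}$), the same personalized curricula for (i), the same block-by-block broadcast for (iii), and the same counting for (ii). In (ii), your backward induction through the unique singleton-premise rules, together with the observation that distinct required target concepts force distinct rounds, is a more explicit version of the paper's argument that each private-prerequisite signal can advance at most one mind.
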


\Cref{thm:impossibility} is an existence result. For any prescribed number
$k$ of learner types and any prescribed personalized teaching length $L$, one
can construct $k$ minds sharing the same axiom set and the same deterministic
target concept $g$, but having different prerequisite structures. For each
mind, the target can be acquired in $L$ personalized rounds. However,
every common broadcast sequence that succeeds for all minds must have length at
least $
k(L-1)+1$.

The source of the penalty is purely structural. Each mind possesses a
private prerequisite chain leading to the target concept, and signals that advance one
mind along its chain are unparseable for the others. Consequently a
universal broadcast cannot reuse prerequisite rounds across learner
types; it must effectively pay for each private chain separately. This
is what generates the linear dependence of the required broadcast length
on the number of learner types.

\paragraph{Acknowledgments.}
The author thanks Teng Andrea Xu for helpful discussions.

\begin{center}
  \vspace*{2\baselineskip} 
  {\Huge \bfseries Appendix} 
  \vspace{2\baselineskip}   
\end{center}

\addtocontents{toc}{\protect\setcounter{tocdepth}{3}}

\noindent 
This document contains supplementary material for the paper
\emph{A Mathematical Theory of Understanding}.

{
  \renewcommand{\contentsname}{} 
  \vspace*{-2em}                 
  \tableofcontents
}
\appendix

\section{Proofs}
\label{app:proofs}

This appendix collects the proofs omitted from the main text, organized by the
section in which the corresponding result appears. Supplementary lemmas are used in the proofs but not stated in the main text are
included where they arise.

\subsection{Proofs for Section~\ref{sec:axioms}}

\subsubsection*{Proof of \Cref{lem:mono}}
Let $c\in \Phi_\mind(\mc K)$. If $c\in \mc K$, then $c\in \mc K'\subseteq \Phi_\mind(\mc K')$. Otherwise there exists $(\mc S,c)\in \mathcal E_\mind$ with $\mc S\subseteq \mc K$. Since $\mc K\subseteq \mc K'$, one also has $\mc S\subseteq \mc K'$, hence $c\in \Phi_\mind(\mc K')$.
\qed

\begin{lemma}[Directed-union continuity]\label{lem:finitary}
If $(\mc K_\alpha)_{\alpha\in\mc A}$ is a nonempty directed family, then $
\Phi_\mind\left(\bigcup_{\alpha\in\mc A}\mc K_\alpha\right)
=
\bigcup_{\alpha\in\mc A}\Phi_\mind(\mc K_\alpha)$.
\end{lemma}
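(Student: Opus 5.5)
The plan is to prove the two inclusions separately, starting with $\bigcup_{\alpha}\Phi_\mind(\mc K_\alpha)\subseteq \Phi_\mind\bigl(\bigcup_\alpha \mc K_\alpha\bigr)$. This direction needs only \Cref{lem:mono}: each $\mc K_\alpha$ is contained in the union, so $\Phi_\mind(\mc K_\alpha)\subseteq\Phi_\mind\bigl(\bigcup_\beta\mc K_\beta\bigr)$ for every $\alpha$. Taking the union over $\alpha$ gives the inclusion, and directedness plays no role here.

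For the reverse inclusion, fix $c\in\Phi_\mind\bigl(\bigcup_\alpha\mc K_\alpha\bigr)$ and split according to \Cref{def:one-step}.

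In the first case, $c$ lies in the union itself. Then $c\in\mc K_\alpha$ for some $\alpha$, and extensivity gives $c\in\Phi_\mind(\mc K_\alpha)$.

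In the second case, there is a rule $(\mc S,c)\in\mathcal E_\mind$ with $\mc S\subseteq\bigcup_\alpha\mc K_\alpha$. Here I use that $\mc S$ is finite, say $\mc S=\{s_1,\dots,s_n\}$. For each $j$ choose an index $\alpha_j$ with $s_j\in\mc K_{\alpha_j}$. Directedness, applied repeatedly (by induction on $n$), yields a single index $\beta$ with $\mc K_{\alpha_j}\subseteq\mc K_\beta$ for all $j$. Hence $\mc S\subseteq\mc K_\beta$, so the rule fires at stage $\beta$ and $c\in\Phi_\mind(\mc K_\beta)$.

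One edge case needs care. If $n=0$, that is $\mc S=\varnothing$, no upper bound is needed: any index works, and one exists because the family is nonempty. This is precisely where the nonemptiness hypothesis enters.

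The only substantive step is extracting a common upper bound $\beta$ for the finitely many witnesses $\alpha_1,\dots,\alpha_n$. This is where both the finitary restriction on $\mathcal E_\mind$ and the directedness of the family are used.

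I would state directedness in the form "every pair has an upper bound in the family under inclusion." Extending it to finite subsets is then a routine induction, and no other obstacle is expected.
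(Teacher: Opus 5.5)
Your proof is correct and follows the paper's argument: monotonicity (\Cref{lem:mono}) gives $\supseteq$. For $\subseteq$, you split on whether $c$ already lies in the union or comes from a rule $(\mc S,c)$ with finite $\mc S$, and in the latter case directedness places $\mc S$ inside a single $\mc K_\beta$. Your explicit handling of $\mc S=\varnothing$, where nonemptiness of the family is what supplies an index, is a detail the paper leaves implicit.
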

\begin{proof}
The inclusion $\supseteq$ follows from monotonicity of $\Phi_\mind$.

For the reverse inclusion, let $c\in \Phi_\mind(\bigcup_\alpha \mc K_\alpha)$. If $c\in \bigcup_\alpha \mc K_\alpha$, then $c\in \Phi_\mind(\mc K_\alpha)$ for some $\alpha \in \mc A$.

Otherwise there exists a finite set $\mc S\subseteq \bigcup_\alpha \mc K_\alpha$ with $(\mc S,c)\in \mathcal E_\mind$. For each $s\in \mc S$, choose $\alpha_s$ such that $s\in \mc K_{\alpha_s}$. Because the family is directed and $\mc S$ is finite, there exists $\gamma$ with $
\bigcup_{s\in\mc S}\mc K_{\alpha_s}\subseteq \mc K_\gamma$.
Hence $\mc S\subseteq \mc K_\gamma$, so $c\in \Phi_\mind(\mc K_\gamma)$.
\end{proof}

\subsubsection*{Proof of \Cref{prop:closure-char}}
(i) We first show that the collection of fixed points of $\Phi_{\mathfrak{m}}$ containing $\mathcal{K}$ is non-empty. Consider the entire concept space $\mathcal{C}$. By extensiveness, $\mathcal{C} \subseteq \Phi_{\mathfrak{m}}(\mathcal{C})$. For the reverse inclusion: every expansion rule $(\mathcal{S}, c) \in \mathcal{E}_{\mathfrak{m}}$ has $c \in \mathcal{C}$ by definition, so $\Phi_{\mathfrak{m}}(\mathcal{C}) \subseteq \mathcal{C}$. Together, $\Phi_{\mathfrak{m}}(\mathcal{C})=\mathcal{C}$, and clearly $\mc{K} \subseteq \mathcal{C}$. So $\mathcal{C}$ is a fixed point containing $\mathcal{K}$.

By \Cref{def:closure}, $\operatorname{cl}_{\mathfrak{m}}(\mathcal{K})=\bigcap\left\{\mathcal{F} \subseteq \mathcal{C}: \mathcal{K} \subseteq \mathcal{F}, \Phi_{\mathfrak{m}}(\mathcal{F})=\mathcal{F}\right\}$. The intersection is over a non-empty collection, so $\mathrm{cl}_{\mathfrak{m}}(\mathcal{K})$ is well-defined. We now show it is itself a fixed point. For any fixed point $\mathcal{F} \supseteq \mathcal{K}$ in the collection, $\operatorname{cl}_{\mathfrak{m}}(\mathcal{K}) \subseteq \mathcal{F}$, so monotonicity gives $\Phi_{\mathfrak{m}}\left(\operatorname{cl}_{\mathfrak{m}}(\mathcal{K})\right) \subseteq \Phi_{\mathfrak{m}}(\mathcal{F})=\mathcal{F}$. Since this holds for every such $\mathcal{F}$, we get $\Phi_{\mathfrak{m}}\left(\operatorname{cl}_{\mathfrak{m}}(\mathcal{K})\right) \subseteq \mathrm{cl}_{\mathfrak{m}}(\mathcal{K})$. Extensiveness gives the other direction: $\operatorname{cl}_{\mathfrak{m}}(\mathcal{K}) \subseteq \Phi_{\mathfrak{m}}\left(\operatorname{cl}_{\mathfrak{m}}(\mathcal{K})\right)$. Together: $\Phi_{\mathfrak{m}}\left(\operatorname{cl}_{\mathfrak{m}}(\mathcal{K})\right)=\operatorname{cl}_{\mathfrak{m}}(\mathcal{K})$.

\noindent (ii) By extensiveness, the sequence $\mathcal{K} \subseteq \Phi_{\mathfrak{m}}(\mathcal{K}) \subseteq \Phi_{\mathfrak{m}}^2(\mathcal{K}) \subseteq \cdots$ is non-decreasing. Let $\mathcal{L}=\cup_{n=0}^{\infty} \Phi_{\mathfrak{m}}^n(\mathcal{K})$. By \Cref{lem:finitary}:

\[
\Phi_{\mathfrak{m}}(\mathcal{L})=\Phi_{\mathfrak{m}}\left(\bigcup_{n=0}^{\infty} \Phi_{\mathfrak{m}}^n(\mathcal{K})\right)=\bigcup_{n=0}^{\infty} \Phi_{\mathfrak{m}}^{n+1}(\mathcal{K})=\mathcal{L}
\]

so $\mathcal{L}$ is a fixed point of $\Phi_{\mathfrak{m}}$ containing $\mathcal{K}$. Since $\operatorname{cl}_{\mathfrak{m}}(\mathcal{K})$ is the least such fixed point, $\operatorname{cl}_{\mathfrak{m}}(\mathcal{K}) \subseteq \mathcal{L}$. Conversely, we show by induction that $\Phi_{\mathfrak{m}}^n(\mathcal{K}) \subseteq \operatorname{cl}_{\mathfrak{m}}(\mathcal{K})$ for all $n \geq 0$. Base case: $\Phi_{\mathfrak{m}}^0(\mathcal{K})=\mathcal{K} \subseteq \mathrm{cl}_{\mathfrak{m}}(\mathcal{K})$ by definition. Inductive step: $\operatorname{suppose} \Phi_{\mathfrak{m}}^n(\mathcal{K}) \subseteq \mathrm{cl}_{\mathfrak{m}}(\mathcal{K})$. Since $\mathrm{cl}_{\mathfrak{m}}(\mathcal{K})$ is a fixed point, $\Phi_{\mathfrak{m}}\left(\mathrm{cl}_{\mathfrak{m}}(\mathcal{K})\right)=\mathrm{cl}_{\mathfrak{m}}(\mathcal{K})$. Monotonicity then gives $\Phi_{\mathfrak{m}}^{n+1}(\mathcal{K})=\Phi_{\mathfrak{m}}\left(\Phi_{\mathfrak{m}}^n(\mathcal{K})\right) \subseteq \Phi_{\mathfrak{m}}\left(\mathrm{cl}_{\mathfrak{m}}(\mathcal{K})\right)=\mathrm{cl}_{\mathfrak{m}}(\mathcal{K})$. Since $\Phi_{\mathfrak{m}}^n(\mathcal{K}) \subseteq \operatorname{cl}_{\mathfrak{m}}(\mathcal{K})$ for every $n$, we get $\mathcal{L}=\cup_{n=0}^{\infty} \Phi_{\mathfrak{m}}^n(\mathcal{K}) \subseteq \operatorname{cl}_{\mathfrak{m}}(\mathcal{K})$.

(iii) If $\mathcal{C}$ is finite, the chain $\mathcal{K} \subseteq \Phi_{\mathfrak{m}}^1(\mathcal{K}) \subseteq \Phi_{\mathfrak{m}}^2(\mathcal{K}) \subseteq \cdots$ is an increasing sequence of subsets of $\mathcal{C}$. Whenever $\Phi_{\mathfrak{m}}^{n+1}(\mathcal{K}) \neq \Phi_{\mathfrak{m}}^n(\mathcal{K})$, the inclusion is strict, so $\left|\Phi_{\mathfrak{m}}^{n+1}(\mathcal{K})\right| \geq\left|\Phi_{\mathfrak{m}}^n(\mathcal{K})\right|+1$. Since each set has at most $|\mathcal{C}|$ elements, strict growth can occur at most $|\mathcal{C}|-|\mathcal{K}| \leq|\mathcal{C}|$ times. Therefore $\Phi_{\mathfrak{m}}^N(\mathcal{K})=\Phi_{\mathfrak{m}}^{N+1}(\mathcal{K})$ for some $N \leq|\mathcal{C}|$, and the chain stabilizes: $\operatorname{cl}_{\mathfrak{m}}(\mathcal{K})=\Phi_{\mathfrak{m}}^N(\mathcal{K})$.
\qed

\subsubsection*{Proof of \Cref{prop:U-unique}}
We first show that $\mathcal{U}_{\mathfrak{m}}=\mathrm{cl}_{\mathfrak{m}}\left(\mathcal{A}_{\mathfrak{m}}\right)$ satisfies (i) to (iii).
(i) By \Cref{prop:closure-char}(ii), $\mathrm{cl}_{\mathfrak{m}}\left(\mathcal{A}_{\mathfrak{m}}\right)=\bigcup_{n \geq 0} \Phi_{\mathfrak{m}}^n\left(\mathcal{A}_{\mathfrak{m}}\right)$. Since $\Phi_{\mathfrak{m}}^0\left(\mathcal{A}_{\mathfrak{m}}\right)=\mathcal{A}_{\mathfrak{m}}$, it follows that $\mathcal{A}_{\mathfrak{m}} \subseteq \mathrm{cl}_{\mathfrak{m}}\left(\mathcal{A}_{\mathfrak{m}}\right)$.
(ii) Let $(\mathcal{S}, c) \in \mathcal{E}_{\mathfrak{m}}$ and suppose $\mathcal{S} \subseteq \mathrm{cl}_{\mathfrak{m}}\left(\mathcal{A}_{\mathfrak{m}}\right)$. By \Cref{prop:closure-char}(i), $\mathrm{cl}_{\mathfrak{m}}\left(\mathcal{A}_{\mathfrak{m}}\right)$ is a fixed point of $\Phi_{\mathfrak{m}}$, so $\Phi_{\mathfrak{m}}\left(\operatorname{cl}_{\mathfrak{m}}\left(\mathcal{A}_{\mathfrak{m}}\right)\right)=\operatorname{cl}_{\mathfrak{m}}\left(\mathcal{A}_{\mathfrak{m}}\right)$. Since $(\mathcal{S}, c) \in \mathcal{E}_{\mathfrak{m}}$ and $\mathcal{S} \subseteq \operatorname{cl}_{\mathfrak{m}}\left(\mathcal{A}_{\mathfrak{m}}\right)$, the definition of $\Phi_{\mathfrak{m}}$ gives $c \in \Phi_{\mathfrak{m}}\left(\mathrm{cl}_{\mathfrak{m}}\left(\mathcal{A}_{\mathfrak{m}}\right)\right)=\mathrm{cl}_{\mathfrak{m}}\left(\mathcal{A}_{\mathfrak{m}}\right)$.
(iii) Let $\mathcal{F} \subseteq \mathcal{C}$ satisfy (i) and (ii). Then $\mathcal{A}_{\mathfrak{m}} \subseteq \mathcal{F}$. Moreover, if $c \in \Phi_{\mathfrak{m}}(\mathcal{F})$, then either $c \in \mathcal{F}$, or there exists $(\mathcal{S}, c) \in \mathcal{E}_{\mathfrak{m}}$ with $\mathcal{S} \subseteq \mathcal{F}$, in which case (ii) gives $c \in \mathcal{F}$. Hence $\Phi_{\mathfrak{m}}(\mathcal{F}) \subseteq \mathcal{F}$. By extensiveness of $\Phi_{\mathfrak{m}}$, we also have $\mathcal{F} \subseteq \Phi_{\mathfrak{m}}(\mathcal{F})$. Therefore $\Phi_{\mathfrak{m}}(\mathcal{F})=\mathcal{F}$, so $\mathcal{F}$ is a fixed point of $\Phi_{\mathfrak{m}}$ containing $\mathcal{A}_{\mathfrak{m}}$. Since $\operatorname{cl}_{\mathfrak{m}}\left(\mathcal{A}_{\mathfrak{m}}\right)$ is the intersection of all such fixed points, we conclude that $\operatorname{cl}_{\mathfrak{m}}\left(\mathcal{A}_{\mathfrak{m}}\right) \subseteq \mathcal{F}$. Thus $\operatorname{cl}_{\mathfrak{m}}\left(\mathcal{A}_{\mathfrak{m}}\right)$ is the smallest set satisfying (i) and (ii).

Finally, suppose $\mathcal{U}$ and $\mathcal{U}^{\prime}$ both satisfy (i)-(iii). Since $\mathcal{U}^{\prime}$ satisfies (i) and (ii), the minimality property (iii) for $\mathcal{U}$ implies $\mathcal{U} \subseteq \mathcal{U}^{\prime}$. By symmetry, $\mathcal{U}^{\prime} \subseteq \mathcal{U}$. Hence $\mathcal{U}=\mathcal{U}^{\prime}$.
\qed

\subsubsection*{Proof of \Cref{thm:closure-deriv}}
The proof of \Cref{thm:closure-deriv} relies on the finiteness of derivation trees, which we establish first. 
\begin{lemma}[Finiteness of derivations]\label{lem:deriv-finite}
Every derivation tree in the sense of \Cref{def:derivation} is finite.
\end{lemma}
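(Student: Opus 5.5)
The plan is to use the well-foundedness built into \Cref{def:derivation}, together with the fact that every rule node has only finitely many children because each prerequisite set in $\mathcal E_\mind$ is finite. A derivation tree is therefore \emph{finitely branching}: base nodes are leaves, and a rule node labeled $c'$ has exactly $|\mc S|<\infty$ children for the rule $(\mc S,c')$ it uses. Well-foundedness I read as saying the tree has no infinite descending path from the root. The conclusion then follows from König's lemma: a finitely branching tree with no infinite path is finite.

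To keep the argument self-contained, I would give the contrapositive directly instead of citing König's lemma. Suppose the tree $T$ is infinite. I construct an infinite path $v_0,v_1,\dots$ with $v_0$ the root, keeping the invariant that the subtree below $v_n$ is infinite. This holds at $n=0$ by assumption. For the step, the subtree below $v_n$ is the disjoint union of $\{v_n\}$ and the subtrees below its finitely many children. If every one of those child subtrees were finite, the subtree below $v_n$ would be finite, which is a contradiction. Also, $v_n$ cannot be a base node, since a leaf has a one-element subtree. So some child $v_{n+1}$ has an infinite subtree, and we choose it (dependent choice). The resulting infinite descending path contradicts well-foundedness, so $T$ is finite.

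An alternative that avoids choice is induction over the well-founded tree. The claim is that every node's subtree is finite. Leaves have one-element subtrees, and a rule node's subtree is $\{v\}$ together with finitely many finite subtrees, by the induction hypothesis. Well-founded induction is legitimate exactly because the child relation has no infinite descending chains, so the root's subtree, which is $T$, is finite.

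The only real obstacle is interpretive: I must fix what ``well-founded rooted tree'' means (no infinite root-to-descendant path, equivalently the parent-to-child relation admits well-founded induction) and note explicitly where finiteness of $\mc S$ enters. Without that finiteness the lemma fails: a root with infinitely many leaf children is well-founded but infinite.
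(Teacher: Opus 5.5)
Your proposal is correct and follows the paper's route: finiteness of each prerequisite set makes the derivation tree finitely branching, and K\"onig's lemma then turns an infinite tree into an infinite descending path, which contradicts well-foundedness. Your explicit path construction, and the alternative by well-founded induction, just spell out the K\"onig step that the paper cites.
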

\begin{proof}[Proof of \Cref{lem:deriv-finite}]
Assume for contradiction that the derivation tree is infinite. By \Cref{def:derivation}(i), every node has finitely many children, since each prerequisite set is finite. Thus the tree is finitely branching. By K\"onig's lemma \cite[Lemma 8.1.2]{ref:diestel_graph_theory}, every infinite finitely branching tree has an infinite descending path. This contradicts the well-foundedness requirement in \Cref{def:derivation}. Therefore the tree is finite.
\end{proof}

\begin{proof}[Proof of \Cref{thm:closure-deriv}]
For ($\Leftarrow$), suppose $\mc K\vdash_\mind c$. By \Cref{lem:deriv-finite}, the derivation tree is finite. We argue by induction on its height. If the height is $0$, then either $c\in \mc K$, or $(\varnothing,c)\in \mathcal E_\mind$; in either case $c\in \cl_\mind(\mc K)$. For the induction step, if the root uses a rule $(\mc S,c)$ and each child label $s\in \mc S$ has a derivation of smaller height, then by the induction hypothesis $\mc S\subseteq \cl_\mind(\mc K)$, hence $c\in \Phi_\mind(\cl_\mind(\mc K))=\cl_\mind(\mc K)$.

For ($\Rightarrow$), let $
\mc D=\{d\in \mc C:\mc K\vdash_\mind d\}$. We show that $\mc D$ is a fixed point of $\Phi_\mind$ containing $\mc K$.
First, $\mc K\subseteq \mc D$: for any $c \in \mc K$, the single-node tree with root labeled $c$ is a valid derivation, so $c \in \mc D$.
Next, we will show that $\Phi_\mind(\mc D) \subseteq \mc D$. Let $c \in \Phi_\mind(\mc D)$. If $c\notin \mc D$, then there exists $(\mc S,c)\in \mathcal E_\mind$ with $\mc S\subseteq \mc D$. For each $s\in \mc S$, choose a derivation of $s$ from $\mc K$ and attach them below a new root labeled $c$. This gives a derivation of $c$, contradiction.
If $\mc S = \varnothing$, the new root has no children and still forms a valid derivation. Thus, $c \in \mc D$, and $\Phi_\mind(\mc D)\subseteq \mc D$. By extensiveness, $\mc D\subseteq \Phi_\mind(\mc D)$, so $\mc D$ is a fixed point containing $\mc K$. Therefore $\cl_\mind(\mc K)\subseteq \mc D$. By definition of $\mc D$, this means that if $c \in \cl_\mind(\mc K)$, then $\mc K \vdash_\mind c$.
\end{proof}

\subsubsection*{Proof of \Cref{thm:alg-equiv}}
We first recall the abstract definition of algebraic closure operator.
\begin{definition}[Algebraic closure operator]\label{def:alg-closure}
Let $\mc X$ be a set. A map $f:2^{\mc X}\to 2^{\mc X}$ is an \emph{algebraic closure operator} if it satisfies extension, monotonicity, idempotence, and the finitary property:
if $c\in f(\mc K)$, then $c\in f(\mc S)$ for some finite $\mc S\subseteq \mc K$.
\end{definition}

\begin{proof}[Proof of \Cref{thm:alg-equiv}]For \textnormal{(i)}, extension, monotonicity, and idempotence of $\cl_\mind$ follow from \Cref{prop:closure-char}. Finitariness follows from \Cref{thm:closure-deriv} and \Cref{lem:deriv-finite}: if $c\in \cl_\mind(\mc K)$, then there is a finite derivation tree using only finitely many base labels from $\mc K$.

For \textnormal{(ii)}, define $
\mathcal E
=
\{(\mc S,c): \mc S\subseteq \mc X \text{ finite and } c\in f(\mc S)\setminus \mc S\}$.
Let $g$ be the closure operator induced by $\mathcal E$ as in the theorem statement. We show $g(\mc K)=f(\mc K)$ for every $\mc K\subseteq \mc X$.

First, $g(\mc K)\subseteq f(\mc K)$ because $f(\mc K)$ is a fixed point of $\Psi_{\mathcal E}$ containing $\mc K$: if $c\in \Psi_{\mathcal E}(f(\mc K))$, then either $c\in f(\mc K)$ or else there exists $(\mc S,c)\in\mathcal E$ with $\mc S\subseteq f(\mc K)$, which implies $c\in f(\mc S)\subseteq f(f(\mc K))=f(\mc K)$.

Conversely, if $c\in f(\mc K)$, then by algebraicity there exists a finite $\mc S_0\subseteq \mc K$ such that $c\in f(\mc S_0)$. If $c\in \mc S_0$, then $c\in \mc K\subseteq g(\mc K)$. Otherwise $(\mc S_0,c)\in \mathcal E$, and since $\mc S_0\subseteq \mc K\subseteq g(\mc K)$, the rule fires inside $g(\mc K)$, so $c\in g(\mc K)$.
\end{proof}

\subsubsection*{Proof of \Cref{thm:ordering}}
If $c^*\in \Acal_\mind$, the empty curriculum works. Assume therefore that
$c^*\notin \Acal_\mind$.
Since $c^*\in \mc U_\mind=\cl_\mind(\Acal_\mind)$, \Cref{thm:closure-deriv}
implies that there exists a derivation tree of $c^*$ from $\Acal_\mind$.
By \Cref{lem:deriv-finite}, this derivation tree is finite.
Let $\mc R$ be the set of all non-base rule nodes in this derivation tree.
Form a directed graph on $\mc R$ by retaining the parent-child relation between
rule nodes and orienting each edge from child to parent. Because the derivation
tree is finite and well-founded, this directed graph is finite and acyclic.
Hence it admits a topological ordering $
v_1,\dots,v_L$.
For each $i=1,\dots,L$, let $(\mc S_i,c_i)$ be the expansion rule attached to
the node $v_i$. Define $
\mc K_0=\Acal_\mind$,
$\mc K_i=\mc K_{i-1}\cup\{c_i\}$ for $i=1,\dots,L$.
We claim that $
\gamma=((\mc S_1,c_1),\dots,(\mc S_L,c_L))$
is a valid ordered curriculum starting from $\Acal_\mind$.
Indeed, fix $i\in\{1,\dots,L\}$ and let $s\in \mc S_i$. In the derivation tree,
the child corresponding to $s$ is either:

(i) a base node, in which case $s\in \Acal_\mind=\mc K_0\subseteq \mc K_{i-1}$; or

(ii) a rule node. In that case this child must occur earlier than $v_i$ in the
topological order, say it is $v_j$ with $j<i$. Its label is then $c_j=s$, so
$s\in \mc K_j\subseteq \mc K_{i-1}$.
Thus every prerequisite in $\mc S_i$ belongs to $\mc K_{i-1}$, so
$\mc S_i\subseteq \mc K_{i-1}$. Since also $(\mc S_i,c_i)\in\mathcal E_\mind$
by construction, each step is valid. Therefore $\gamma$ is a valid ordered
curriculum.
Finally, the root of the derivation tree is a rule node labelled by $c^*$.
Hence it is one of the nodes $v_1,\dots,v_L$, say $v_r$, and therefore
$c_r=c^*$. It follows that $
c^*\in \mc K_r\subseteq \mc K_L$.
So the curriculum reaches a final acquired concept set containing $c^*$.

\subsubsection*{Proof of \Cref{prop:unteachable}}
We argue by induction on $i$.
For $i=0$, $
\mc K_0=\Acal_\mind\subseteq \mc U_\mind$
by \Cref{prop:U-unique}(i).
Now suppose $\mc K_{i-1}\subseteq \mc U_\mind$. Since $\gamma$ is valid,
$(\mc S_i,c_i)\in\mathcal E_\mind$ and $
\mc S_i\subseteq \mc K_{i-1}\subseteq \mc U_\mind$.
By \Cref{prop:U-unique}(ii), this implies $
c_i\in \mc U_\mind$.
Hence $
\mc K_i=\mc K_{i-1}\cup\{c_i\}\subseteq \mc U_\mind$.
This proves the claim for all $i$.
The final statement follows immediately.

\subsubsection*{Proof of \Cref{prop:reach-structure}}
Because $\mc U_\mind$ is finite by \Cref{ass:finite}, the power set $2^{\mc U_\mind}$ is finite. Since $
\mathbb K_\mind\subseteq 2^{\mc U_\mind}$,
it follows that $\mathbb K_\mind$ is finite.

For \textnormal{(i)}, the trivial chain of length zero shows that $
\Acal_\mind\in\mathbb K_\mind$.
Moreover, every reachable set contains $\Acal_\mind$, since every witnessing chain starts from $\Acal_\mind$ and only adds concepts. Thus $\Acal_\mind$ is the minimum element of $(\mathbb K_\mind,\subseteq)$.

For \textnormal{(ii)}, let $\mc K\in\mathbb K_\mind$ with $\mc K\neq \Acal_\mind$. By definition of reachability, there exists a witnessing chain
\[
\Acal_\mind=\mc K_0\subset \mc K_1\subset\cdots\subset \mc K_L=\mc K
\]
such that
\[
\mc K_{i+1}=\mc K_i\cup\{c_i\},
\qquad
c_i\in \Phi_\mind(\mc K_i)\setminus \mc K_i
\qquad (i=0,\dots,L-1).
\]
Since $\mc K\neq \Acal_\mind$, one has $L\ge 1$. Then $\mc K_{L-1}\in\mathbb K_\mind$ by \Cref{lem:prefix},
\[
\mc K_{L-1}\subset \mc K,
\qquad
|\mc K\setminus \mc K_{L-1}|=1.
\]
This proves \textnormal{(ii)}.

For \textnormal{(iii)}, let $\mc K,\mc K'\in\mathbb K_\mind$. Choose a witnessing chain for $\mc K'$:
\[
\Acal_\mind=\mc K_0'\subset \mc K_1'\subset\cdots\subset \mc K_s'=\mc K',
\qquad
\mc K_{i+1}'=\mc K_i'\cup\{c_i\},
\qquad
c_i\in \Phi_\mind(\mc K_i')\setminus \mc K_i'.
\]
For each $i=0,\dots,s$, define
\[
\mc L_i=\mc K\cup \mc K_i'.
\]
Then $\mc L_0=\mc K$ and $\mc L_s=\mc K\cup\mc K'$. Since $\mc K_i'\subseteq \mc L_i$, monotonicity of $\Phi_\mind$ gives
\[
\Phi_\mind(\mc K_i')\subseteq \Phi_\mind(\mc L_i).
\]
Hence, whenever $c_i\notin \mc L_i$, one has $
c_i\in \Phi_\mind(\mc K_i')\subseteq \Phi_\mind(\mc L_i)$,
so
\[
\mc L_{i+1}=\mc L_i\cup\{c_i\}
\]
is a valid extension. If instead $c_i\in \mc L_i$, then $\mc L_{i+1}=\mc L_i$.

Removing repeated sets from the sequence $(\mc L_i)_{i=0}^s$ yields a valid chain from $\mc K$ to $\mc K\cup\mc K'$. Concatenating this chain with any witnessing chain from $\Acal_\mind$ to $\mc K$ shows that $
\mc K\cup\mc K'\in\mathbb K_\mind$.
Thus $\mathbb K_\mind$ is union-closed.

For \textnormal{(iv)}, we first show that $\mc U_\mind\in\mathbb K_\mind$. Let $\mc K\in\mathbb K_\mind$ with $\mc K\neq \mc U_\mind$. Suppose, toward a contradiction, that $
\Phi_\mind(\mc K)=\mc K$.
Then $\mc K$ is a fixed point of $\Phi_\mind$ containing $\Acal_\mind$. Since $
\mc U_\mind=\cl_\mind(\Acal_\mind)$
is the least fixed point containing $\Acal_\mind$, it follows that $
\mc U_\mind\subseteq \mc K$.
But by definition of $\mathbb K_\mind$, one also has $\mc K\subseteq \mc U_\mind$, hence $\mc K=\mc U_\mind$, a contradiction. Therefore $
\Phi_\mind(\mc K)\setminus \mc K\neq \varnothing$.
Choose any $
c\in \Phi_\mind(\mc K)\setminus \mc K$.
Because $\mc K\subseteq \mc U_\mind$ and $\mc U_\mind$ is a fixed point of $\Phi_\mind$, monotonicity gives $
\Phi_\mind(\mc K)\subseteq \Phi_\mind(\mc U_\mind)=\mc U_\mind$,
so in particular $c\in \mc U_\mind$. Hence $\mc K\cup\{c\}$ is again a reachable subset of $\mc U_\mind$.

Starting from $\Acal_\mind$, repeat this step as long as $\Phi_\mind(\mc K) \backslash \mc K \neq \varnothing$. Because $\mc U_\mind$ is finite and each step strictly enlarges the set, the process terminates after finitely many steps at some reachable set $\mc F\subseteq \mc U_\mind$ satisfying $ 
\Phi_\mind(\mc F)=\mc F$.
Since $\mc F$ is a fixed point containing $\Acal_\mind$, minimality of $\mc U_\mind=\cl_\mind(\Acal_\mind)$ implies $
\mc U_\mind\subseteq \mc F$.
As also $\mc F\subseteq \mc U_\mind$, we conclude that $\mc F=\mc U_\mind$. Therefore $
\mc U_\mind\in \mathbb K_\mind$.
Since every element of $\mathbb K_\mind$ is by definition a subset of $\mc U_\mind$, it follows that $\mc U_\mind$ is the maximum element of $(\mathbb K_\mind,\subseteq)$.

Finally, \textnormal{(v)} follows from \textnormal{(iii)}. For any $\mc K,\mc K'\in\mathbb K_\mind$, the set $\mc K\cup\mc K'$ belongs to $\mathbb K_\mind$ and is clearly an upper bound of $\mc K$ and $\mc K'$. If $\mc M\in\mathbb K_\mind$ is any other upper bound, so that
\[
\mc K\subseteq \mc M
\qquad\text{and}\qquad
\mc K'\subseteq \mc M,
\]
then $
\mc K\cup\mc K'\subseteq \mc M$.
Hence $\mc K\cup\mc K'$ is the least upper bound.
\qed
\subsubsection*{Proof of \Cref{cor:reachable-learning-space}}
The result follows directly from \Cref{prop:reach-structure} and \Cref{def:A-based-learning-space}.
\qed
\subsubsection*{Proof of Theorem~\ref{thm:representation-learning-space}}
We prove \textnormal{(ii)}$\Rightarrow$\textnormal{(i)} and \textnormal{(i)}$\Rightarrow$\textnormal{(ii)} separately.

\medskip
\noindent\textbf{\textnormal{(ii)}$\Rightarrow$\textnormal{(i)}.}
Assume there exists a mind $
\mind=(\mc C,\Acal,\mathcal E_\mind)$
such that $
\mathbb K_\mind=\mathbb F$.
By \Cref{cor:reachable-learning-space}, the family $\mathbb K_\mind$ is an $\Acal$-based learning space. Hence so is $\mathbb F$.

\medskip
\noindent\textbf{\textnormal{(i)}$\Rightarrow$\textnormal{(ii)}.}
Assume that $\mathbb F$ is an $\Acal$-based learning space, and define $
\mind_{\mathbb F}=(\mc C,\Acal,\mathcal E_{\mathbb F})$
using the canonical rule set above. Let $\mathbb K_{\mind_{\mathbb F}}$ denote the reachable family generated by this mind. We prove that $\mathbb K_{\mind_{\mathbb F}}=\mathbb F$.
\noindent\emph{Step 1: $\mathbb K_{\mind_{\mathbb F}}\subseteq \mathbb F$.}
Let $\mc K\in \mathbb K_{\mind_{\mathbb F}}$. Choose a witnessing chain $
\Acal=\mc K_0\subset \mc K_1\subset \cdots\subset \mc K_L=\mc K$
such that
\[
\mc K_{i+1}=\mc K_i\cup\{c_i\},
\qquad
c_i\in \Phi_{\mind_{\mathbb F}}(\mc K_i)\setminus \mc K_i
\quad\text{for }i=0,\dots,L-1.
\]
We prove by induction on $i$ that $\mc K_i\in\mathbb F$ for all $i$.

For $i=0$, one has $\mc K_0=\Acal\in\mathbb F$.

Now suppose $\mc K_i\in\mathbb F$. Since $
c_i\in \Phi_{\mind_{\mathbb F}}(\mc K_i)\setminus \mc K_i$,
there exists a rule $(\mc S,c_i)\in\mathcal E_{\mathbb F}$ with $\mc S\subseteq \mc K_i$. By definition of $\mathcal E_{\mathbb F}$, 
$\mc S\in\mathbb F$,
$\mc S\cup\{c_i\}\in\mathbb F$.
Because $\mc K_i\in\mathbb F$ and $\mathbb F$ is union-closed,
\[
\mc K_i\cup(\mc S\cup\{c_i\})\in\mathbb F.
\]
Since $\mc S\subseteq \mc K_i$, this simplifies to
\[
\mc K_i\cup\{c_i\}=\mc K_{i+1}\in\mathbb F.
\]
Thus every $\mc K_i$ lies in $\mathbb F$, and in particular $\mc K\in\mathbb F$. Hence $
\mathbb K_{\mind_{\mathbb F}}\subseteq \mathbb F$.
\noindent\emph{Step 2: $\mathbb F\subseteq \mathbb K_{\mind_{\mathbb F}}$.}
Let $\mc K\in\mathbb F$. If $\mc K=\Acal$, then $\mc K$ is reachable by the trivial chain.

Assume now that $\mc K\neq \Acal$. Since $\mathbb F$ is an $\Acal$-based learning space, repeated application of accessibility yields a descending chain
\[
\mc K=\mc K_L \supset \mc K_{L-1}\supset \cdots \supset \mc K_0=\Acal
\]
such that each $\mc K_i\in\mathbb F$ and
\[
\mc K_i=\mc K_{i-1}\cup\{x_i\}
\qquad\text{for }i=1,\dots,L.
\]
Reverse the chain:
\[
\Acal=\mc K_0\subset \mc K_1\subset \cdots\subset \mc K_L=\mc K.
\]
For each $i=1,\dots,L$, both $\mc K_{i-1}$ and $\mc K_i=\mc K_{i-1}\cup\{x_i\}$ belong to $\mathbb F$. Therefore, by the definition of $\mathcal E_{\mathbb F}$, $
(\mc K_{i-1},x_i)\in \mathcal E_{\mathbb F}$.
Hence $
x_i\in \Phi_{\mind_{\mathbb F}}(\mc K_{i-1})\setminus \mc K_{i-1}$,
so every step in the chain is a valid reachable extension. Thus $\mc K$ is reachable from $\Acal$, which shows that $
\mc K\in \mathbb K_{\mind_{\mathbb F}}$.
Therefore $
\mathbb F\subseteq \mathbb K_{\mind_{\mathbb F}}$.

Combining the two inclusions gives $
\mathbb K_{\mind_{\mathbb F}}=\mathbb F$.
\qed
\subsection{Proofs for Section~\ref{sec:teaching}}\label{app:proofs-teaching}
\subsubsection*{Proof of \Cref{lem:Kt-reachable}}

We argue by induction on $t$.

For $t=0$, $
\mc K_0=\Acal_\mind\in \mathbb K_\mind$
by the trivial witnessing chain.
Now suppose $\mc K_t\in\mathbb K_\mind$ almost surely.
If $Y_{t+1}=\bot$, then by \Cref{def:knowledge-update}, $
\mc K_{t+1}=\mc K_t$,
hence $\mc K_{t+1}\in\mathbb K_\mind$.
If $Y_{t+1}\in\mc Z$, define $
c_{t+1}=\tgt(Y_{t+1})$.
Since the parser outputs a non-null signal, \Cref{def:parsing} implies that $
c_{t+1}\in \Phi_\mind(\mc K_t)$.
Hence either $c_{t+1}\in \mc K_t$, in which case $
\mc K_{t+1}=\mc K_t$,
or else $
c_{t+1}\in \Phi_\mind(\mc K_t)\setminus \mc K_t$,
in which case $
\mc K_{t+1}=\mc K_t\cup\{c_{t+1}\}$
is a valid one-step reachable extension from $\mc K_t$ in the sense of
\Cref{def:reachable}. Since $\mc K_t\in\mathbb K_\mind$, it follows that $
\mc K_{t+1}\in\mathbb K_\mind$.
This proves the claim.
\qed

\subsubsection*{Proof of \Cref{prop:first-law}}
By the definition of conditional mutual information,
\[
\mathds I(\Theta;Y_{t+1}\mid \Fcal_t)
=
\mathds H(\Theta\mid \Fcal_t)
-
\mathbb E[\mathds H(\Theta\mid \Fcal_t\vee \sigma(Y_{t+1}))\mid \Fcal_t].
\]
Since $\Fcal_{t+1}=\Fcal_t\vee \sigma(Y_{t+1})$ and $H_t=\mathds H(\Theta\mid \Fcal_t)$, this becomes $
\mathds I(\Theta;Y_{t+1}\mid \Fcal_t)
=
H_t-\mathbb E[H_{t+1}\mid \Fcal_t]$.
Because $H_t$ is $\Fcal_t$-measurable, $
H_t-\mathbb E[H_{t+1}\mid \Fcal_t]
=
\mathbb E[H_t-H_{t+1}\mid \Fcal_t]$.
\qed

\subsubsection*{Proof of \Cref{thm:arrow}}
By \Cref{prop:first-law},
\[
H_t-\mathbb E[H_{t+1}\mid \Fcal_t]
=
\mathds I(\Theta;Y_{t+1}\mid \Fcal_t)\ge 0.
\]
Hence $(H_t)$ is a supermartingale. Equality holds if and only if
\[
\mathds I(\Theta;Y_{t+1}\mid \Fcal_t)=0,
\]
which is equivalent to conditional independence of $\Theta$ and $Y_{t+1}$ given $\Fcal_t$.
\qed

\subsubsection*{Proof of \Cref{thm:relativity}}
The proof of \Cref{thm:relativity} relies on the following lemma, which we
establish first.
\begin{lemma}[Unparseability erases information]\label{lem:unparseability-erasure}
Let
\[
C_{t+1}=\tgt(Z_{t+1}),
\qquad
U_{t+1}=\{C_{t+1}\notin \Phi_\mind(\mc K_t)\}.
\]
Then:
\begin{enumerate}[nosep,label=\textnormal{(\roman*)}]
\item on $U_{t+1}$ one has $Y_{t+1}=\bot$ almost surely, and therefore
\[
\mathds I(\Theta;Y_{t+1}\mid \Fcal_t,U_{t+1})=0,
\qquad
\mathds I(Z_{t+1};Y_{t+1}\mid \Fcal_t,U_{t+1})=0;
\]
\item if $\mathbb P(U_{t+1}\mid \Fcal_t)=1$, then
\[
\mathds I(\Theta;Y_{t+1}\mid \Fcal_t)=0,
\qquad
\mathds I(Z_{t+1};Y_{t+1}\mid \Fcal_t)=0.
\]
\end{enumerate}
\end{lemma}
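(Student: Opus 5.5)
The plan is to reduce both parts to one observation: a random variable that is almost surely constant on a conditioning event carries no information on that event. The rest is bookkeeping with the conditional entropy definitions of \Cref{sec:epistemic-arrow}.

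For (i), I would first establish the pointwise claim. By \Cref{def:parsing}(ii), for every $\omega$ in $U_{t+1}$ we have $C_{t+1}(\omega)=\tgt(Z_{t+1}(\omega))\notin\Phi_\mind(\mc K_t(\omega))$. Hence $Y_{t+1}(\omega)=\rho_\mind(Z_{t+1}(\omega),\mc K_t(\omega))=\bot$ identically on $U_{t+1}$, not merely almost surely.

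Next I would compute the mutual information under the conditional law $\mathbb P(\cdot\mid U_{t+1})$, restricted to histories of positive probability as the paper stipulates. Under this law $Y_{t+1}$ is the constant $\bot$. Adjoining $\sigma(Y_{t+1})$ to $\Fcal_t$ therefore changes no conditional probabilities: for any event $A$ we have $\mathbb P(A\mid \Fcal_t\vee\sigma(Y_{t+1}),U_{t+1})=\mathbb P(A\mid\Fcal_t,U_{t+1})$ a.s. It follows that $\mathds H(\Theta\mid \Fcal_t\vee\sigma(Y_{t+1}),U_{t+1})=\mathds H(\Theta\mid\Fcal_t,U_{t+1})$. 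By the definition of conditional mutual information, $\mathds I(\Theta;Y_{t+1}\mid\Fcal_t,U_{t+1})=0$. The identical argument with $Z_{t+1}$ in place of $\Theta$ gives the second identity. Equivalently, one can note that $\mathds H(Y_{t+1}\mid \Fcal_t,U_{t+1})=0$ and use $0\le\mathds I(X;Y\mid\cdot)\le\mathds H(Y\mid\cdot)$.

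For (ii), I would work on the $\Fcal_t$-measurable set $G=\{\mathbb P(U_{t+1}\mid\Fcal_t)=1\}$. On $G$ the pointwise claim gives $\mathbb P(Y_{t+1}=\bot\mid\Fcal_t)\ge\mathbb P(U_{t+1}\mid\Fcal_t)=1$. So $Y_{t+1}$ is conditionally degenerate given $\Fcal_t$, with $\mathds H(Y_{t+1}\mid\Fcal_t)=0$ on $G$. Applying the bound $\mathds I(\Theta;Y_{t+1}\mid\Fcal_t)\le\mathds H(Y_{t+1}\mid\Fcal_t)$ then gives both conclusions, with the same bound for $Z_{t+1}$. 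If I want to avoid invoking that inequality, I would argue directly: $\mathbb P(\Theta=c\mid\Fcal_t\vee\sigma(Y_{t+1}))=\mathbb P(\Theta=c\mid\Fcal_t)$ a.s. on $G$, because $\{Y_{t+1}=\bot\}$ has conditional probability one there.

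The only delicate point is measure-theoretic. Conditioning jointly on a $\sigma$-field and the event $U_{t+1}$ must be handled carefully, since $U_{t+1}$ is not $\Fcal_t$-measurable. I would make this precise by working atom by atom: $\mathcal Y$ is finite, so $\Fcal_t$ is generated by the finitely many histories $h_t$. For each $h_t$ with $\mathbb P(\{h_t\}\cap U_{t+1})>0$, I would compute elementary conditional probabilities given $\{(Y_1,\dots,Y_t)=h_t\}\cap U_{t+1}$, and set the quantity to $0$ otherwise, per the stated convention. After this reduction everything is discrete and the argument above goes through verbatim.
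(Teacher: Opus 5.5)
Your proposal is correct and follows the same route as the paper's proof. In both, $Y_{t+1}$ is the constant $\bot$ on $U_{t+1}$ (respectively, conditionally on $\Fcal_t$ when $\mathbb P(U_{t+1}\mid\Fcal_t)=1$), so the relevant conditional entropies and hence the mutual informations vanish. Your atom-by-atom conditioning on $(\Fcal_t,U_{t+1})$ and the explicit bound $\mathds I(X;Y\mid\cdot)\le\mathds H(Y\mid\cdot)$ just make precise what the paper's brief argument leaves implicit.
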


\begin{proof}
On $U_{t+1}$, \Cref{def:parsing} gives
\[
Y_{t+1}=\rho_\mind(Z_{t+1},\mc K_t)=\bot
\qquad\text{almost surely.}
\]
Hence conditional on $(\Fcal_t,U_{t+1})$, the random variable $Y_{t+1}$ is constant, so all the relevant conditional entropies are zero. This proves \textnormal{(i)}.

If $\mathbb P(U_{t+1}\mid \Fcal_t)=1$, then $U_{t+1}$ occurs almost surely conditional on $\Fcal_t$, so $Y_{t+1}=\bot$ almost surely conditional on $\Fcal_t$. Again all relevant conditional entropies are zero, proving \textnormal{(ii)}.
\end{proof}

\begin{proof}[Proof of \Cref{thm:relativity}]On $U_{t+1}$, \Cref{lem:unparseability-erasure} gives $
\mathds I(\Theta;Y_{t+1}\mid \Fcal_t,U_{t+1})=0$.
On $U_{t+1}^c$, \Cref{prop:parseable-positive} yields $
\mathds I(\Theta;Y_{t+1}\mid \Fcal_t,U_{t+1}^c)
=
\mathds I(\Theta;Z_{t+1}\mid \Fcal_t,U_{t+1}^c)>0$.
\end{proof}

\subsection{Proofs for Section~\ref{sec:information-transfer-speed-teaching}}\label{app:proofs-speed}

\begin{lemma}[Explicit formula for the parsed entropy bound]
\label{lem:cap-formula}
Assume $\mc Z$ is finite. Then, for every $\mc K\in\mathbb K_\mind$,
\[
C_\mind(\mc K)=
\begin{cases}
\log\bigl(|\mathcal Z_{\mathrm{ord}}(\mc K)|+1\bigr),
& \text{if }\mathcal Z_{\mathrm{ord}}(\mc K)\subsetneq \mc Z,\\[0.4em]
\log |\mc Z|,
& \text{if }\mathcal Z_{\mathrm{ord}}(\mc K)=\mc Z.
\end{cases}
\]
\end{lemma}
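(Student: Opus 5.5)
The plan is to compute the supremum in \Cref{def:state-capacity} directly by identifying the range of the map $z\mapsto\rho_\mind(z,\mc K)$ and then maximizing the entropy of a pushforward distribution on that range. Fix $\mc K\in\mathbb K_\mind$ and write $m=|\mathcal Z_{\mathrm{ord}}(\mc K)|$. By \Cref{def:parsing}, $\rho_\mind(\cdot,\mc K)$ acts as the identity on $\mathcal Z_{\mathrm{ord}}(\mc K)$ and sends every $z\in\mc Z\setminus\mathcal Z_{\mathrm{ord}}(\mc K)$ to $\bot$. Its image is therefore exactly $\mathcal Z_{\mathrm{ord}}(\mc K)\cup\{\bot\}$ when the inclusion $\mathcal Z_{\mathrm{ord}}(\mc K)\subseteq\mc Z$ is strict, and exactly $\mc Z$ when equality holds. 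In either case the image is a finite set $R$ with $|R|=m+1$ or $|R|=|\mc Z|$, respectively.

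The upper bound comes next. For any $\mc Z$-valued $Z$, the random variable $\rho_\mind(Z,\mc K)$ takes values in $R$. The standard bound $\mathds H(X)\le\log|R|$ for an $R$-valued variable, which follows from Jensen or Gibbs' inequality, then gives $C_\mind(\mc K)\le\log|R|$.

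For the matching lower bound, I would construct a raw law whose parsed law is uniform on $R$. In the equality case, take $Z$ uniform on $\mc Z$. The parser is then the identity, so $Y$ is uniform and $\mathds H(Y)=\log|\mc Z|$. In the strict case, fix one element $z_0\in\mc Z\setminus\mathcal Z_{\mathrm{ord}}(\mc K)$. Put mass $1/(m+1)$ on each element of $\mathcal Z_{\mathrm{ord}}(\mc K)$ and mass $1/(m+1)$ on $z_0$. The parsed output is then uniform on $\mathcal Z_{\mathrm{ord}}(\mc K)\cup\{\bot\}$, with entropy $\log(m+1)$. This attains the bound, so the supremum is a maximum and equals the claimed value. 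The case $m=0$ is included: the output is $\bot$ almost surely and $C_\mind(\mc K)=\log 1=0$.

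No step is genuinely hard. The only care needed is the case distinction in identifying the image of $\rho_\mind(\cdot,\mc K)$. The symbol $\bot$ is attainable precisely when some raw signal is unordered, which is why the formula splits. I would also note that $\bot\notin\mc Z$ by assumption, so $\bot$ is a genuinely new output point and no collision among outputs reduces the count.
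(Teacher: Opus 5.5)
Your proposal is correct and follows essentially the same route as the paper: bound the parsed entropy by $\log$ of the size of the image of $\rho_\mind(\cdot,\mc K)$, attain that bound with a raw law putting uniform mass on one representative per image point, and count the image in the two cases, with $\bot\notin\mc Z$ supplying the extra point. The only cosmetic difference is that you identify the image first, whereas the paper proves $C_\mind(\mc K)=\log|\mc Y(\mc K)|$ in general and then counts $|\mc Y(\mc K)|$ afterward.
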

\begin{proof}[Proof of \Cref{lem:cap-formula}]
Fix $\mc K\in\mathbb K_\mind$ and define the parsed observation range
\[
\mc Y(\mc K)=\{\rho_\mind(z,\mc K):z\in\mc Z\}\subseteq \mc Z\cup\{\bot\}.
\]
For any $\mc Z$-valued random variable $Z$, the random variable
$\rho_\mind(Z,\mc K)$ takes values in $\mc Y(\mc K)$ almost surely, so $
\mathds H(\rho_\mind(Z,\mc K))\le \log |\mc Y(\mc K)|$ by \cite[Page 41]{cover}.
Taking the supremum over all such $Z$ gives $
C_\mind(\mc K)\le \log |\mc Y(\mc K)|$.

For the reverse inequality, let $M=|\mc Y(\mc K)|$. For each
$y\in \mc Y(\mc K)$, choose some representative $z_y\in\mc Z$ such that $\rho_\mind(z_y,\mc K)=y$.
Define a $\mc Z$-valued random variable $Z$ by
\[
\mathbb P(Z=z_y)=\frac{1}{M}
\qquad\text{for each } y\in \mc Y(\mc K),
\]
and $\mathbb P(Z=z)=0$ for all other $z\in\mc Z$. Then
$\rho_\mind(Z,\mc K)$ is uniform on $\mc Y(\mc K)$, so
\[
\mathds H(\rho_\mind(Z,\mc K))=\log |\mc Y(\mc K)|.
\]
Hence $
C_\mind(\mc K)=\log |\mc Y(\mc K)|$.

Under the parsing map $\rho_\mind$, one has
\[
\rho_\mind(z,\mc K)=
\begin{cases}
z, & \text{if } z\in \mathcal Z_{\mathrm{ord}}(\mc K),\\
\bot, & \text{if } z\notin \mathcal Z_{\mathrm{ord}}(\mc K).
\end{cases}
\]
If $\mathcal Z_{\mathrm{ord}}(\mc K)\subsetneq \mc Z$, then
$
\mc Y(\mc K)=\mathcal Z_{\mathrm{ord}}(\mc K)\cup\{\bot\}$,
so $
|\mc Y(\mc K)|=|\mathcal Z_{\mathrm{ord}}(\mc K)|+1$.
If instead $\mathcal Z_{\mathrm{ord}}(\mc K)=\mc Z$, then every raw signal is
parseable, so $
\mc Y(\mc K)=\mc Z$.
Substituting these two cases into $
C_\mind(\mc K)=\log |\mc Y(\mc K)|$
proves the claim.
\end{proof}

\subsubsection*{Proof of \Cref{prop:state-info-bound}}
Because $\mc K_t$ is $\Fcal_t$-measurable, conditional on $\Fcal_t$ the law of $
Y_{t+1}=\rho_\mind(Z_{t+1},\mc K_t)$
is obtained by passing the conditional law of $Z_{t+1}$ through the fixed map $
z\mapsto \rho_\mind(z,\mc K_t)$.
Therefore,
\[
\mathds I(\Theta;Y_{t+1}\mid \Fcal_t)
\le
\mathds H(Y_{t+1}\mid \Fcal_t)
\le
C_\mind(\mc K_t)
\qquad\text{almost surely.}
\]
\qed
\subsubsection*{Proof of \Cref{lem:cap-mono}}
Assume $\mc K,\mc K'\in\mathbb K_\mind$ with $\mc K\subseteq \mc K'$. By
\Cref{lem:mono}, $
\Phi_\mind(\mc K)\subseteq \Phi_\mind(\mc K')$,
hence $
\mathcal Z_{\mathrm{ord}}(\mc K)\subseteq \mathcal Z_{\mathrm{ord}}(\mc K')$.

Define $
g_{\mc K,\mc K'}:\mc Z\cup\{\bot\}\to \mc Z\cup\{\bot\}$
by
\[
g_{\mc K,\mc K'}(y)=
\begin{cases}
y, & \text{if } y\in \mathcal Z_{\mathrm{ord}}(\mc K),\\
\bot, & \text{otherwise.}
\end{cases}
\]
Then for every $z\in\mc Z$, $
\rho_\mind(z,\mc K)=g_{\mc K,\mc K'}(\rho_\mind(z,\mc K'))$.
Indeed, if $z\in \mathcal Z_{\mathrm{ord}}(\mc K)$, then $z$ is ordered at both
sets and both sides equal $z$. If $z\notin \mathcal Z_{\mathrm{ord}}(\mc K)$,
then the left-hand side is $\bot$; on the right-hand side, either
$\rho_\mind(z,\mc K')=\bot$, or else $\rho_\mind(z,\mc K')=z$ and
$g_{\mc K,\mc K'}(z)=\bot$.
Now let $Z$ be any $\mc Z$-valued random variable. Then
\[
\rho_\mind(Z,\mc K)
=
g_{\mc K,\mc K'}\bigl(\rho_\mind(Z,\mc K')\bigr)
\qquad\text{almost surely.}
\]
Thus $\rho_\mind(Z,\mc K)$ is a deterministic function of
$\rho_\mind(Z,\mc K')$. By the data processing inequality, $
\mathds H(\rho_\mind(Z,\mc K))
\le
\mathds H(\rho_\mind(Z,\mc K'))$.
Taking suprema over all $\mc Z$-valued random variables $Z$ yields $
C_\mind(\mc K)\le C_\mind(\mc K')$.
\qed
\subsubsection*{Proof of Theorem~\ref{thm:blackwell}}
Let $g_{\mc K,\mc K'}$ be the deterministic map constructed in the proof of
\Cref{lem:cap-mono}. For every raw signal $z\in\mc Z$, one has $
\rho_\mind(z,\mc K)
=
g_{\mc K,\mc K'}(\rho_\mind(z,\mc K'))$.
Therefore, conditional on the public history $h_t$,
\[
\rho_\mind(Z_{t+1},\mc K)
=
g_{\mc K,\mc K'}\bigl(\rho_\mind(Z_{t+1},\mc K')\bigr)
\qquad\text{almost surely.}
\]

Hence, for every $\omega\in\Omega$ and every $y\in\mc Z\cup\{\bot\}$,
\[
\mathds W_{\mc K,h_t}(y\mid \omega)
=
\sum_{y'\in \mc Z\cup\{\bot\}}
G_{\mc K,\mc K'}(y\mid y')\,
\mathds W_{\mc K',h_t}(y'\mid \omega),
\]
where $
G_{\mc K,\mc K'}(y\mid y')
=
\mathbf 1\{g_{\mc K,\mc K'}(y')=y\}$
is the Markov kernel induced by $g_{\mc K,\mc K'}$.

Thus $\mathds W_{\mc K,h_t}$ is obtained from $\mathds W_{\mc K',h_t}$ by
post-processing through a Markov kernel independent of $\omega$. Therefore
$\mathds W_{\mc K,h_t}$ is a garbling of $\mathds W_{\mc K',h_t}$, and
$\mathds W_{\mc K',h_t}$ Blackwell-dominates $\mathds W_{\mc K,h_t}$.
\qed

\subsubsection*{Proof of \Cref{prop:structural-barrier}}
Fix a sample path. By the concept-acquisition update rule, each round adds at
most one new concept to the learner's acquired concept set. If completion occurs
at time $\tau$, then in particular $
\Theta\in \mc K_\tau$.
Delete repeated sets from the sequence $
\mc K_0,\mc K_1,\dots,\mc K_\tau$.
The resulting strictly increasing sequence is of the form
\[
\Acal_\mind=\mc K_{i_0}\subset \mc K_{i_1}\subset \cdots \subset \mc K_{i_r},
\]
where each step adds one concept belonging to the one-step expansion of
the previous set. Hence it is a prerequisite-respecting chain ending at a set
containing $\Theta$.

By definition of $L_\mind(\Theta)$, any such chain has length at least
$L_\mind(\Theta)$. Since the number of strict acquisitions up to time $\tau$ is
at most $\tau$, it follows that $
\tau\ge L_\mind(\Theta)$ \text{almost surely.}
Taking expectations completes the proof.
\qed

\subsubsection*{Proof of \Cref{lem:chain}}
By \Cref{prop:first-law}, $
\mathbb E[H_t-H_{t+1}\mid \Fcal_t]
=
\mathds I(\Theta;Y_{t+1}\mid \Fcal_t)$.
Multiplying by $\mathbf 1_{\{\tau_{\mathrm{id}}>t\}}$ and taking expectations
gives
\[
\mathbb E\left[\mathbf 1_{\{\tau_{\mathrm{id}}>t\}}(H_t-H_{t+1})\right]
=
\mathbb E\left[\mathbf 1_{\{\tau_{\mathrm{id}}>t\}}
\mathds I(\Theta;Y_{t+1}\mid \Fcal_t)\right].
\]
Summing from $t=0$ to $n-1$ yields
\[
\mathbb E\left[\sum_{t=0}^{n-1}\mathbf 1_{\{\tau_{\mathrm{id}}>t\}}
\mathds I(\Theta;Y_{t+1}\mid \Fcal_t)\right]
=
\mathbb E[H_0]-\mathbb E[H_{\tau_{\mathrm{id}}\wedge n}].
\]
Since $\Theta$ is $\Fcal_{\tau_{\mathrm{id}}}$-measurable, one has
\[
H_{\tau_{\mathrm{id}}}
=
\mathds H(\Theta\mid \Fcal_{\tau_{\mathrm{id}}})
=
0
\qquad\text{almost surely.}
\]
Also, $
0\le H_t\le \log |\Omega|$ \text{for all } $t$.
Let
\[
S_n=
\sum_{t=0}^{n-1}\mathbf 1_{\{\tau_{\mathrm{id}}>t\}}
\mathds I(\Theta;Y_{t+1}\mid \Fcal_t).
\]
Because the summands are nonnegative, $S_n$ increases almost surely to
\[
\sum_{t=0}^{\tau_{\mathrm{id}}-1}
\mathds I(\Theta;Y_{t+1}\mid \Fcal_t).
\]
Monotone convergence and bounded convergence therefore give
\[
\mathbb E\left[\sum_{t=0}^{\tau_{\mathrm{id}}-1}
\mathds I(\Theta;Y_{t+1}\mid \Fcal_t)\right]
=
\mathbb E[H_0].
\]
Since $\Fcal_0$ is trivial, $
\mathbb E[H_0]=\mathds H(\Theta)$.
\qed
\subsubsection*{Proof of \Cref{prop:trajectory-budget}}
By \Cref{lem:chain},
\[
\mathds H(\Theta)
=
\mathbb E\left[\sum_{t=0}^{\tau_{\mathrm{id}}-1}
\mathds I(\Theta;Y_{t+1}\mid \Fcal_t)\right].
\]
By \Cref{prop:state-info-bound},
\[
\mathds I(\Theta;Y_{t+1}\mid \Fcal_t)\le C_\mind(\mc K_t)
\qquad\text{almost surely for every } t.
\]
Substituting this bound inside the sum yields the result.
\qed
\subsubsection*{Proof of \Cref{thm:struct-info-bound}}
By \Cref{prop:structural-barrier}, the structural bound follows: $
\mathbb E[\tau]\ge \mathbb E[L_\mind(\Theta)]$.
For the epistemic part: define the identification time
\[
\tau_{\mathrm{id}} = \inf \{ t \ge 0 : \mathds H(\Theta \mid \mathcal{F}_t) = 0 \}.
\]

Since $\{\tau_{\mathrm{id}} \le t\} = \{H(\Theta \mid \mathcal{F}_t) = 0\} \in \mathcal{F}_t$,
$\tau_{\mathrm{id}}$ is an $(\mathcal{F}_t)$-stopping time.

Moreover, if $\tau$ is a completion time then identification must already
have occurred, so $\tau_{\mathrm{id}} \le \tau$ almost surely.
By \Cref{prop:trajectory-budget},
\[
\mathds H(\Theta)
\le
\mathbb E\left[\sum_{t=0}^{\tau_{\mathrm{id}}-1} C_\mind(\mc K_t)\right].
\]
{Since $\mc K_t\in \mathbb K_\mind$ almost surely by \Cref{lem:Kt-reachable},}
\[
C_\mind(\mc K_t)\le C_\mind^{\max}
\qquad\text{almost surely.}
\]
Therefore
\[
\mathds H(\Theta)
\le \EE\left[\sum\limits_{t=0}^{\tau_{\mathrm{id}} - 1} C_\mind(\mc K_t)\right] \leq
C_\mind^{\max}\,\mathbb E[\tau_{\mathrm{id}}]
\le
C_\mind^{\max}\,\mathbb E[\tau].
\]
Rearranging yields
\[
\mathbb E[\tau]\ge \frac{\mathds H(\Theta)}{C_\mind^{\max}}.
\]
Combining the two bounds gives the theorem.
\qed

\subsubsection*{Proof of \Cref{prop:direct-target-collapse}}
For each $c\in\Omega_+$, choose one raw signal $z_c\in\mc Z$ such that $
\tgt(z_c)=c$.
Because $\tgt$ is a function, the signals $(z_c)_{c\in\Omega_+}$ are pairwise
distinct. Since $\Omega\subseteq \mc U_\mind$ and $\mc U_\mind$ is a fixed
point of $\Phi_\mind$, every target concept $c\in\Omega$ is ordered at
$\mc U_\mind$. Hence each $z_c$ is parseable at $\mc U_\mind$, so the parsed
observation range at $\mc U_\mind$ contains at least the distinct symbols $
\{z_c:c\in\Omega_+\}$.
Therefore, by \Cref{lem:cap-formula}, $
C_\mind^{\max}\ge C_\mind(\mc U_\mind)\ge \log |\Omega_+|$.
Since entropy is bounded by the logarithm of the support size, $
\mathds H(\Theta)\le \log |\Omega_+|$,
which yields $
{\mathds H(\Theta)} \leq {C_\mind^{\max}}$.
This proves \textnormal{(i)}.

For \textnormal{(ii)}, fix $c\in\Omega_+$. By definition of $L_\mind(c)$,
there exists a valid ordered curriculum of length $L_\mind(c)$ from
$\Acal_\mind$ to a set containing $c$. Under \Cref{ass:structural-signals}, the
teacher can implement that curriculum by sending one raw signal targeting each
concept along the path. After $L_\mind(c)$ rounds, the learner has
acquired $c$.

In one additional round, the teacher sends the fixed representative signal
$z_c$. Because $c\in \mc K_{L_\mind(c)}$, the signal $z_c$ is parseable at that
state. Since the strategy specifies a unique representative signal for each
possible target, the learner identifies the realized target after observing
$z_c$. Thus
\[
\tau\le L_\mind(\Theta)+1
\qquad\text{almost surely.}
\]
Taking expectations completes the proof.
\qed

\subsection{Proofs for \Cref{sec:structural-limits}}
\label{app:proofs-impossibility}
\subsubsection*{Proof of Proposition~\ref{prop:hard-zero}}
By \Cref{prop:structural-barrier}, every completion time $\tau$ satisfies
\[
\tau\ge L_\mind(\Theta)
\qquad\text{almost surely.}
\]
Hence $
\{\tau\le t\}\subseteq \{L_\mind(\Theta)\le t\}$.
Therefore, for every admissible teaching strategy,
\[
\mathbb P(\tau\le t)\le \mathbb P\bigl(L_\mind(\Theta)\le t\bigr).
\]
Taking the supremum over strategies yields the first claim.

If $t<L_{\min}$, then $L_\mind(\Theta)>t$ almost surely under the prior, so
\[
\mathbb P\bigl(L_\mind(\Theta)\le t\bigr)=0.
\]
Hence $\mathds V(t)=0$.
\qed

\subsubsection*{Proof of \Cref{prop:eventual-success}}
Fix an admissible teaching strategy with $\mathbb E[\tau]<\infty$. By Markov's
inequality,
\[
\mathbb P(\tau>t)\le \frac{\mathbb E[\tau]}{t},
\]
and therefore
\[
\mathbb P(\tau\le t)\ge 1-\frac{\mathbb E[\tau]}{t}.
\]
Since $\mathds V(t)$ is the supremum of $\mathbb P(\tau\le t)$ over all
admissible strategies, it follows that
\[
\mathds V(t)\ge 1-\frac{\mathbb E[\tau]}{t}.
\]
Letting $t\to\infty$ gives $\mathds V(t)\to 1$.
\qed

\subsubsection*{Proof of \Cref{prop:deterministic-step}}
By \Cref{prop:structural-barrier}, every acquisition time $\tau_g$ satisfies
\[
\tau_g\ge L_\mind(g)
\qquad\text{almost surely.}
\]
Therefore, for every admissible strategy and every $t<L_\mind(g)$, $
\mathbb P(\tau_g\le t)=0$.
Taking the supremum over strategies yields
$
\mathds V_g(t)=0$ for $t<L_\mind(g)$.

Now let $L=L_\mind(g)$. By definition of structural distance, there exists a
witnessing chain
\[
\Acal_\mind=\mc K_0\subset \mc K_1\subset \cdots \subset \mc K_L,
\qquad
g\in \mc K_L,
\]
such that
\[
\mc K_{i+1}=\mc K_i\cup\{u_i\},
\qquad
u_i\in \Phi_\mind(\mc K_i)\setminus \mc K_i
\quad (i=0,\dots,L-1).
\]
By \Cref{ass:structural-signals}, for each $u_i$ there exists a raw signal
$z_i\in\mc Z$ such that
\[
\tgt(z_i)=u_i.
\]
Since $u_i\in \Phi_\mind(\mc K_i)$, the signal $z_i$ is parseable at $\mc K_i$.
If the teacher sends
\[
z_0,z_1,\dots,z_{L-1}
\]
in sequence, the learner moves through the sets
\[
\mc K_0,\mc K_1,\dots,\mc K_L
\]
and therefore acquires $g$ after $L$ rounds. Thus there exists an
admissible strategy such that
\[
\mathbb P(\tau_g\le L)=1.
\]
Hence
\[
\mathds V_g(t)=1
\qquad\text{for all } t\ge L_\mind(g).
\]
\qed

\subsubsection*{Proof of \Cref{prop:capital-destruction}}
For \textnormal{(i)}, if every learner receives fewer than $L$ rounds, then by
\Cref{prop:deterministic-step} the acquisition probability of $g$ is zero for
every learner. Hence no learner completes.

For \textnormal{(ii)}, select $\min\{N,\lfloor B/L\rfloor\}$ learners and allocate
$L$ rounds to each of them, allocating $0$ rounds to the remaining learners.
This is feasible because $
L \min\{N,\lfloor B/L\rfloor\} \le L\lfloor B/L\rfloor \le B$,
and \(\min\{N,\lfloor B/L\rfloor\} \le N\).
By \Cref{prop:deterministic-step}, each selected learner completes with
probability one. The remaining learners receive \(0<L\) rounds, so again by
\Cref{prop:deterministic-step} they complete with probability zero. Hence the
total number of completed learners is $
\min\{N,\lfloor B/L\rfloor\}$.
\qed
\subsubsection*{Proof of \Cref{thm:impossibility}}
Let $a$ and $g$ be distinct concepts. For each $i\in\{1,\dots,k\}$ and each
$j\in\{1,\dots,L-1\}$, let $p_{i,j}$ be a distinct concept, with all these
concepts also distinct from $a$ and $g$. Define
\[
\mc C
=
\{a,g\}
\cup
\{p_{i,j}: i=1,\dots,k,\ j=1,\dots,L-1\},
\qquad
\Acal=\{a\}.
\]

For each $i\in\{1,\dots,k\}$, define the rule set of mind $\mind_i$ by
\[
\mathcal E_{\mind_i}
=
\bigl\{
(\{a\},p_{i,1}),
(\{p_{i,1}\},p_{i,2}),
\dots,
(\{p_{i,L-2}\},p_{i,L-1}),
(\{p_{i,L-1}\},g)
\bigr\}.
\]
Thus mind $\mind_i$ has a private prerequisite chain
\[
a \to p_{i,1}\to p_{i,2}\to \cdots \to p_{i,L-1}\to g,
\]
and no concept $p_{i',j}$ with $i'\neq i$ is reachable in mind $\mind_i$.

Choose raw signals
\[
z_{i,j}\in \mc Z
\qquad (i=1,\dots,k,\ j=1,\dots,L-1),
\qquad
z_g\in \mc Z,
\]
with
\[
\tgt(z_{i,j})=p_{i,j},
\qquad
\tgt(z_g)=g.
\]

\medskip
\noindent\textit{(i) Personalized acquisition in $L$ rounds.}
Fix $i$. The sequence
\[
z_{i,1},z_{i,2},\dots,z_{i,L-1},z_g
\]
is a valid ordered curriculum of length $L$ for $\mind_i$: each signal
becomes parseable when its predecessor on the private chain has been
acquired, and the final signal acquires $g$.

\medskip
\noindent\textit{(ii) Broadcast lower bound.}
Consider any common broadcast sequence $\Gamma=(z_1,\dots,z_T)$ that acquires
$g$ for every mind. Fix $i$. Before mind $\mind_i$ can acquire $g$, it
must first acquire all $L-1$ private prerequisite concepts
\[
p_{i,1},\dots,p_{i,L-1}.
\]
Moreover, if $i'\neq i$, then none of the concepts $p_{i,j}$ lies in
$\mc U_{\mind_{i'}}$. Hence a broadcast signal targeting $p_{i,j}$ can
help at most mind $\mind_i$; it produces no acquisition for any other mind.

It follows that at least $L-1$ rounds must be devoted to the private
prerequisites of each mind $i$. Summing over $i=1,\dots,k$, at least $
k(L-1)$
rounds are required to make all minds ready for a signal targeting $g$.

Finally, one additional round targeting $g$ is necessary, since $g\notin
\Acal$ and is acquired only when a signal with target $g$ is parseable.
Hence
\[
T\ge k(L-1)+1.
\]

\medskip
\noindent\textit{(iii) Tightness.}
Consider the broadcast sequence
\[
z_{1,1},\dots,z_{1,L-1},
z_{2,1},\dots,z_{2,L-1},
\dots,
z_{k,1},\dots,z_{k,L-1},
z_g.
\]
During the block $z_{i,1},\dots,z_{i,L-1}$, only mind $\mind_i$ advances;
all other minds ignore those signals. After the first $k(L-1)$ rounds, each
mind $\mind_i$ has acquired $p_{i,L-1}$. The final signal $z_g$ is
therefore parseable for every mind, so all of them acquire $g$ on the last
round. Thus the lower bound is attained.
\qed

\section{Additional results}
This appendix collects supplementary results that are invoked in the proofs above but are not essential to the main narrative. It also records additional consequences of the framework that may be of independent interest.

\begin{corollary}[Not every union-closed family above the axioms is a learning space]\label{cor:not-every-knowledge-space}
The class of $\Acal$-based learning spaces on $\mc U$ is a strict subclass of the class of union-closed families $
\mathbb F\subseteq 2^{\mc U}$.
\end{corollary}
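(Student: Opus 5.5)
The claim has two parts: every $\Acal$-based learning space is union-closed, and some union-closed family above the axioms is not one. Containment is immediate from \Cref{def:A-based-learning-space}(iii), so the work is in exhibiting a separating example. By \Cref{thm:representation-learning-space}, that example is equally a union-closed family that is not the reachable family $\mathbb K_\mind$ of any mind with axiom set $\Acal$.

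I will take the smallest example that breaks accessibility, which the remark after \Cref{thm:representation-learning-space} already flags as the missing ingredient. Let $\mc U=\{a,b,c\}$ and $\Acal=\{a\}$, and set
\[
\mathbb F=\bigl\{\{a\},\{a,b,c\}\bigr\}.
\]
The verification has three steps.

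\begin{enumerate}[nosep,label=\textnormal{(\arabic*)}]
\item $\mathbb F$ is union-closed: the only nontrivial union is $\{a\}\cup\{a,b,c\}=\{a,b,c\}\in\mathbb F$.
\item Every member of $\mathbb F$ contains $\Acal$, and $\Acal\in\mathbb F$, so conditions (i) and (iii) of \Cref{def:A-based-learning-space} hold.
\item Accessibility, condition (ii), fails at $\mc K=\{a,b,c\}$. The only candidates $x\in\mc K\setminus\Acal$ are $b$ and $c$, and removing them gives $\{a,c\}$ and $\{a,b\}$, neither of which lies in $\mathbb F$.
\end{enumerate}

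Hence $\mathbb F$ is union-closed but not an $\Acal$-based learning space. This shows the inclusion is strict, and by \Cref{thm:representation-learning-space} no mind realizes $\mathbb F$ as its reachable family.

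There is no real obstacle here. The only point needing care is to state the two classes over the same ground set $\mc U$ with the same axiom set, so that the comparison is meaningful. If one wants the example to also contain $\varnothing$, the same construction works with $\Acal=\varnothing$ and $\mathbb F=\{\varnothing,\{b,c\}\}$.
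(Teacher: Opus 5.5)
Your proof is correct and follows the paper's argument. The paper uses the same minimal counterexample, $\Acal=\varnothing$, $\mc U=\{a,b\}$, $\mathbb F=\{\varnothing,\{a,b\}\}$, which is your closing variant up to relabeling, and your $\Acal=\{a\}$ version is that same two-set chain shifted by a nonempty axiom core.
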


\begin{proof}
Every $\Acal$-based learning space is, by definition, union-closed and lies above $\Acal$, so only strictness needs to be shown. Let
\[
\mc U=\{a,b\},
\qquad
\Acal=\varnothing,
\qquad
\mathbb F=\{\varnothing,\{a,b\}\}.
\]
Then $\mathbb F$ is union-closed and contains $\Acal$. However, it fails accessibility, since neither
\[
\{a,b\}\setminus\{a\}=\{b\}
\qquad\text{nor}\qquad
\{a,b\}\setminus\{b\}=\{a\}
\]
belongs to $\mathbb F$. Hence $\mathbb F$ is not an $\Acal$-based learning space.
\end{proof}
\begin{lemma}[Prefix closure of reachable acquired concept sets]\label{lem:prefix}
If $\mc K\in \mathbb K_\mind$ and $
\Acal_\mind=\mc K_0 \subset \mc K_1 \subset \cdots \subset \mc K_L = \mc K$
is a witnessing chain, then every intermediate set $\mc K_i$ also belongs to $\mathbb K_\mind$.
\end{lemma}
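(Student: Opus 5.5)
The plan is to show each $\mc K_i$ is reachable by exhibiting a witnessing chain for it directly: truncate the given chain at index $i$. I would fix $i\in\{0,\dots,L\}$ and consider the prefix
\[
\Acal_\mind=\mc K_0\subset \mc K_1\subset\cdots\subset \mc K_i .
\]
Then I would check that this prefix meets the requirements of \Cref{def:reachable}. The one-step conditions $\mc K_{j+1}=\mc K_j\cup\{c_j\}$ with $c_j\in\Phi_\mind(\mc K_j)\setminus\mc K_j$ hold for $j=0,\dots,i-1$, because they are a subset of the conditions already satisfied by the full witnessing chain. For $i=0$ the prefix is the trivial chain of length zero.

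The only remaining requirement is that $\mc K_i\subseteq\mc U_\mind$, since \Cref{def:reachable} only defines reachability for subsets of the understanding horizon. This follows directly from $\mc K_i\subseteq\mc K_L=\mc K\subseteq\mc U_\mind$, where the last inclusion holds because $\mc K\in\mathbb K_\mind$. So $\mc K_i$ is a subset of $\mc U_\mind$ with a witnessing chain, hence $\mc K_i\in\mathbb K_\mind$.

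There is no real obstacle. The only point to be careful about is this horizon condition, which must be verified separately from the chain property. Alternatively, one could derive it by induction using monotonicity of $\Phi_\mind$ and $\Phi_\mind(\mc U_\mind)=\mc U_\mind$, as in the proof of \Cref{prop:unteachable}, but the inclusion in $\mc K$ already suffices.
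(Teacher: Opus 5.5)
Your proposal is correct and matches the paper's proof, which likewise obtains a witnessing chain for $\mc K_i$ by truncating the given chain at step $i$. You also check explicitly that $\mc K_i\subseteq\mc K\subseteq\mc U_\mind$; the paper leaves this step implicit, and your argument for it is correct.
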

\begin{proof}
Each $\mc K_i$ is reachable from $\Acal_\mind$ by truncating the witnessing chain at step $i$.
\end{proof}

\begin{proposition}[Parseability preserves information]\label{prop:parseable-positive}
Let
\[
U_{t+1}=\{\tgt(Z_{t+1})\notin \Phi_\mind(\mc K_t)\}.
\]
Then
\[
\mathds I(\Theta;Y_{t+1}\mid \Fcal_t,U_{t+1}^c)
=
\mathds I(\Theta;Z_{t+1}\mid \Fcal_t,U_{t+1}^c).
\]
In particular, if the right-hand side is strictly positive, then so is the left-hand side.
\end{proposition}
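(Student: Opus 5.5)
The plan is to exploit the fact that, on the event $U_{t+1}^c$, the parsing map acts as the identity, so that $Y_{t+1}$ and $Z_{t+1}$ coincide there. By \Cref{def:parsing}, on $U_{t+1}^c=\{\tgt(Z_{t+1})\in\Phi_\mind(\mc K_t)\}$ we have $Y_{t+1}=\rho_\mind(Z_{t+1},\mc K_t)=Z_{t+1}$ pointwise.

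The argument then works with the conditional law $\mathbb P(\,\cdot\mid \Fcal_t,U_{t+1}^c)$, defined on the $\Fcal_t$-atoms where $\mathbb P(U_{t+1}^c\mid\Fcal_t)>0$; on the remaining atoms both sides are $0$ by the stated convention. Since $\Fcal_t$ is generated by finitely valued variables, it is atomic. I therefore fix an atom $h_t$ of positive probability, together with the probability measure $Q=\mathbb P(\,\cdot\mid (Y_1,\dots,Y_t)=h_t,\,U_{t+1}^c)$. Under $Q$ the pair $(\Theta,Y_{t+1})$ equals the pair $(\Theta,Z_{t+1})$ almost surely, so the two pairs have identical joint laws. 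Mutual information is a functional of the joint law alone, so $\mathds I_Q(\Theta;Y_{t+1})=\mathds I_Q(\Theta;Z_{t+1})$. Reassembling over atoms gives the identity.

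The only point needing care is to check that the paper's notation $\mathds I(\Theta;Y_{t+1}\mid\Fcal_t,U_{t+1}^c)$ means exactly this atomwise mutual information under $Q$. With the pathwise-entropy definitions of \Cref{sec:epistemic-arrow}, conditioning jointly on $\Fcal_t$ and the event $U_{t+1}^c$ (note that $U_{t+1}^c$ is not $\Fcal_t$-measurable, since it depends on $Z_{t+1}$) is naturally read as conditioning on the $\sigma$-field generated by $\Fcal_t$ together with the event, restricted to $U_{t+1}^c$. Either way, every quantity involved is a functional of the joint conditional law of $(\Theta,\text{observation})$, and that law is the same for $Y_{t+1}$ and $Z_{t+1}$. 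The positivity claim is then immediate from the equality.
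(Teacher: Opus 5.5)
Your proposal is correct and follows the same route as the paper. On $U_{t+1}^c$ the parser acts as the identity, so $Y_{t+1}=Z_{t+1}$ there and the two conditional mutual informations coincide. Your atomwise formalization makes explicit what the paper dismisses as immediate: you pass to the conditional measure $Q$ on each $\Fcal_t$-atom (where $\mc K_t$ is fixed), and you handle the atoms with $\mathbb P(U_{t+1}^c\mid\Fcal_t)=0$ by the stated convention.
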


\begin{proof}
On $U_{t+1}^c$, the parser acts as the identity, so
\[
Y_{t+1}=Z_{t+1}
\qquad\text{almost surely.}
\]
The identity of the conditional mutual informations follows immediately.
\end{proof}

\begin{corollary}[Unlimited rephrasing can be useless under sharp parsing]\label{cor:rephrasing}
Fix time $t$ and a mind $\mind$. Let $
U_t(c)=\{c\notin \Phi_\mind(\mc K_t)\}$.
Let $(Z^{(j)}_{t+1})_{j\ge 1}$ be any family of $\mc Z$-valued random variables such that
\[
\tgt(Z^{(j)}_{t+1})=c
\qquad\text{almost surely for every }j\ge 1,
\]
and define $
Y^{(j)}_{t+1}=\rho_\mind(Z^{(j)}_{t+1},\mc K_t)$.
Then for every $j\ge 1$, $
\mathds I(\Theta;Y^{(j)}_{t+1}\mid \Fcal_t)=0$ \text{almost surely on} $U_t(c)$.
\end{corollary}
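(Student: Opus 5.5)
The plan is to reduce the statement to the erasure lemma, \Cref{lem:unparseability-erasure}, applied to each rephrasing $Z^{(j)}_{t+1}$ in turn. Fix $j\ge 1$. First observe that the event $U_t(c)=\{c\notin\Phi_\mind(\mc K_t)\}$ is $\Fcal_t$-measurable. Indeed, $\mc K_t$ is determined by the parsed history $Y_1,\dots,Y_t$ through the update rule of \Cref{def:knowledge-update}, starting from the deterministic $\mc K_0=\Acal_\mind$. Hence $\Phi_\mind(\mc K_t)$, and with it the indicator of $U_t(c)$, is a function of $(Y_1,\dots,Y_t)$. This measurability is the point that lets the pathwise statement ``almost surely on $U_t(c)$'' make sense: conditioning on $\Fcal_t$ already fixes whether $c$ is ordered.

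Next, on $U_t(c)$ we use the hypothesis $\tgt(Z^{(j)}_{t+1})=c$ a.s. to conclude that $\tgt(Z^{(j)}_{t+1})\notin\Phi_\mind(\mc K_t)$ almost surely there. By \Cref{def:parsing}(ii), this gives $Y^{(j)}_{t+1}=\rho_\mind(Z^{(j)}_{t+1},\mc K_t)=\bot$ almost surely on $U_t(c)$. Writing $U^{(j)}_{t+1}=\{\tgt(Z^{(j)}_{t+1})\notin\Phi_\mind(\mc K_t)\}$, we obtain $U_t(c)\subseteq U^{(j)}_{t+1}$ up to null sets. Because $U_t(c)\in\Fcal_t$, this yields $\mathbb P(U^{(j)}_{t+1}\mid\Fcal_t)=1$ almost surely on $U_t(c)$. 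In other words, the conditional law of $Y^{(j)}_{t+1}$ given $\Fcal_t$ is the point mass $\delta_\bot$ on that event.

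Finally, we conclude as in \Cref{lem:unparseability-erasure}(ii), localized to $U_t(c)$. On this event the pathwise conditional entropy satisfies $\mathds H(Y^{(j)}_{t+1}\mid\Fcal_t)=0$, and the bound $\mathds I(\Theta;Y^{(j)}_{t+1}\mid\Fcal_t)\le\mathds H(Y^{(j)}_{t+1}\mid\Fcal_t)$ (used in the proof of \Cref{prop:state-info-bound}) then gives the claim. Alternatively, one can argue directly: $\mathds H(\Theta\mid\Fcal_t\vee\sigma(Y^{(j)}_{t+1}))=\mathds H(\Theta\mid\Fcal_t)$ on $U_t(c)$, because adjoining an a.s.\ constant variable does not change conditional probabilities there. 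Since $j$ was arbitrary, the statement holds for every $j$ simultaneously, and the countable family of null sets is harmless.

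The only delicate step is this localization. The paper's lemma is stated under the global condition $\mathbb P(U_{t+1}\mid\Fcal_t)=1$, whereas here we need it eventwise on $U_t(c)$. I would handle this by multiplying by $\mathbf 1_{U_t(c)}$ and using the $\Fcal_t$-measurability of $U_t(c)$ to pull the indicator inside and outside conditional expectations. This makes the conditional mutual information vanish pointwise on that event.
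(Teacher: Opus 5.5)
Your proposal is correct and follows essentially the same argument as the paper: on $U_t(c)$ every rephrasing parses to $\bot$, so $Y^{(j)}_{t+1}$ is conditionally constant given $\Fcal_t$, and the conditional mutual information vanishes. Your explicit check that $U_t(c)\in\Fcal_t$ (because $\mc K_t$ is a deterministic function of $Y_1,\dots,Y_t$) and your localization via $\mathbf 1_{U_t(c)}$ simply make rigorous a step the paper uses implicitly.
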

\begin{proof}
On $U_t(c)$, the targeted concept is not ordered, so $
Y^{(j)}_{t+1}=\bot$ \text{almost surely.}
Hence $Y^{(j)}_{t+1}$ is conditionally constant given $\Fcal_t$, so the conditional mutual information is zero.
\end{proof}

\bibliographystyle{plainnat}
\bibliography{main}

\end{document}